\documentclass[11pt]{article} 

\usepackage{url}
\usepackage{epsf,epstopdf}
\usepackage{comment}
\usepackage{times}
\usepackage{amsxtra,amscd,bm}
\usepackage{enumitem}
\setlist[itemize]{leftmargin=0mm}
\usepackage{microtype}
\usepackage{graphicx}
\usepackage{subcaption}
\usepackage{amsmath,amsbsy,amsfonts,amssymb,amsthm,bm}
\usepackage{algorithm,algorithmic,mathtools}
\usepackage{color,cases,multirow}
\usepackage[margin=1.2in]{geometry} 
\usepackage[colorlinks=True, citecolor=blue]{hyperref}
\usepackage{todonotes}
\usepackage{authblk}

\usepackage{amsmath,amsbsy,amsfonts,amssymb,amsthm,bm,color}
\newtheorem{theorem}{Theorem}
\newtheorem{remark}{Remark}
\newtheorem{proposition}[theorem]{Proposition}
\newtheorem{lemma}{Lemma}

\newtheorem{corollary}{Corollary}
\theoremstyle{definition}
\newtheorem{definition}{Definition}

\newcommand{\cH}{\mathcal{H}}
\newcommand{\cP}{\mathcal{P}}
\newcommand{\tP}{\tilde{\mathcal{P}}}
\newcommand{\cF}{\mathcal{F}}

\newcommand{\cB}{\mathcal{B}}

\newcommand{\cC}{\mathcal{C}}

\newcommand{\balpha}{\bm{\alpha}}

\newcommand{\bb}{\bm{b}}

\newcommand{\be}{\bm{e}}
\newcommand{\bh}{\bm{h}}
\newcommand{\bg}{\bm{g}}
\newcommand{\bM}{\bm{M}}
\newcommand{\bn}{\bm{n}}

\newcommand{\bx}{\bm{x}}

\newcommand{\bu}{\bm{u}}

\newcommand{\bw}{\bm{w}}
\newcommand{\bxi}{\bm{\xi}}
\newcommand{\rad}{\text{Rad}}

\newcommand{\bR}{\mathbb{R}}
\newcommand{\EE}{\mathbb{E}}

\providecommand{\keywords}[1]{\textbf{\textit{Keywords---}} #1}

\title{Complexity Measures for Neural Networks with General Activation Functions Using Path-based Norms}

\author[1]{Zhong Li \thanks{\texttt{li\_zhong@pku.edu.cn}}}
\author[2]{Chao Ma \thanks{\texttt{chaoma@stanford.edu}}}
\author[3]{Lei Wu \thanks{\texttt{leiwu@princeton.edu}}}

\affil[1]{School of Mathematical Sciences, Peking University}
\affil[2]{Department of Mathematics, Stanford University}
\affil[3]{Program in Applied and Computational Mathematics, Princeton University}

\begin{document}

\maketitle

\begin{abstract}
A simple approach is proposed to  obtain complexity controls for neural networks with general activation functions. The approach is motivated by approximating the general activation functions  with  one-dimensional ReLU networks, which reduces the problem to the  complexity controls of ReLU networks.
Specifically, we consider two-layer networks and deep residual networks, for which path-based norms are derived to control complexities. We also provide preliminary analyses of the function spaces induced by these norms and a priori estimates of the corresponding regularized estimators. 
\end{abstract}

\keywords{
  Neural network, Activation function, Complexity control, A priori estimate, Generalization error
}

\section{Introduction}

Norm-based complexity measures have played an essential role in analyzing the generalization ability of machine learning models. For  example, a variety of norms have been explored for linear models, such as the $\ell_1$ norm for LASSO and $\ell_2$ norm for ridge regression. A good choice of the norm can help design good regularization strategies and provide accurate estimate of the model's error on test data. 
In deep learning, norm-based complexity measures are especially crucial, since neural networks used in practice are always over-parameterized, in which case measures depending on the network size inevitably lead to vacuous generalization bounds. In contrast, norm-based measures are able to provide size-independent controls of the model's complexity. 

The work~\cite{neyshabur2015norm} obtained the first norm-based controls for multilayer fully-connected networks with ReLU activation functions via pure Rademacher complexity-based analysis. \cite{ma2019priori} later generalized the result to deep residual networks.  \cite{barron2018approximation} provided a covering number-based approach to obtain similar results for fully connected networks. 
All the complexity measures identified in these work are certain path-based norms. \cite{e2018priori,ma2019priori,e2019barron,ma2019generalization} further developed the approximation and estimation theories for function spaces induced by these norms. However, all these results are limited to activation functions with the positive homogeneity property, such as ReLU and leaky ReLU~\cite{He2015b}. 
In this work, we aim to extend these results to neural networks with general activation functions, since activation functions other than ReLU and leaky-ReLU (e.g. Sigmoid, Tanh) are also widely used in practice.

% As comparison,  \cite{bartlett2017nips, pmlr-v75-golowich18a}  considered complexity controls for neural networks with general activation functions, which only utilize the Lipschitz property of activation functions. However the complexity measures identified are  in the form of $\prod \|W_i\|$, i.e. the product of the norm of each layer's weight matrix, which is usually much worse than $\|\prod |W_i|\|$, i.e. the path-based norms for ReLU networks.  

Our approach to deal with general activation functions can be decomposed into two steps. Firstly, we approximate the  activation function by a two-layer ReLU network.  Thus the original  network is converted to a ReLU network but with more parameters. Secondly, we apply the path-norm complexity control to the induced ReLU networks, then we obtain a new path-based norm for the original networks. This norm can be used to control the complexity of the original networks.

\subsection{Our contribution}

We explicitly write down the path-based norms for two-layer neural networks and deep residual networks. Moreover,  the estimates of the Rademacher complexity based on these norms are also provided.  These path-based norms in general consist of two terms. The first one is the same as the path norm for the corresponding ReLU networks. The second one is an additional term arising from the approximation of activation functions.  Notably, different from the original path norm which only depends on the full paths, the new path-based norm relies on all the paths with different lengths, including the short ones starting from intermediate layers. 

% Moreover, these modified path-based norms are equivalent to the original path-based norm if the activation function is ReLU.

We also provide some  results for  the a priori estimates for the generalization error of the regularized estimator
    \[
        \hat{\theta}_n := \text{argmin}_{\theta}\, \hat{R}_n(\theta) + \lambda \|\theta\|,
    \]
where $\hat{R}_n$ is the empirical risk and $\|\theta\|$ is the norm acting as the complexity measure. These results can be viewed as the extension of the previous work on function spaces and a priori estimates of ReLU neural networks~\cite{e2018priori,ma2019priori,e2019barron,ma2019generalization}.

Beside the main results, we also provide the characterization of the approximation ability of one-dimensional ReLU networks with finite path norm, which may be of independent interest to readers. 
Roughly speaking, we show that an one-dimensional function $f$ can be approximated in $L^{\infty}(\bR)$ by two-layer ReLU networks with bounded path norm as long as  
\begin{equation}\label{gammadef}
\gamma_0(f) := \int_\bR |f''(x)|(|x|+1)dx < +\infty.
\end{equation}
Moreover, the ReLU network's path norm can be roughly bounded by $\gamma_0(f)$ (see Theorem \ref{1DApp} for details).  A similar characterization appeared in  \cite{savarese19a}, but neglected the bias terms.

\subsection{Related work}

\cite{bartlett2017nips,pmlr-v75-golowich18a,neyshabur2018a} also provided complexity controls for neural networks with general activation functions, but in a layer-wise fashion. For example, \cite{bartlett2017nips} considered the fully connected network $f(\bx;\theta)=A_L\sigma(A_{L-1}\sigma(\dots \sigma(A_1 \bx)))$ controlled in the following way
\[
    \cH_{\bm{s},\bb} :=\{f(\cdot;\theta): \theta=(A_1,\dots,A_L), \|A_i\|_{2}\leq s_i, \|A^T_i-M_i^T\|_{2,1}\leq b_i\},
\]
where $\{M_i\}$ are fixed reference matrices, and  $\bm{s}=(s_1,\dots,s_L), \bb=(b_1,\dots,b_L)$. In the context of norm-based measures, the layer-wise control inevitably leads to measures in terms of the product of norms of weight matrices of each layer  $\prod \|A_i\|$.  
However, we are concerning a global control of hypothesis space
\[
\cF_C := \{f(\cdot;\theta): \|\theta\|\leq C\},
\]
where $\|\theta\|$ is a  norm of $\theta$.  \cite{neyshabur2015norm,barron2018approximation,ma2019priori} showed that for positive homogeneous networks, this type of control can produce path-based complexity measures, which usually appear in the form of $\|\prod |A_i|\|$, where $|A_i|$ denotes the matrix obtained by taking entry-wise absolute values for $A_i$.  Usually this type of norms are better than $\prod \|A_i\|$ \cite{theisen2019global}.
In this paper, we extend this type of results to the case of general activation functions.

Another closely related work is \cite{jason2019general}, which also obtained the path-based complexity measures for neural networks with general activation functions. The distinguishes between our work and theirs are given as follows. \cite{jason2019general} considered fully connected networks (including two-layer networks), while we considered two-layer networks and deep residual networks. The limits of two-layer and deep residual networks as network size goes to infinity are well-defined \cite{e2019barron}, so we also study the approximation function spaces induced by the path-based norms.  In contrast, the limit of multilayer fully connected networks are still unclear now \cite{nguyen2019mean,sirignano2019deep}. Moreover, the techniques used are different. \cite{jason2019general} used the sampling-based approach to obtain the bounds of covering numbers, which cannot be directly applied to neural networks with skip-connections \cite{theisen2019global}. However, our approach is more general, which can be applied to any architectures as long as the complexity controls of corresponding ReLU networks exist. 

It is worth mentioning that the idea of approximating activation functions with a small network was also exploited in the literature on network in network \cite{lin2013NIN}, which proposed to replace the simple activation function with a complex ``micro network'' to enhance model's expressivity.

\paragraph{Notation.} 
Throughout this paper, let $[n]=\{1,2,\dots,n\}$, if $n$ is a positive integer. We use $\|\cdot\|_2$ and $\|\cdot\|_F$ to denote the $\ell_2$ and Frobenius norms for matrices, respectively.  We let $\mathbb{S}^{d-1}=\{\bx\,:\,\|\bx\|=1\}$.  We use $X\lesssim Y$ to indicate that there exists an absolute constant $C_0>0$ such that $X\leq C_0Y$, and  $X\gtrsim Y$ is similarly defined. For any matrix $A=(a_{i,j})$, denote by $|A|=(|a_{i,j}|)$. For any $\bx\in\bR^d$, let $\tilde{\bx} = (\bx^T,1)^T\in\bR^{d+1}$.

\section{Preliminaries}

In this work, we consider the standard supervised learning setup. Let $S=\{(\bx_i,y_i)\}_{i=1}^n$ denote $n$ samples with $y_i=f^*(\bx_i)+\varepsilon_i$. We always assume that $\bx\in X:=[-1,1]^d$ and $f^*(\bx)\in [0,1]$. The noise $\{\varepsilon_i\}$ are i.i.d. random variables that satisfy $\EE[\varepsilon_i]=0$ and $\EE[\varepsilon_i^2]<\infty$.

Let $f(\bx;\theta)$ denote the parametric model. Consider the truncated square loss
\begin{align}
   \ell(\bx,y;\theta)=\frac{1}{2}\left(\mathcal{T}_{[0,1]}f(\bx;\theta)-y\right)^2,
\end{align}
where $\mathcal{T}_{[0,1]}$ is the truncation operator such that $\mathcal{T}_{[0,1]}g(\bx)=\min\{\max\{g(\bx),0\},1\}$ for any function $g: \bR^d\mapsto \bR$.
Then the population risk and empirical risk are defined as 
\begin{align}
   R(\theta)=\mathbb{E}_{\bx,y} [\ell(\bx,y;\theta)], \ \quad \hat{R}_n(\theta)=\frac{1}{n}\sum_{i=1}^n \ell(\bx_i,y_i;\theta).
\end{align}
The difference between two risks is called the generalization gap. 

For a function class $\cF$, the (empirical) Rademacher complexity \cite{shalev2014understanding} with respect to the data set $S$ is defined as 
\begin{equation}
\rad_n(\mathcal{F})=\frac{1}{n}\mathbb{E}_{\bxi}[\sup_{f\in\mathcal{F}}\sum_{i=1}^n\xi_if(\bx_i)],
\end{equation}
where the $\{\xi_i\}_{i=1}^n$ are independent random variables with $\mathbb{P}(\xi_i=1)=\mathbb{P}(\xi_i=-1)=1/2$.

A two-layer neural network is given by 
\begin{equation}\label{eqn: two-layer-net}
f_m(x;\theta)=\sum\limits_{k=1}^m a_k\sigma(\bb_k^T \bx+c_k),
\end{equation}
where $\sigma: \bR\mapsto\bR$ is a nonlinear activation function, and $\theta=\{(a_k,\bb_k,c_k)\}_{k=1}^m$ denote the parameters to be learned from the training data. For the ReLU networks, i.e. $\sigma(t)=\sigma_R(t):=\max(0,t)$, we define its path norm \cite{neyshabur2015norm} by 
\begin{equation}\label{CmPthNrm2NN}
    \|\theta\|_{\cP} := \sum_{k=1}^m |a_k|(\|\bb_k\|_1+|c_k|).
\end{equation} 
Throughout this paper, we will use $\sigma_R$ to denote ReLU function.

\section{Approximating one-dimensional functions by two-layer ReLU networks}\label{sec: one-dimensional}

We begin with the characterization of one-dimensional functions that can be approximated by two-layer ReLU networks with bounded path norms. This result will serve as the cornerstone for the  analysis of neural networks with general activation functions in the following sections.

\begin{theorem}\label{1DApp}
Consider the function $f: \bR\mapsto\bR$. Assume that 
$f(x)$ is continuous and twice weakly differentiable on $\bR$, and $f''(x)$ is locally Riemann integrable on $\bR$.\footnote{For any function $h(x)$ defined on $\bR$, $h$ is locally Riemann integrable means that $h$ is Riemann integrable on any compact subset of $\bR$.} Define
\begin{align}
 \gamma_0(f) &= \int_\bR |f''(x)|(|x|+1)dx, \label{gam0} \\
 g(x)&=|f(x)|+(|x|+2)|f'(x)|,\label{g01}
\end{align}
and
\begin{equation}\label{gam}
\gamma(f) = \gamma_0(f)+\inf_{x\in \bR}g(x).
\end{equation}
If $\gamma(f)<+\infty$, then for any $\epsilon>0$, there exists a two-layer ReLU neural network $f_m(\cdot;\theta)$ of width $m<+\infty$, such that
\begin{align}
\sup_{x\in\bR}|f(x)-f_m(x;\theta)| &\le \epsilon,\\
\|\theta\|_{\cP}&\leq\gamma(f)+\epsilon.
\end{align}
\end{theorem}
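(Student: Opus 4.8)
\emph{Proof strategy.} The plan is to build the network from the exact ``spline'' representation of $f$ in terms of shifted ReLU's, and then discretize it to a finite width while keeping track of the path norm.

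\textbf{Step 1 (exact representation).} Fix a base point $x_0\in\bR$, to be optimized at the very end. Since $f$ is twice weakly differentiable with $f''$ locally Riemann integrable, Taylor's formula with integral remainder gives $f(x)=f(x_0)+f'(x_0)(x-x_0)+\int_{x_0}^x(x-t)f''(t)\,dt$. Using $(x-t)\mathbf{1}_{\{t\le x\}}=\sigma_R(x-t)$ and $(t-x)\mathbf{1}_{\{t\ge x\}}=\sigma_R(t-x)$, and noting that $\int_\bR|f''|\le\gamma_0(f)<\infty$ so that $f'(\pm\infty)$ exist and all integrals below converge, one rewrites this, valid for every $x\in\bR$, as
\[
 f(x)=\beta_0+\alpha_0 x+\int_{x_0}^{\infty}\sigma_R(x-t)f''(t)\,dt+\int_{-\infty}^{x_0}\sigma_R(t-x)f''(t)\,dt,
\]
where $\alpha_0=f'(x_0)$ and $\beta_0=f(x_0)-x_0 f'(x_0)$. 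The constant $\beta_0$ will be realized by the degenerate neuron $\beta_0\,\sigma_R(0\cdot x+1)$ (path-norm cost $|\beta_0|$), the linear term by $\alpha_0\sigma_R(x)-\alpha_0\sigma_R(-x)$ (cost $2|\alpha_0|$), and each integral will become a Riemann sum of neurons $\sigma_R(\pm(x-t_j))$.

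\textbf{Step 2 (the main obstacle: the tails).} One cannot naively truncate the integrals to a compact set: deleting $f''$ on $\{|t|>R\}$ perturbs the asymptotic slope of $f$ at $\pm\infty$ by $\int_{|t|>R}f''$, producing an error that grows \emph{linearly} in $x$ and hence is not uniformly small. The remedy is to reinstate the deleted tail curvature as two point masses placed at the cut points: with $c_+=\int_R^\infty f''$ and $c_-=\int_{-\infty}^{-R}f''$, set
\[
 \hat f(x)=\beta_0+\alpha_0 x+\int_{x_0}^{R}\sigma_R(x-t)f''(t)\,dt+\int_{-R}^{x_0}\sigma_R(t-x)f''(t)\,dt+c_+\sigma_R(x-R)+c_-\sigma_R(-x-R)
\]
(taking $R$ large enough that $x_0\in[-R,R]$). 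I will verify that $\hat f\equiv f$ on $[-R,R]$, and that for $x>R$ one has $\hat f(x)-f(x)=\int_R^x(t-R)f''(t)\,dt+(x-R)\int_x^\infty f''(t)\,dt$, whose absolute value is at most $2\int_{|t|>R}(|t|+1)|f''(t)|\,dt$ (symmetrically for $x<-R$); since $\gamma_0(f)<\infty$, choosing $R$ large makes $\sup_{x\in\bR}|\hat f-f|\le\epsilon/2$.

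\textbf{Step 3 (discretization and path-norm accounting).} The only non-polyhedral parts of $\hat f$ are the two integrals over $[x_0,R]$ and $[-R,x_0]$; replace each by a Riemann sum over a partition $\{I_j\}$ with nodes $t_j\in I_j$, using coefficients $\int_{I_j}f''$. Because $t\mapsto\sigma_R(x-t)$ is $1$-Lipschitz uniformly in $x$, the discretization error is $\le(\max_j|I_j|)\int|f''|$, uniform in $x$, so a fine mesh keeps $\sup_x|f-f_m|\le\epsilon$. For the path norm: the two correction neurons cost $(|c_+|+|c_-|)(1+R)\le\int_{|t|>R}(|t|+1)|f''(t)|\,dt$ (negligible), the Riemann-sum neurons cost $\sum_j\big|\int_{I_j}f''\big|(1+|t_j|)\le\int_{-R}^{R}(|t|+1)|f''(t)|\,dt+o(1)\le\gamma_0(f)+o(1)$, and the affine part costs $2|f'(x_0)|+|f(x_0)-x_0f'(x_0)|$. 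By the triangle inequality $|f(x_0)-x_0f'(x_0)|\le|f(x_0)|+|x_0|\,|f'(x_0)|$, whence
\[
 2|f'(x_0)|+|f(x_0)-x_0f'(x_0)|\ \le\ |f(x_0)|+(|x_0|+2)|f'(x_0)|\ =\ g(x_0),
\]
which is precisely the extra term in $\gamma(f)$. Choosing the base point $x_0$ with $g(x_0)\le\inf_x g(x)+\epsilon$ then gives $\|\theta\|_{\cP}\le\gamma_0(f)+\inf_x g(x)+O(\epsilon)=\gamma(f)+O(\epsilon)$, and rescaling $\epsilon$ at the start finishes the proof. The one genuinely delicate point is Step 2 — matching the behaviour of $f$ at $\pm\infty$ without paying more than $o(1)$ in the path norm; the remaining estimates (the triangle inequality that recovers exactly $g(x_0)$, and the uniform convergence of the Riemann sums) are elementary.
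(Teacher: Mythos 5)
Your proposal is correct, and it reaches the result by a genuinely different construction than the paper's. The paper also anchors everything at a near-minimizer $x_\epsilon$ of $g$, subtracts the tangent line there, and pays $g(x_\epsilon)$ for the affine part exactly as you do; but it then builds the ReLU superposition as a \emph{piecewise-linear interpolant} of $f_\epsilon$ on $[x_\epsilon-T,x_\epsilon+T]$, with neuron coefficients given by second-order finite differences $\Delta_{i+1}-\Delta_i$, and the asymptotic slope grafted on as the last coefficient. The price of that route is a separate, fairly long supplemental argument (Appendix \ref{sec:IntLim}) showing that $\sum_i|\Delta_{i+1}-\Delta_i|(|x_i|+1)\to\int|f''(x)|(|x|+1)\,dx$, plus an appeal to Lemma \ref{Asymptote} to control the interpolant beyond the truncation window. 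Your route instead starts from the exact Green's-function identity $f(x)=\beta_0+\alpha_0x+\int_{x_0}^{\infty}\sigma_R(x-t)f''(t)\,dt+\int_{-\infty}^{x_0}\sigma_R(t-x)f''(t)\,dt$ and discretizes it with coefficients $\int_{I_j}f''$, so the path-norm bound $\sum_j|\int_{I_j}f''|(1+|t_j|)\le\gamma_0(f)+o(1)$ is an immediate triangle inequality rather than a delicate limit, and the uniform approximation error is a one-line Lipschitz estimate; your tail correction neurons $c_\pm\sigma_R(\pm x-R)$ restore the asymptotic slope explicitly, with the quantitative bound $2\int_{|t|>R}(|t|+1)|f''|$ replacing the qualitative asymptote lemma. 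What each approach buys: yours is shorter and self-contained (it needs only the weak Newton--Leibniz formula, i.e.\ Corollary \ref{cly:2}, to justify Taylor's formula with integral remainder — a point worth stating explicitly since $f''$ is only a weak derivative); the paper's interpolation construction is more constructive about where the knots go but carries the extra Riemann-sum convergence machinery as overhead. No gap in your argument.
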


Here, the derivatives $f'$ and $f''$ should be understood in the weak sense. The complete proof is deferred to Appendix \ref{sec: 1d-approx}. Notice that for any $h(x)=a+bx$ with $a,b\in\bR$, we have  $\gamma_0(f+h) = \gamma_0(f)$. It implies that adding a linear part does not change the value of $\gamma_0(\cdot)$, but the two-layer neural networks used for approximation must change accordingly. 
The extra term $\inf_{x\in \bR}g(x)$ is introduced to account for the linear part of $f$.

Theorem \ref{1DApp} implies that the ``norm'' $\gamma(\cdot)$ is a good measure to characterize whether an activation function can be approximated by two-layer ReLU network with bounded path norm.
As a comparison, \cite{savarese19a} provided a similar characterization as follows 
\begin{equation}\label{eqn: one-relu-old}
    \max\left\{\int_{\bR} |f''(x)| dx, |f'(+\infty)+f'(-\infty)|\right\} < +\infty
\end{equation}
for two-layer ReLU networks with  $\sum_{k=1}^m |a_k||b_k|$ bounded. Apparently, the condition \eqref{eqn: one-relu-old} is weaker than $\gamma(f)<+\infty$. But it neglects the influence of the bias term $c_k$, which is crucial for providing complexity control.

To prove Theorem $\ref{1DApp}$, we need the following lemma, whose proof can be found in Appendix \ref{sec:prf-Asymptote}.
\begin{lemma}\label{Asymptote}
Let $f$ satisfies $\gamma_0(f)<+\infty$, then there exist constants $a$, $b$, $c$ and $d$, such that
\begin{equation}
\lim_{x\rightarrow-\infty}|f(x)-(ax+b)|=0,\qquad \lim_{x\rightarrow+\infty}|f(x)-(cx+d)|=0.
\end{equation}
\end{lemma}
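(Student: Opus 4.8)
The plan is to read off the slopes of the two asymptotes from $f'$ and then show that after subtracting the corresponding linear functions the remainders have finite limits. First I would note that $\gamma_0(f)<+\infty$ forces $f''\in L^1(\bR)$, since $|f''(x)|\le(|x|+1)|f''(x)|$. Because $f$ is twice weakly differentiable with $f''$ locally Riemann integrable, $f'$ is locally absolutely continuous and $f'(x)=f'(0)+\int_0^x f''(t)\,dt$; letting $x\to\pm\infty$ and using $f''\in L^1(\bR)$, the limits $c:=\lim_{x\to+\infty}f'(x)$ and $a:=\lim_{x\to-\infty}f'(x)$ exist and are finite. These are the slopes of the right and left asymptotes.

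Next I would control the rate at which $f'$ reaches these limits. For $x\ge 0$ one has $f'(x)-c=-\int_x^{+\infty}f''(t)\,dt$, hence $|f'(x)-c|\le\int_x^{+\infty}|f''(t)|\,dt$. Integrating in $x$ and applying Tonelli's theorem to the nonnegative integrand gives
\[
\int_0^{+\infty}|f'(x)-c|\,dx\le\int_0^{+\infty}\!\!\int_x^{+\infty}|f''(t)|\,dt\,dx=\int_0^{+\infty}t\,|f''(t)|\,dt\le\gamma_0(f)<+\infty .
\]
Therefore $x\mapsto\int_0^x\bigl(f'(s)-c\bigr)\,ds$ converges to a finite limit $L_+$ as $x\to+\infty$. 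Since $f(x)=f(0)+cx+\int_0^x\bigl(f'(s)-c\bigr)\,ds$, setting $d:=f(0)+L_+$ yields $\lim_{x\to+\infty}|f(x)-(cx+d)|=0$.

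The argument at $-\infty$ is entirely symmetric: from $|f'(x)-a|\le\int_{-\infty}^x|f''(t)|\,dt$ and Tonelli one gets $\int_{-\infty}^0|f'(x)-a|\,dx\le\int_{-\infty}^0|t|\,|f''(t)|\,dt\le\gamma_0(f)$, so $f(x)-ax$ has a finite limit $b$ as $x\to-\infty$, giving the left asymptote. This produces the four constants $a,b,c,d$ claimed in the statement.

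I do not expect a genuine obstacle; the only points needing care are the justification of the fundamental theorem of calculus for $f'$ in the weak-derivative / locally-Riemann-integrable setting (so that the identities for $f'$ and for $f$ are legitimate) and the interchange of the order of integration, which is licensed because every integrand involved is nonnegative. One should also remark that the base point $0$ is chosen only for convenience — any finite base point works — and that the elementary inequality $t\le|t|+1$ for $t\ge 0$ is exactly what converts the double-integral bound into $\gamma_0(f)$.
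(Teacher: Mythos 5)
Your proposal is correct and follows essentially the same route as the paper's proof: establish $f''\in L^1(\bR)$ to get the limits of $f'$ at $\pm\infty$, then use the weak-form Newton--Leibniz formula together with Fubini/Tonelli to bound $\int_0^{+\infty}|f'(x)-c|\,dx$ by $\int_0^{+\infty}|x|\,|f''(x)|\,dx\le\gamma_0(f)$, which yields the finite limit of $f(x)-cx$. The only cosmetic difference is that you argue directly via absolute convergence of the relevant integrals, whereas the paper phrases both steps as proofs by contradiction (Cauchy criterion for $f'$, then divergence of $\int|f'|$), but the underlying computation is identical.
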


The above lemma suggests that our method can only deal with activation functions that possess linear asymptotes. This property is  surprisingly satisfied  by  all the commonly used activation functions. We calculate the $\gamma(\cdot)$ norms for common activation functions and the results are shown in Table \ref{tab: activations}. We see that all functions considered in Table~\ref{tab: activations} have finite $\gamma(\cdot)$ norms.

\begin{table}[!h]
\renewcommand\arraystretch{1.2}
\centering
\caption{
The $\gamma(\cdot)$ norms of commonly used activation functions. Here ELU is the exponential linear unit \cite{Clevert2015}; LReLU is the leaky ReLU; GELU is the Gaussian Error Linear Unit \cite{hendrycks2016gaussian}. Swish is the activation function discovered by reinforcement learning-based searches \cite{ramachandran2017searching}.
The $\alpha$ is the hyper-parameter which appears in the corresponding activation function. The detail definitions of these activation functions and the calculations of $\gamma(\cdot)$ norm are deferred to  Appendix \ref{sec: cacl-activation}.
}
\label{tab: activations}
\begin{tabular}{c|c|c|c|c|c|c|c|c} 
\hline 
$\sigma$ &ReLU &Sigmoid & Tanh &  ELU & LReLU &  GELU  & Softplus & Swish \\\hline\hline
$\gamma(\sigma)$& 1 & 1.5 & 5 & $3|\alpha|$+1 & $\alpha+1$ & $\approx$ 2.7& $1+2\ln 2$ & $\approx \frac{1.8}{\alpha}+1.4$  \\
\hline 
 \end{tabular}

\end{table}

\section{Two-layer neural networks}\label{sec: two-layer}

Now we utilize Theorem $\ref{1DApp}$ to derive an upper bound for the Rademacher complexity of two-layer neural network with general activation functions. We first need the result for two-layer ReLU networks, whose proof can be found in \cite{neyshabur2015norm} and the appendix of \cite{e2018priori}.
\begin{proposition}\label{pro: two-relu-layer}
Let $f^0_m(\cdot;\theta)$ denote the two-layer ReLU networks. Let $\cF_Q^0 := \{ f^0_m(\cdot;\theta) : \|\theta\|_{\cP}\leq Q, m\in \mathbb{N}_{+}\}$. Then we have 
\begin{equation}\label{eqn: rad-relu}
    \rad_n(\cF_Q^0) \leq 2 Q \sqrt{\frac{2\ln(2d+2)}{n}}.
\end{equation}
\end{proposition}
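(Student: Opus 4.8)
The plan is to homogenize the bias, reduce to a linear class over an $\ell_1$ ball via a contraction argument, and finish with Massart's finite‑maximum lemma; this is the standard route, and I would ultimately defer the precise constant tracking to \cite{neyshabur2015norm} and the appendix of \cite{e2018priori}.

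First I would absorb the bias using the augmented notation $\tilde\bx=(\bx^T,1)^T$: each network in $\cF_Q^0$ can be written as $f^0_m(\bx;\theta)=\sum_{k=1}^m a_k\,\sigma_R(\bu_k^T\tilde\bx)$ with $\bu_k=(\bb_k^T,c_k)^T\in\bR^{d+1}$, and then $\|\theta\|_\cP=\sum_k|a_k|\,\|\bu_k\|_1$. By the positive homogeneity of $\sigma_R$ one may rescale $(a_k,\bu_k)\mapsto(a_k\|\bu_k\|_1,\ \bu_k/\|\bu_k\|_1)$ without changing either $f^0_m$ or $\|\theta\|_\cP$, so every $f\in\cF_Q^0$ lies in $Q$ times the symmetric convex hull of the base class $\mathcal{G}:=\{\bx\mapsto\sigma_R(\bu^T\tilde\bx):\|\bu\|_1=1\}$ (including the zero function). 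Since $\rad_n$ is unchanged under passing to the symmetric convex hull and scales linearly, $\rad_n(\cF_Q^0)\le Q\,\rad_n(\mathcal{G}\cup(-\mathcal{G}))$.

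Next I would apply the Ledoux--Talagrand contraction lemma: $\sigma_R$ is $1$-Lipschitz with $\sigma_R(0)=0$, so it can be peeled off at the cost of a universal constant, reducing the problem to the linear class $\mathcal{L}:=\{\bx\mapsto\bu^T\tilde\bx:\|\bu\|_1\le1\}$. For this class, by Hölder duality $\sup_{\|\bu\|_1\le1}\sum_{i=1}^n\xi_i\,\bu^T\tilde\bx_i=\big\|\sum_{i=1}^n\xi_i\tilde\bx_i\big\|_\infty$, which is the maximum of the $2(d+1)$ zero‑mean sub‑Gaussian sums $\pm\sum_{i=1}^n\xi_i\tilde x_{i,j}$, each with variance proxy $\sum_{i=1}^n\tilde x_{i,j}^2\le n$ since $\bx\in[-1,1]^d$ forces $|\tilde x_{i,j}|\le1$. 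Massart's finite‑maximum lemma then gives $\EE_\xi\big\|\sum_{i=1}^n\xi_i\tilde\bx_i\big\|_\infty\le\sqrt{2n\ln(2d+2)}$; dividing by $n$ and reinstating the factor $Q$ together with the contraction constant yields $\rad_n(\cF_Q^0)\le 2Q\sqrt{2\ln(2d+2)/n}$.

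The whole argument is essentially bookkeeping, so there is no real obstacle; the only delicate point is the leading constant $2$. Depending on which form of the contraction inequality one uses --- the version bounding $\EE\sup_f\sum_i\xi_i f(\bx_i)$ versus the one bounding $\EE\sup_f|\sum_i\xi_i f(\bx_i)|$ --- and on how the extra sign from $\mathcal{G}\cup(-\mathcal{G})$ is absorbed, one picks up precisely the factor of $2$ appearing in \eqref{eqn: rad-relu}. Since every ingredient is classical I would, as the statement already indicates, cite \cite{neyshabur2015norm} and \cite{e2018priori} for the detailed constants rather than reproduce them.
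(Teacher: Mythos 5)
Your argument is correct and is essentially the standard route that the cited references (\cite{neyshabur2015norm} and the appendix of \cite{e2018priori}) follow, which is why the paper states the proposition without reproving it: homogenize the bias via $\tilde\bx$, rescale by positive homogeneity so that $\cF_Q^0/Q$ sits in the symmetric convex hull of $\{\sigma_R(\bu^T\tilde\bx):\|\bu\|_1\le 1\}$, peel off $\sigma_R$ by contraction, and bound the resulting linear class by Massart's lemma over the $2(d+1)$ coordinates. Your accounting of the single factor of $2$ (from the symmetrization $\mathcal G\cup(-\mathcal G)$, or equivalently from the absolute-value form of contraction, but not both) is also right.
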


For general activation functions, we define the following norm~\footnote{Our analysis also works for the general case that $\|\bx\|_q\leq 1$ with $q\geq 1$, in which  the norm should be accordingly defined as $\sum_{k=1}^m |a_k|(\|\bb_k\|_p+|c_k|+1)$ with $q$ satisfying $1/q+1/p=1$.}. 
\begin{equation}\label{eqn: modified-path-norm-two}
    \|\theta\|_{\tP} := \sum_{k=1}^m |a_k| (\|\bb_k\|_1 + |c_k|+1).
\end{equation}
This norm is stronger than the path norm since $\|\theta\|_{\tP} = \|\theta\|_{\cP} + \sum_{k=1}^m |a_k|$.  The additional term $\sum_{k=1}^n|a_k|$ only depends on the paths of length $1$, while $\|\theta\|_{\cP}$ depends on the paths of length $2$.
The intuition behind this definition will be clear from the proof of the following theorem.

\begin{theorem}\label{Rad2-NN}
Assume that the activation function $\sigma(\cdot)$ satisfies the conditions in Theorem $\ref{1DApp}$. Let $\cF_Q = \{ f_m(\cdot;\theta) : \|\theta\|_{\tP}\leq Q, m\in \mathbb{N}_{+}\}$. Then, we have
\begin{equation}\label{eqn: rad-general}
\rad_n(\mathcal{F}_{Q})\le 2\gamma(\sigma)Q\sqrt{\frac{2\ln(2d+2)}{n}}.
\end{equation}
\end{theorem}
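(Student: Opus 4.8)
\emph{Proof plan.} The plan is to reduce the general‑activation case to the ReLU case of Proposition~\ref{pro: two-relu-layer} by substituting a single one‑dimensional ReLU approximation of $\sigma$ into every network of the class simultaneously. Fix $\epsilon>0$. Since $\sigma$ satisfies the hypotheses of Theorem~\ref{1DApp}, there is a one‑dimensional two‑layer ReLU network $\sigma_\epsilon(t)=\sum_{j=1}^{m'}\alpha_j\,\sigma_R(\beta_j t+\gamma_j)$ with $\sup_{t\in\bR}|\sigma(t)-\sigma_\epsilon(t)|\le\epsilon$ and path norm $\sum_{j=1}^{m'}|\alpha_j|(|\beta_j|+|\gamma_j|)\le\gamma(\sigma)+\epsilon$. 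It is essential here that Theorem~\ref{1DApp} gives the approximation on all of $\bR$: the pre‑activations $\bb_k^T\bx+c_k$ need not be uniformly small over the class, so the same $\sigma_\epsilon$ must serve every admissible $\theta$.

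Next, substitution and bookkeeping. For $\theta=\{(a_k,\bb_k,c_k)\}_{k=1}^m$ with $\|\theta\|_{\tP}\le Q$, replacing each $\sigma$ by $\sigma_\epsilon$ and expanding yields
\[
\tilde f(\bx):=\sum_{k=1}^m a_k\,\sigma_\epsilon(\bb_k^T\bx+c_k)=\sum_{k=1}^m\sum_{j=1}^{m'}a_k\alpha_j\,\sigma_R\!\big((\beta_j\bb_k)^T\bx+\beta_j c_k+\gamma_j\big),
\]
a two‑layer ReLU network of width $mm'$. Using $|\beta_j c_k+\gamma_j|\le|\beta_j||c_k|+|\gamma_j|$ together with $\|\bb_k\|_1+|c_k|+1\ge\max\{\|\bb_k\|_1+|c_k|,\,1\}$, its path norm factorizes:
\[
\begin{aligned}
\|\tilde\theta\|_{\cP}
&=\sum_{k,j}|a_k||\alpha_j|\big(|\beta_j|\,\|\bb_k\|_1+|\beta_j c_k+\gamma_j|\big)\\
&\le\Big(\textstyle\sum_{j}|\alpha_j|(|\beta_j|+|\gamma_j|)\Big)\Big(\textstyle\sum_{k}|a_k|(\|\bb_k\|_1+|c_k|+1)\Big)\le(\gamma(\sigma)+\epsilon)\,Q ,
\end{aligned}
\]
so $\tilde f\in\cF^0_{(\gamma(\sigma)+\epsilon)Q}$. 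This is precisely the step that dictates the form of $\|\cdot\|_{\tP}$: the extra ``$+1$'' is what absorbs the inner bias terms $\gamma_j$ when passing from $\|\bb_k\|_1+|c_k|$ to the full factor.

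Finally, transfer the bound. On $X=[-1,1]^d$ we have $\|f_m(\cdot;\theta)-\tilde f\|_{\infty}\le\sum_{k}|a_k|\sup_{t\in\bR}|\sigma(t)-\sigma_\epsilon(t)|\le\epsilon\sum_{k}|a_k|\le\epsilon Q$, using $\sum_k|a_k|\le\|\theta\|_{\tP}$. Hence every $f\in\cF_Q$ lies within sup‑norm distance $\epsilon Q$ of $\cF^0_{(\gamma(\sigma)+\epsilon)Q}$; since perturbing a function by sup‑norm at most $\delta$ changes $\tfrac1n\sum_i\xi_i f(\bx_i)$ by at most $\delta$ uniformly in $\xi$, we obtain $\rad_n(\cF_Q)\le\rad_n(\cF^0_{(\gamma(\sigma)+\epsilon)Q})+\epsilon Q$. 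Applying Proposition~\ref{pro: two-relu-layer} and letting $\epsilon\downarrow0$ gives \eqref{eqn: rad-general}.

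The main obstacle is the path‑norm factorization in the middle step: one must land inside $\cF^0_{Q'}$ with $Q'=(\gamma(\sigma)+\epsilon)Q$ and \emph{no} additional multiplicative loss, which simultaneously forces the precise definition of $\|\cdot\|_{\tP}$ and requires the sharper one‑dimensional estimate of Theorem~\ref{1DApp} (which controls $\sum_j|\alpha_j|(|\beta_j|+|\gamma_j|)$, biases included) rather than the weaker bound~\eqref{eqn: one-relu-old}. The remaining ingredients — the global uniform approximation and the perturbation inequality for Rademacher complexity — are routine.
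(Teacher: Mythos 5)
Your proposal is correct and follows essentially the same route as the paper's proof: a single uniform ReLU approximation of $\sigma$ from Theorem~\ref{1DApp}, the identical path-norm factorization showing the substituted network lies in $\cF^0_{(\gamma(\sigma)+\epsilon)Q}$, and a sup-norm perturbation bound of $\epsilon Q$ (the paper phrases this as the decomposition $f=h_1^f+h_2^f$ and splits the Rademacher supremum, which is the same estimate). Your remarks on why the ``$+1$'' in $\|\cdot\|_{\tP}$ is forced also match the paper's own discussion.
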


\begin{proof}
According to Theorem $\ref{1DApp}$, for any $\epsilon>0$, there exists a two-layer ReLU networks 
$
g_K(t;w)=\sum_{j=1}^{K}\alpha_j\sigma_R(\beta_{j} t+\gamma_j),
$
such that 
\begin{equation}\label{eqn: activation-app}
\sup_{t\in\bR}|\sigma(t)-g_K(t;w)|\le\epsilon, \,\text{and}\,  \sum_{j=1}^{K}|\alpha_j|(|\beta_{j}|+|\gamma_j|) \le\gamma(\sigma)+\epsilon
\end{equation}
For any $f(\bx)=\sum_{k=1}^ma_k\sigma(\bb_k^T\bx+c_k)\in\cF_Q$, we can decompose it  as follows:
\begin{align}\label{eqn: decomp-twolayer}
f(\bx)&=\sum_{k=1}^ma_k\left[\sigma(\bb_k^T\bx+c_k)- g_K(\bb_k^T\bx+c_k;w)\right] +\sum_{k=1}^ma_k g_K(\bb_k^T\bx+c_k;w)\\
&:=h^f_1+h^f_2,
\end{align}
where  
\begin{equation}\label{eqn: I_1}
    \sup_{\bx}\left|h^f_1(\bx)\right|\leq \sum_{k=1}^m|a_k|\epsilon\leq Q \epsilon,
\end{equation} 
and 
\begin{equation}
h^f_2(\bx;\tilde{\theta}) = \sum_{k=1}^m\sum_{j=1}^K a_k \alpha_j \sigma_R(\beta_j \bb_k^T\bx + \beta_j c_k + \gamma_j)，
\end{equation}
is a two-layer ReLU networks with the path norm 
\begin{equation}\label{eqn: I_2}
\begin{aligned}
\|\tilde{\theta}\|_{\cP} &= \sum_k\sum_j |a_k\alpha_j|(\|\beta_j\bb_k\|_1 + |\beta_jc_k+\gamma_j|)\\
&\leq \sum_k\sum_j |a_k||\alpha_j|(\|\bb_k\|_1 + |c_k|+1)(|\beta_j|+|\gamma_j|) \leq (\gamma(\sigma)+\epsilon) \|\theta\|_{\tP},
\end{aligned}
\end{equation}
where the last inequality follows \eqref{eqn: activation-app} and the definition of the norm $\|\cdot\|_{\tP}$. Therefore, $h^f_2\in \cF_{(\gamma(\sigma)+\epsilon) Q}^0$. Using the decomposition \eqref{eqn: decomp-twolayer}, we have

\begin{align}
\nonumber    \rad_n(\cF_Q) = & \frac{1}{n}\EE_{\bxi}[\sup_{f\in\cF_Q}\sum_{i=1}^n \xi_i f(\bx_i)] = \frac{1}{n}\EE_{\bxi}[\sup_{f\in\cF_Q}\sum_{i=1}^n \xi_i (h^f_1(\bx_i)+h^f_2(\bx_i))]\\
\nonumber     &\leq \frac{1}{n}\EE_{\bxi}[\sup_{f\in\cF_Q}\sum_{i=1}^n \xi_i h^f_1(\bx_i)] + \frac{1}{n}\EE_{\bxi}[\sup_{f\in\cF_Q}\sum_{i=1}^n \xi_i h^f_2(\bx_i)]\\
\nonumber     &\leq Q\epsilon + \frac{1}{n}\EE_{\bxi}[\sup_{h\in\cF_{(\gamma(\sigma)+\epsilon)Q}^0}\sum_{i=1}^n \xi_i h(\bx_i)]\\
&\leq Q\epsilon + 2(\gamma(\sigma)+\epsilon) Q\sqrt{\frac{2\ln(2d+2)}{n}},
\end{align}
where the last inequality follows from Proposition \ref{pro: two-relu-layer}. Taking $\epsilon\to 0$, we complete the proof.
\end{proof}

\begin{remark}
The reason to introduce the extra term $\sum_i |a_i|$ is clear from the inequalities \eqref{eqn: I_1} and \eqref{eqn: I_2}, which provide controls of the terms $h_1^f$ and $h_2^f$.
\end{remark}

\begin{remark}
According to Table \ref{tab: activations},  $\gamma(\sigma)=1$ for ReLU activation function. So the bound \eqref{eqn: rad-general} recovers \eqref{eqn: rad-relu}, which implies that the above theorem is tight for ReLU activation. In this case, we even have 
\[
\{f_m(\cdot;\theta): \|\theta\|_{\cP}\leq Q\} =  \{f_m(\cdot;\theta): \|\theta\|_{\tP}\leq Q\}.
\]
This follows from the fact that we can scale $\theta:=\{(a_k, \bb_k, c_k)\}$ to $\theta_t:=\{(t a_k, \bb_k/t, c_k/t)\}$ without change the function represented, but $\|\theta_t\|_{\tP} = \|\theta\|_{\cP} + t\sum_k|a_k| \to \|\theta\|_{\cP}$ as $t\to 0$.
\end{remark}

\subsection{Function space}

Let $\Omega$ be the Borel $\sigma-$algebra on $\mathbb{R}^{d+1}$ and $P(\mathbb{R}^{d+1})$ be the collection of all the probability measures on $(\mathbb{R}^{d+1},\Omega)$. Following \cite{e2018priori, e2019barron},  consider the functions that admit the following integral representation:
\begin{align}\label{fint}
    f(\bx;a,\pi)=\int_{\mathbb{R}^{d+1}} a(\bw) \sigma(\bw^T\tilde{\bx}) d\pi(\bw),\quad \forall \bx\in X,
\end{align}
where $\pi\in P(\mathbb{R}^{d+1})$, and $a(\cdot)$ is a measurable function with respect to $(\mathbb{R}^{d+1},\Omega)$. \eqref{fint} can be viewed as an infinite wide two-layer neural network.
Notice that $\sigma(\cdot)$ is a general activation function which may not enjoy the positive homogeneity property, hence the integral domain can not be normalized to the unit ball $\mathbb{S}^{d+1}$ as done in \cite{e2018priori,e2019barron}.

For any $f: X\to\bR$, we define
\begin{align}
    \|f\|_{\cB}:
    &=\inf_{(a,\pi)\in\Pi_f} \sqrt{\mathbb{E}_{\bw\sim\pi}\left[a(\bw)^2(\|\bw\|_1+1)^2\right]},\label{fBarNrmExp1}
\end{align}
where
$
    \Pi_f:=\left\{(a,\pi) : f(\bx)=\EE_{\bw\sim\pi}[ a(\bw)\sigma(\bw^T\tilde{\bx})], \forall \bx\in X\right\}.
$
 For a specific representation $(a,\pi)$, the right hand side of \eqref{fBarNrmExp1} is actually the modified path norm \eqref{eqn: modified-path-norm-two} of the infinite wide network \eqref{fint}. Notice that for a function $f$, the representations may not be unique. Therefore, it is crucial to take the infimum over $\Pi_f$, since it can make the norm independent of representations, hence becoming a function norm. 
For simplicity, we let $\cB = \{f: \|f\|_{\cB} < +\infty\}$. 

For any $\pi\in P(\bR^{d+1})$, define the kernel $k_{\pi}(\bx,\bx')=\EE_{\bw\sim\pi}[\sigma(\bw^T\tilde{\bx})\sigma(\bw^T\tilde{\bx}')]$. Let $\cH_{k_\pi}$ be the induced reproducing kernel Hilbert space (RKHS) \cite{aronszajn1950theory}. Following the work on random feature models \cite{rahimi2008random},  any $f\in \cH_{k_\pi}$ must admit the representation \eqref{fint} with $\|f\|_{\cH_{k_\pi}}=\sqrt{\EE_{\bw\sim\pi}[a(\bw)^2]}<+\infty$. With this observation, we can easily obtain 
\begin{lemma}\label{lem: rkhs-barron}
\begin{align}\label{CnnRKHS}
   \cup_{\pi\in P_c(\bR^{d+1})}  \mathcal{H}_{k_{\pi}} \subset  \cB \subset \cup_{\pi\in P(\mathbb{R}^{d+1})}\mathcal{H}_{k_{\pi}},
\end{align}
where $P_{c}(\mathbb{R}^{d+1})\subset P(\mathbb{R}^{d+1})$ is the collection of all the probability measures with compact support.
\end{lemma}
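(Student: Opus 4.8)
The plan is to establish the two inclusions in \eqref{CnnRKHS} separately, each by a direct comparison of norms using the random-feature representation recalled just before the lemma. Recall that for any fixed $\pi\in P(\bR^{d+1})$, every $f\in\cH_{k_\pi}$ admits a representation $f(\bx)=\EE_{\bw\sim\pi}[a(\bw)\sigma(\bw^T\tilde\bx)]$ with $\|f\|_{\cH_{k_\pi}}=\sqrt{\EE_{\bw\sim\pi}[a(\bw)^2]}$, and conversely any such representation yields $f\in\cH_{k_\pi}$ with $\|f\|_{\cH_{k_\pi}}\le\sqrt{\EE_{\bw\sim\pi}[a(\bw)^2]}$. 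So both directions reduce to relating $\EE_{\bw\sim\pi}[a(\bw)^2]$ to $\EE_{\bw\sim\pi}[a(\bw)^2(\|\bw\|_1+1)^2]$, which is exactly the quantity appearing inside the infimum defining $\|f\|_\cB$.

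For the right inclusion $\cB\subset\cup_{\pi\in P(\bR^{d+1})}\cH_{k_\pi}$: take any $f\in\cB$. By definition of $\|f\|_\cB$ there is a representation $(a,\pi)\in\Pi_f$ with $\EE_{\bw\sim\pi}[a(\bw)^2(\|\bw\|_1+1)^2]<+\infty$. Since $(\|\bw\|_1+1)^2\ge 1$, this immediately gives $\EE_{\bw\sim\pi}[a(\bw)^2]\le\EE_{\bw\sim\pi}[a(\bw)^2(\|\bw\|_1+1)^2]<+\infty$, so by the random-feature characterization $f\in\cH_{k_\pi}$ for this particular $\pi$, hence $f$ lies in the union.

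For the left inclusion $\cup_{\pi\in P_c(\bR^{d+1})}\cH_{k_\pi}\subset\cB$: take $\pi\in P_c(\bR^{d+1})$ and $f\in\cH_{k_\pi}$. Let $R>0$ be such that $\mathrm{supp}(\pi)\subset\{\bw:\|\bw\|_1\le R\}$, and pick a representation $(a,\pi)$ with $\EE_{\bw\sim\pi}[a(\bw)^2]=\|f\|_{\cH_{k_\pi}}^2<+\infty$. On the support of $\pi$ we have $(\|\bw\|_1+1)^2\le(R+1)^2$, hence $\EE_{\bw\sim\pi}[a(\bw)^2(\|\bw\|_1+1)^2]\le(R+1)^2\EE_{\bw\sim\pi}[a(\bw)^2]<+\infty$. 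Since $(a,\pi)\in\Pi_f$, taking the infimum in \eqref{fBarNrmExp1} gives $\|f\|_\cB\le(R+1)\|f\|_{\cH_{k_\pi}}<+\infty$, so $f\in\cB$.

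The argument is essentially bookkeeping once the random-feature representation theorem for $\cH_{k_\pi}$ is in hand; the only place requiring a little care is the left inclusion, where compact support of $\pi$ is genuinely needed to bound the weight $(\|\bw\|_1+1)^2$ uniformly — without it the factor is unbounded and $\|f\|_\cB$ may be infinite even when $\|f\|_{\cH_{k_\pi}}$ is finite. I would also note in passing that these inclusions are in general strict, but that is not part of the claimed statement.
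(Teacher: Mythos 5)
Your proposal is correct and follows essentially the same route as the paper's proof: both inclusions are obtained by comparing $\EE_{\bw\sim\pi}[a(\bw)^2]$ with $\EE_{\bw\sim\pi}[a(\bw)^2(\|\bw\|_1+1)^2]$ via the random-feature characterization of $\cH_{k_\pi}$, using $(\|\bw\|_1+1)^2\ge 1$ for the right inclusion and the uniform bound on $(\|\bw\|_1+1)^2$ over the compact support for the left one (your $(R+1)^2$ is the paper's $M_\pi$). No gaps.
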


The proof can be found in Appendix \ref{sec: lemma-rkhs}.

\subsection{A priori estimates}
In this section, we  provide the a priori estimates of  the following regularized estimator:
\begin{align}
   \hat{\theta}_n =\arg\min_{\theta}J(\theta):=\hat{R}_n(\theta)+\lambda\|\theta\|_{\tilde{\cP}},
\end{align}
where $\lambda>0$ is a tuning parameter.  Notably, with the function norm defined in \eqref{fBarNrmExp1}, the analysis is almost the same as the a priori estimates of  two-layer ReLU networks \cite{e2018priori}.

In the following, we state the approximation result and the a priori estimates. The proofs can be found in Appendix \ref{sec:prf-2-NNApp} and \ref{sec: prf-apriori-two-layer}, respectively.
\begin{theorem}\label{2-NNApp}
For any $f\in \cB$  and $m\in \mathbb{N}_{+}$, there exists a two-layer neural network $f_m(\cdot;\tilde{\theta})$ with finite width $m$, such that
\begin{align}
    \mathbb{E}_{\bx}\left[f_m(\bx;\tilde{\theta})-f(\bx)\right]^2&\le \frac{3C_{\sigma}\|f\|^2_{\cB}}{m},\label{2-NNapperr}
    \\ \|\tilde{\theta}\|_{\tP}&\le 2\|f\|_{\cB},\label{2-NNpthnrm}
\end{align}
where $C_{\sigma}=(\gamma(\sigma)+\min\{|\sigma'(+\infty)|, |\sigma'(-\infty)|\}+|\sigma(0)|)^2$.
\end{theorem}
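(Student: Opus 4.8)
The plan is to follow the same two-step strategy that proved Theorem \ref{Rad2-NN}: approximate the activation $\sigma$ by a one-dimensional ReLU network, reducing the statement to the known approximation result for two-layer ReLU networks, and then control the path norm of the resulting ReLU network. First I would take $f \in \cB$ and fix a near-optimal representation $(a,\pi) \in \Pi_f$ with $\EE_{\bw\sim\pi}[a(\bw)^2(\|\bw\|_1+1)^2] \le \|f\|_\cB^2 + \delta$ for small $\delta>0$. The idea is to sample $m$ i.i.d. weights $\bw_1,\dots,\bw_m \sim \pi$ and form the empirical average $\frac1m \sum_k a(\bw_k)\sigma(\bw_k^T\tilde\bx)$; a standard Monte-Carlo / variance argument (exactly as in \cite{e2018priori}) gives that in expectation over the sampling, $\EE_\bx[\frac1m\sum_k a(\bw_k)\sigma(\bw_k^T\tilde\bx) - f(\bx)]^2 \le \frac{1}{m}\EE_{\bw\sim\pi}[a(\bw)^2\,\EE_\bx \sigma(\bw^T\tilde\bx)^2]$, so there is a realization achieving this bound.

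The twist relative to the ReLU case is that $\EE_\bx[\sigma(\bw^T\tilde\bx)^2]$ is not simply $\|\bw\|_1^2$ because $\sigma$ is not positively homogeneous. Here I would use Lemma \ref{Asymptote}: since $\gamma_0(\sigma)<\infty$, $\sigma$ has linear asymptotes, and writing $\sigma(t) = \ell(t) + \rho(t)$ where $\ell$ is the affine part and $\rho$ is bounded, one gets a pointwise bound of the form $|\sigma(t)| \lesssim |t| + \text{const}$ with the constant governed by $|\sigma(0)|$ and $\min\{|\sigma'(+\infty)|,|\sigma'(-\infty)|\}$ (or more carefully, by using $\gamma(\sigma)$ to absorb the part of $\sigma$ that is not affine). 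Combined with $|\bw^T\tilde\bx| \le \|\bw\|_1$ on $X=[-1,1]^d$ (using $\tilde\bx$ has last coordinate $1$ and $\|\bx\|_\infty\le1$), this yields $|\sigma(\bw^T\tilde\bx)| \lesssim \sqrt{C_\sigma}(\|\bw\|_1+1)$ up to the right constants, hence $\EE_\bx[\sigma(\bw^T\tilde\bx)^2] \le C_\sigma (\|\bw\|_1+1)^2$ with $C_\sigma = (\gamma(\sigma)+\min\{|\sigma'(+\infty)|,|\sigma'(-\infty)|\}+|\sigma(0)|)^2$. Plugging this in gives the $\frac{3C_\sigma\|f\|_\cB^2}{m}$ bound (the factor $3$ coming from the $\delta$ slack and a triangle-inequality step when passing from the expected bound to a concrete network).

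For the norm bound \eqref{2-NNpthnrm}, the sampled network $f_m(\bx;\tilde\theta) = \sum_k \frac{a(\bw_k)}{m}\sigma(\bw_k^T\tilde\bx)$ has $\|\tilde\theta\|_{\tP} = \frac1m\sum_k |a(\bw_k)|(\|\bw_k\|_1+|c_k|+1)$ where $\bw_k=(\bb_k^T,c_k)^T$ so $\|\bw_k\|_1 + |c_k| + 1 \le 2(\|\bw_k\|_1 + 1) \le 2(\|\bw_k\|_1+1)$ — actually I would be a bit careful: $\|\bb_k\|_1 + |c_k| + 1 = \|\bw_k\|_1 + 1$, so $\|\tilde\theta\|_{\tP} = \frac1m\sum_k|a(\bw_k)|(\|\bw_k\|_1+1)$. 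By Cauchy–Schwarz this is $\le \sqrt{\frac1m\sum_k a(\bw_k)^2(\|\bw_k\|_1+1)^2}$, which concentrates around $\sqrt{\EE_{\bw\sim\pi}[a(\bw)^2(\|\bw\|_1+1)^2]} \le \sqrt{\|f\|_\cB^2+\delta}$. To get both the error bound and the norm bound \emph{simultaneously} for one realization, I would use the standard trick from \cite{e2018priori}: define a single nonnegative random variable combining the squared $L^2$ error (scaled appropriately) and the excess of $\|\tilde\theta\|_{\tP}^2$ over, say, $2(\|f\|_\cB^2+\delta)$, show its expectation over the sampling is bounded by something that makes the variable strictly less than a target with positive probability, hence a good realization exists; then let $\delta\to 0$. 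The main obstacle I anticipate is getting the constants exactly right — in particular verifying that the pointwise growth bound on $\sigma$ contributes precisely the stated $C_\sigma$ and that the norm bound closes at the clean constant $2$ rather than something larger — which requires carefully tracking how $\gamma(\sigma)$, $\sigma(0)$, and the asymptotic slopes enter, but there is no conceptual difficulty beyond the bookkeeping.
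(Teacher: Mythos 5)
Your proposal follows essentially the same route as the paper: a near-optimal representation, Monte-Carlo sampling of $m$ weights, the variance computation reducing the error to $\frac{1}{m}\EE_{\bw}[a(\bw)^2\sigma(\bw^T\tilde\bx)^2]$, a pointwise linear-growth bound on $\sigma$ yielding the constant $C_\sigma$ (the paper gets it slightly more directly via the Lipschitz bound $|\sigma(y)|\le L_\sigma|y|+|\sigma(0)|$ with $L_\sigma\le\gamma(\sigma)+\min\{|\sigma'(+\infty)|,|\sigma'(-\infty)|\}$ from Lemma \ref{ActLip}, rather than through the asymptote decomposition), and a positive-probability argument to find one realization satisfying both bounds. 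The only small correction: the factor $3$ is not from a triangle-inequality step but is the Markov-inequality threshold chosen so that the events $\{L_U<3C_\sigma\|f\|_\cB^2/m\}$ and $\{A_U<2\|f\|_\cB\}$ each have probability large enough for their intersection to be nonempty.
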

\begin{theorem}\label{thm: apriori-two-layer}
Assume that the target function $f^*\in \cB$, and $\varepsilon_i=0$.\footnote{Theorem \ref{thm: apriori-two-layer} discusses the noiseless case. In fact, the noise can be tackled under appropriate conditions, e.g. the sub-Gaussian assumption (see \cite{ma2019priori} or \cite{e2018priori}).}
Set
\begin{align}
\lambda\ge\lambda_n:=\frac{8\gamma(\sigma)\sqrt{2\ln{(2d+2)}}+1}{\sqrt{n}}.
\end{align}
Then for any $\delta\in(0,1)$, with probability at least $1-\delta$ over the random training samples $\{\bx_i\}_{i=1}^n$, the generalization error satisfies
\begin{align}\label{2-NNPriorEst}
   R(\hat{\theta}_n)\le \frac{3C_{\sigma}\|f^*\|^2_{\cB}}{2m}+2\|f^*\|_{\cB}\cdot\lambda
   +2(\|f^*\|_{\cB}+1)\cdot\lambda_n+2\sqrt{\frac{2\ln{(14/\delta)}}{n}}.
\end{align} 
Therefore, if we take $\lambda\asymp\lambda_n$, we will have
\begin{align*}
   R(\hat{\theta}_n)\lesssim \frac{\|f^*\|^2_{\cB}}{m}+\max\{1,\|f^*\|_{\cB}\}\sqrt{\frac{\ln{d}}{n}}+\sqrt{\frac{\ln{(1/\delta)}}{n}}.
\end{align*}
\end{theorem}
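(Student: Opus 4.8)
\textbf{Proof proposal for Theorem \ref{thm: apriori-two-layer}.}

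The plan is to follow the standard a priori estimate template (as in \cite{e2018priori,ma2019priori}): combine the approximation result of Theorem \ref{2-NNApp} with a uniform generalization gap bound obtained from the Rademacher complexity estimate of Theorem \ref{Rad2-NN}, and then exploit the variational characterization of $\hat\theta_n$ as the minimizer of $J(\theta)=\hat R_n(\theta)+\lambda\|\theta\|_{\tP}$. First I would fix $m$ and use Theorem \ref{2-NNApp} to produce a comparator network $f_m(\cdot;\tilde\theta)$ with $\mathbb{E}_{\bx}[f_m(\bx;\tilde\theta)-f^*(\bx)]^2\le 3C_\sigma\|f^*\|_{\cB}^2/m$ and $\|\tilde\theta\|_{\tP}\le 2\|f^*\|_{\cB}$. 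Since $\varepsilon_i=0$ and $f^*\in[0,1]$, the truncation is harmless for $f^*$, so $\hat R_n(\tilde\theta)\le \frac12\mathbb{E}_{\bx}[\cdots]$ up to passing from population to empirical average; more carefully I would just bound $\hat R_n(\tilde\theta)$ directly and then control its deviation, or note that $J(\hat\theta_n)\le J(\tilde\theta)=\hat R_n(\tilde\theta)+\lambda\|\tilde\theta\|_{\tP}$. This immediately gives $\hat R_n(\hat\theta_n)\le \hat R_n(\tilde\theta)+\lambda\|\tilde\theta\|_{\tP}$ and also the crucial norm control $\lambda\|\hat\theta_n\|_{\tP}\le J(\tilde\theta)\le \hat R_n(\tilde\theta)+2\lambda\|f^*\|_{\cB}$; since the truncated loss is bounded by $1/2$, $\hat R_n(\tilde\theta)\le 1/2$, so $\|\hat\theta_n\|_{\tP}\lesssim \|f^*\|_{\cB}+1/\lambda$, which is $O(\|f^*\|_{\cB}+\sqrt n/\sqrt{\ln d})$ — a data-dependent bound on the complexity of the learned model.

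Next I would set up the uniform deviation bound. For $Q>0$ let $\cG_Q=\{\ell(\cdot,\cdot;\theta):\|\theta\|_{\tP}\le Q\}$; by the standard contraction/Lipschitz argument (the truncated square loss is $1$-Lipschitz in its first argument on $[0,1]$, bounded by $1/2$), $\rad_n(\cG_Q)\le \rad_n(\{\mathcal{T}_{[0,1]}f_m(\cdot;\theta):\|\theta\|_{\tP}\le Q\})\le \rad_n(\cF_Q)\le 2\gamma(\sigma)Q\sqrt{2\ln(2d+2)/n}$ using Theorem \ref{Rad2-NN} and the fact that truncation is $1$-Lipschitz. Then the symmetrization + bounded-differences (McDiarmid) inequality gives, for a fixed $Q$, that with probability $\ge 1-\delta$,
\[
R(\theta)-\hat R_n(\theta)\le 2\rad_n(\cG_Q)+\sqrt{\frac{\ln(1/\delta)}{2n}}\qquad\text{for all }\theta\text{ with }\|\theta\|_{\tP}\le Q.
\]
Because $\hat\theta_n$ has a random norm, I would not know $Q$ in advance; the fix is the usual peeling/stratification argument over a geometric grid $Q\in\{1,2,4,\dots\}$, paying an extra $\log$ factor (or, more cheaply, taking a weighted union bound $\sum_j \delta/(j(j+1))$ over the shells $\|\theta\|_{\tP}\in[2^{j-1},2^j)$), which inflates $\sqrt{\ln(1/\delta)/n}$ by at most a $\sqrt{\ln\ln}$-type term and is absorbed into the stated $\sqrt{2\ln(14/\delta)/n}$ with the constants $7$/$14$ coming from bookkeeping this union. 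Applying this with $Q$ equal to (a dyadic ceiling of) the a priori norm bound $\|\hat\theta_n\|_{\tP}\lesssim \|f^*\|_{\cB}+1/\lambda_n$, and plugging in $\lambda\ge\lambda_n$ so that $\lambda \|\hat\theta_n\|_{\tP}$ terms telescope, yields
\[
R(\hat\theta_n)\le \hat R_n(\hat\theta_n)+4\gamma(\sigma)\|\hat\theta_n\|_{\tP}\sqrt{\tfrac{2\ln(2d+2)}{n}}+\sqrt{\tfrac{2\ln(14/\delta)}{n}}\le \hat R_n(\tilde\theta)+\lambda\|\tilde\theta\|_{\tP}+(\text{gap terms}),
\]
and then $\hat R_n(\tilde\theta)\le R(\tilde\theta)+(\text{gap})\le \frac{3C_\sigma\|f^*\|_{\cB}^2}{2m}+(\text{gap})$ by another application of the deviation bound to the fixed $\tilde\theta$ (here $Q=2\|f^*\|_{\cB}$ is deterministic, so no peeling is needed). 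Collecting the $\lambda\|\tilde\theta\|_{\tP}\le 2\lambda\|f^*\|_{\cB}$ term, the two gap contributions of order $\gamma(\sigma)(\|f^*\|_{\cB}+1)\sqrt{\ln(2d+2)/n}=(\|f^*\|_{\cB}+1)\lambda_n$ up to the additive $1$ in $\lambda_n$, and the $\sqrt{2\ln(14/\delta)/n}$ confidence term gives exactly \eqref{2-NNPriorEst}; the final display follows by taking $\lambda\asymp\lambda_n\asymp\sqrt{\ln d/n}$ and bounding $C_\sigma$, $\gamma(\sigma)$ by constants.

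The main obstacle is handling the random norm $\|\hat\theta_n\|_{\tP}$ cleanly: one must first extract the deterministic a priori bound $\|\hat\theta_n\|_{\tP}\lesssim \|f^*\|_{\cB}+1/\lambda$ purely from the optimality $J(\hat\theta_n)\le J(\tilde\theta)$ and boundedness of the loss, and only then invoke a stratified (peeling) uniform bound over norm balls so that the Rademacher estimate of Theorem \ref{Rad2-NN} can be applied at the right scale without circularity. The other mild technical point is making sure the truncation operator $\mathcal{T}_{[0,1]}$ interacts correctly with both the contraction lemma (it is $1$-Lipschitz, so it does not increase Rademacher complexity) and the comparator bound (since $f^*\in[0,1]$, $\mathcal{T}_{[0,1]}f^*=f^*$, and $|\mathcal{T}_{[0,1]}f_m-f^*|\le|f_m-f^*|$, so Theorem \ref{2-NNApp}'s $L^2$ bound transfers directly to the truncated risk). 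Everything else is the routine symmetrization/McDiarmid bookkeeping already carried out for ReLU networks in \cite{e2018priori,ma2019priori}, which applies verbatim once Theorem \ref{Rad2-NN} is substituted for the ReLU Rademacher bound.
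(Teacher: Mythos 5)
Your proposal is correct and follows essentially the same route as the paper: comparator from Theorem \ref{2-NNApp}, the optimality inequality $J(\hat\theta_n)\le J(\tilde\theta)$, a uniform generalization gap from Theorem \ref{Rad2-NN} via contraction and a union bound over norm levels $Q$, and the cancellation $(\lambda_n-\lambda)\|\hat\theta_n\|_{\tP}\le 0$ from $\lambda\ge\lambda_n$. The only (harmless) deviation is your preliminary deterministic bound $\|\hat\theta_n\|_{\tP}\lesssim \|f^*\|_{\cB}+1/\lambda$, which the paper never needs --- its union bound runs over all integers $Q$ with weights $\delta_Q=6\delta/(\pi Q)^2$, and the random norm is absorbed entirely by the telescoping with the regularizer.
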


Notice the above estimates depend on the activation function $\sigma$ only through the appearance of constant $C_\sigma$ and the target function norm $\|f^*\|_{\cB}$. Especially, when $\sigma$ is ReLU, it exactly recover  Theorem 4.1 of \cite{e2018priori}, which is the result specifically derived for two-layer ReLU networks.

\begin{remark}
Theorem \ref{Rad2-NN} demonstrates the new path-based norm \eqref{eqn: modified-path-norm-two} is defined such that the Rademacher complexity is independent of the network width.  Theorem \ref{2-NNApp} and \ref{thm: apriori-two-layer} together show that the induced the function norm can effectively control both the approximation and estimation errors. In this sense, the complexity measure \eqref{eqn: modified-path-norm-two} for general activation function $\sigma$ is well defined. 
\end{remark}

\section{Residual neural networks}\label{SecResRad}
In this section, we consider the residual networks defined by 
\begin{align}\label{ResNet}
\begin{split}
\bh_0&=V\tilde{\bx},\\
\bh_l&=\bh_{l-1}+ U_l\sigma(W_l \bh_{l-1}),\,\, l=1,2,\cdots,L,\\
f_L(\bx;\theta)&=\balpha^T\bh_L.
\end{split}
\end{align}
Here, $\theta=\{V,W_1,U_1,W_2,U_2,\cdots,W_L,U_L,\balpha\}$ denotes the set of parameters, $V\in\bR^{D\times (d+1)}$, $W_l\in\bR^{m\times D}$, $U_l\in\bR^{D\times m}$, $\balpha\in\bR^{D}$. $L$ is the number of layers (depth), $m$ is the width of the residual blocks and $D$ is the width of skip connections. $\sigma(\cdot)$ is the (general) activation function. 

To bound the Rademacher complexity of residual networks (with general activation functions), we propose the following norms, which can be viewed as a modification of the weighted path norm defined in~\cite{ma2019priori}.
\begin{definition}\label{eqn: def-norm-resnet}
For any residual network defined as~\eqref{ResNet} with parameters $\theta$, let its norm be
\begin{align}\label{PathRes}
\|\theta\|_{\tP}=\sum_{i=0}^{L}\||\balpha|^T(I+c_\sigma|U_L||W_L|)(I+c_\sigma|U_{L-1}||W_{L-1}|)\cdots(I+c_\sigma|U_{i+1}||W_{i+1}|) |U_i|\|_1,
\end{align}
with the convention that $A_LA_{L-1}\cdots A_{i+1}=I$ when $i\ge L$. Here $U_0:=V$, and $c_\sigma>4\gamma(\sigma)+1$ is an absolute constant only related to the activation function $\sigma(\cdot)$.
\end{definition}

%{\lei Can we write down the constant $c_\sigma$, or at least an upper bound of it. This will make the reader clear how large it would be.}

With the norm defined above, we can state the theorem about the Rademacher complexity of residual networks. The bound depends linearly on the norm defined in~\eqref{PathRes}. The proof is given in Appendix~\ref{sec: pfSecResRad}. 
\begin{theorem}\label{RadResNet}
Assume that the activation function $\sigma(\cdot)$ satisfies the conditions in Theorem $\ref{1DApp}$. Let $\mathcal{F}_Q=\{f_L(x;\theta)|\|\theta\|_{\tilde{\cP}}\le Q\}$ for $Q>0$. Then we have
\begin{equation}
\rad_n(\mathcal{F}_Q)\le c_\sigma^*Q\sqrt{\frac{2\ln{(2d+2)}}{n}},
\end{equation}
where $c_\sigma^*=4\gamma(\sigma)+1$ is an absolute constant only related to the activation function $\sigma(\cdot)$.
\end{theorem}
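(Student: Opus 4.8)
\textbf{Proof proposal for Theorem \ref{RadResNet}.}

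The plan is to mimic the two-step reduction used for two-layer networks: replace the general activation $\sigma$ by a bounded-path-norm ReLU network, convert the residual network into a (larger) residual ReLU network, and then invoke the known weighted-path-norm Rademacher bound for residual ReLU networks from \cite{ma2019priori}. First I would apply Theorem \ref{1DApp} to obtain, for any $\epsilon>0$, a one-dimensional ReLU network $g_K(t;w)=\sum_{j=1}^K\alpha_j\sigma_R(\beta_j t+\gamma_j)$ with $\sup_t|\sigma(t)-g_K(t)|\le\epsilon$ and $\sum_j|\alpha_j|(|\beta_j|+|\gamma_j|)\le\gamma(\sigma)+\epsilon$. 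Substituting $g_K$ for $\sigma$ inside each residual block turns $U_l\sigma(W_l\bh_{l-1})$ into $U_l g_K(W_l\bh_{l-1})$, which is again of the form $\tilde U_l\sigma_R(\tilde W_l\bh_{l-1}+\tilde{\text{bias}})$ after absorbing $\alpha_j,\beta_j,\gamma_j$ into new weight matrices (the bias being handled by augmenting $\bh_{l-1}$ with a constant coordinate, exactly as the $+1$ in the modified path norm encodes). The key bookkeeping step is to check that each factor $(I+c_\sigma|U_l||W_l|)$ in \eqref{PathRes} dominates the corresponding factor $(I+|\tilde U_l||\tilde W_l|)$ for the ReLU-ified block: since $|\tilde U_l||\tilde W_l|$ picks up a factor $\sum_j|\alpha_j|(|\beta_j|+|\gamma_j|)\le\gamma(\sigma)+\epsilon$ together with the ``$+1$'' contribution from biases, one gets $|\tilde U_l||\tilde W_l|\preceq (\gamma(\sigma)+\epsilon)\cdot(\text{something}\preceq |U_l||W_l|+I)$ entrywise, and because $c_\sigma>4\gamma(\sigma)+1$ there is enough slack to absorb both this constant and the propagated error terms. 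Hence the weighted path norm of the ReLU residual network is bounded by (a constant times) $\|\theta\|_{\tP}\le Q$.

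Next I would control the approximation error accumulated by swapping $\sigma$ for $g_K$. The error $\|\sigma(W_l\bh_{l-1})-g_K(W_l\bh_{l-1})\|_\infty\le\epsilon$ at each block is amplified as it propagates through the remaining $L-l$ residual blocks and the output layer $\balpha^T$; but each amplification factor is again bounded by the corresponding $(I+c_\sigma|U||W|)$ product, so the total sup-norm deviation between $f_L(\cdot;\theta)$ and its ReLU surrogate is at most $\epsilon$ times a quantity controlled by $\|\theta\|_{\tP}\le Q$ (this is where the extra ``$i$-from-intermediate-layers'' terms in \eqref{PathRes}, i.e. the sum over $i=0,\dots,L$, are exactly what is needed — each dropped block contributes a short path). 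Splitting $f_L=h_1^f+h_2^f$ with $\sup|h_1^f|\le C(Q)\epsilon$ and $h_2^f$ a residual ReLU network of weighted path norm $\le c'_\sigma Q$, subadditivity of Rademacher complexity gives $\rad_n(\cF_Q)\le C(Q)\epsilon + \rad_n(\{\text{residual ReLU nets of weighted path norm}\le c'_\sigma Q\})$, and the latter is bounded by the \cite{ma2019priori} estimate, which is linear in the path norm with the $\sqrt{2\ln(2d+2)/n}$ factor. Letting $\epsilon\to0$ and tracking constants should yield exactly $c_\sigma^* = 4\gamma(\sigma)+1$.

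The main obstacle I anticipate is the entrywise matrix bookkeeping: one must verify that replacing $\sigma$ by $g_K$ block-by-block really produces a residual ReLU network whose \emph{weighted} path norm (a product of matrices of absolute values, summed over intermediate starting layers) is controlled by the newly defined $\|\theta\|_{\tP}$, and in particular that the constant $c_\sigma$ in the definition was chosen large enough — the inequality $c_\sigma>4\gamma(\sigma)+1$ is presumably tuned precisely so that $(1+c_\sigma r)\ge(1+\text{const}\cdot(\gamma(\sigma)+\epsilon)(r+1))$ holds after the error-propagation terms are folded in. A secondary subtlety is handling the biases $\gamma_j$ uniformly across blocks without changing the skip-connection width $D$ (one extra constant coordinate in $\bh$ suffices, but one should check it does not interact badly with the residual updates). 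Once the correct surrogate network and its norm bound are in hand, the rest is the standard decomposition-plus-subadditivity argument already used in the proof of Theorem \ref{Rad2-NN}.
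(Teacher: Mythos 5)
Your proposal takes a genuinely different route from the paper. You propose a \emph{global} reduction: replace $\sigma$ everywhere by a single fixed ReLU surrogate $g_K$, verify that the resulting residual ReLU network has weighted path norm (in the sense of \cite{ma2019priori}, with the bias absorbed into an extra constant coordinate) bounded by a constant times $\|\theta\|_{\tP}$, bound the uniform deviation, and cite the ReLU ResNet bound as a black box. The paper instead runs a self-contained layer-by-layer induction: it extends the norm to hidden neurons $g_l^i$, proves $\rad_n(G_l^Q)\le Q\sqrt{2\ln(2d+2)/n}$ inductively, and in each step approximates only the activations \emph{below} the current layer by ReLU networks (with per-layer tolerances $\epsilon_{i,k}$ tailored to the fixed network's operator norms), handling the outermost $\sigma$ by the contraction lemma with $L_\sigma$ and using the recursive structure of the modification terms $M_{l,i}$ to close the induction under the constraint $c_\sigma\ge 2(\gamma(\sigma)+\epsilon)(L_\sigma+1)+L_\sigma$. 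Your error-propagation worry does in fact resolve cleanly in your favor: unrolling $|\be_l|\le(I+L_\sigma|U_l||W_l|)|\be_{l-1}|+\epsilon|U_l|\bm{1}_m$ gives $\sup_{\bx}|f_L-\tilde f_L|\le\epsilon\sum_{i=1}^L\||\balpha|^T\prod_{j>i}(I+L_\sigma|U_j||W_j|)|U_i|\|_1\le\epsilon\|\theta\|_{\tP}\le\epsilon Q$, since $L_\sigma\le\gamma(\sigma)+\min\{|\sigma'(+\infty)|,|\sigma'(-\infty)|\}\le 2\gamma(\sigma)\le c_\sigma$ (Lemma \ref{ActLip} plus the fact that $\inf_x g(x)\ge 2(\min|\sigma'(\pm\infty)|-\gamma_0(\sigma))$). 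So the decomposition $\rad_n(\cF_Q)\le Q\epsilon+\rad_n(\tilde\cF)$ is sound, and your approach buys a much shorter argument that treats the ReLU result as a module.

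The one genuine gap is the constant. The theorem asserts $c_\sigma^*=4\gamma(\sigma)+1$ exactly, and the black-box route will not deliver it: the surrogate's weighted path norm in the sense of \cite{ma2019priori} picks up a multiplicative factor of $3(\gamma(\sigma)+\epsilon)$ per block relative to the corresponding term of $\|\theta\|_{\tP}$ (the $3$ from their weight factor, the $\gamma(\sigma)+\epsilon$ from $\sum_j|\alpha_j|(|\beta_j|+|\gamma_j|)$), and their Rademacher bound carries its own prefactor, so you land at something like $9\gamma(\sigma)Q\sqrt{2\ln(2d+2)/n}$ rather than $(4\gamma(\sigma)+1)Q\sqrt{2\ln(2d+2)/n}$. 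Your hope that ``tracking constants should yield exactly $4\gamma(\sigma)+1$'' is the step that fails; the paper obtains that constant precisely because it redoes the induction with its own norm and then normalizes by $L_\sigma$ (equivalently, works with the pre-activations $\tilde g_l'$). So your proof establishes the theorem up to the value of the absolute constant, but not with the constant as stated; to get $4\gamma(\sigma)+1$ you would need to reprove (not cite) the ReLU ResNet bound with the weight factor matched to $c_\sigma$, which is essentially what the paper's induction does.
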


Before the next subsection about the a priori estimate using the norm-based bounds, we make several comments on the norm~\eqref{PathRes} for residual networks. 
\begin{itemize}
\item[(1)] The norm can be viewed as an extension based on the weighted path norm proposed in~\cite{ma2019priori}, with two modifications. Firstly, the weight factor is switched from $3$ to a constant $c_\sigma$ depending on the activation function; Secondly, an additional term (modification) is added to the weighted path norm, to address the bias terms originally in the neural network or arising when approximating the activation function by ReLU networks. In fact, we have the following recursive definition (of the modified weighted path norm), which is equivalent to the norm~\eqref{PathRes}:
\begin{lemma}\label{thm: recursive_path_res}
The norm~\eqref{PathRes} can be written as a modification of the weighted path norm defined in~\cite{ma2019priori}:
\begin{align}\label{PathResr}
\|\theta\|_{\tP}&=\||\balpha|^T(I+c_{\sigma}|U_L||W_L|)\cdots(I+c_{\sigma}|U_1||W_1|)|V|\|_1\nonumber\\
&\quad+\sum_{i=1}^{L}\||\balpha|^T(I+c_\sigma|U_L||W_L|)(I+c_\sigma|U_{L-1}||W_{L-1}|)\cdots(I+c_\sigma|U_{i+1}||W_{i+1}|) |U_i|\|_1\\
&:=\|\theta\|_{\cP}+ r,\nonumber
\end{align}
where the modification term $r$ can be recursively defined as 
\begin{align}\label{PathResM}
\bM_1 &=\bm{0}_m, \nonumber \\
\bM_{l+1} &=c_{\sigma}|W_{l+1}|\sum_{k=1}^l|U_k|(\bM_k+\bm{1}_m),\quad l=1,2,\cdots,L-1, \\
r &=|\balpha|^T\sum_{l=1}^L|U_l|(\bM_l+\bm{1}_m),\nonumber
\end{align}
by selecting appropriate $\{\bM_1\}_{l=1}^L$. 
\end{lemma}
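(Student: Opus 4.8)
\emph{Proof plan.} The plan is to match the two pieces in \eqref{PathResr} against \eqref{PathRes}: the ``$\|\theta\|_{\cP}$'' piece is literally the $i=0$ summand of \eqref{PathRes} (recall the convention $U_0=V$), so all the work lies in showing that the remaining summands $i=1,\dots,L$ add up to $r=|\balpha|^T\sum_{l=1}^{L}|U_l|(\bM_l+\bm{1}_m)$ with $\{\bM_l\}$ given by the recursion \eqref{PathResM}. The first move is to observe that every matrix occurring in \eqref{PathRes} is entrywise nonnegative --- the $|U_l|,|W_l|$ are entrywise absolute values and $c_\sigma>0$ --- so each $1\times m$ row vector $|\balpha|^T(I+c_\sigma|U_L||W_L|)\cdots(I+c_\sigma|U_{i+1}||W_{i+1}|)|U_i|$ is nonnegative, and its $\ell_1$ norm equals the sum of its entries, i.e. the same expression right-multiplied by $\bm{1}_m$. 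Abbreviating $A_l:=c_\sigma|U_l||W_l|\in\bR^{D\times D}$, this turns $r$ into a purely linear-algebraic object:
\[
r=\sum_{i=1}^{L}|\balpha|^T(I+A_L)(I+A_{L-1})\cdots(I+A_{i+1})\,|U_i|\,\bm{1}_m,
\]
with the convention that the product over an empty index set is $I$.

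Next I would introduce the partial sums $\bm{S}_l:=\sum_{k=1}^{l}|U_k|(\bM_k+\bm{1}_m)\in\bR^{D}$, with $\bm{S}_0:=\bm{0}_D$, and show that the recursion \eqref{PathResM} for $\{\bM_l\}$ is equivalent to the one-step recursion $\bm{S}_l=(I+A_l)\bm{S}_{l-1}+|U_l|\bm{1}_m$. Indeed $\bm{S}_l-\bm{S}_{l-1}=|U_l|\bm{1}_m+|U_l|\bM_l$, and the identity $\bM_l=c_\sigma|W_l|\bm{S}_{l-1}$ (which is exactly \eqref{PathResM}, using $\bm{S}_0=\bm{0}_D$ to cover $l=1$) gives $|U_l|\bM_l=A_l\bm{S}_{l-1}$; the base case $\bM_1=\bm{0}_m$ is consistent with $\bm{S}_1=|U_1|\bm{1}_m$. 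In particular the quantity $r$ defined in \eqref{PathResM} equals $|\balpha|^T\bm{S}_L$.

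Finally I would unroll $\bm{S}_l=(I+A_l)\bm{S}_{l-1}+|U_l|\bm{1}_m$ down to $\bm{S}_0=\bm{0}_D$ --- a one-line induction on the number of steps --- to obtain
\[
\bm{S}_L=\sum_{i=1}^{L}(I+A_L)(I+A_{L-1})\cdots(I+A_{i+1})\,|U_i|\,\bm{1}_m,
\]
so that $|\balpha|^T\bm{S}_L$ matches termwise the expression for $r$ from the first paragraph. Combined with the identification of the $i=0$ term as $\|\theta\|_{\cP}$, this yields \eqref{PathResr}.

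There is no genuine obstacle here: the argument is algebraic bookkeeping. The one place that has to be handled with care is the nonnegativity remark in the first step --- it is what lets us replace the $\ell_1$ norms in \eqref{PathRes} by linear functionals (right multiplication by $\bm{1}$), and without it the sum over $i$ could not be collapsed into a single matrix recursion. It is also worth keeping the dimension conventions straight ($\bM_l\in\bR^{m}$ versus $\bm{S}_l\in\bR^{D}$) and the empty-product convention consistent between \eqref{PathRes} and the unrolling of $\bm{S}_L$.
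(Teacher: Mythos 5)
Your proposal is correct and follows essentially the same route as the paper: the paper's proof (via Theorem~\ref{thm: recursive_path_resHid}) introduces the matrix partial sums $Z_{l+1}=\sum_{k=1}^{l}(I+c_\sigma|U_l||W_l|)\cdots(I+c_\sigma|U_{k+1}||W_{k+1}|)|U_k|$, derives the one-step recursion $Z_{l+1}=(I+c_\sigma|U_l||W_l|)Z_l+|U_l|$, and identifies $\bM_{l+1}=c_\sigma|W_{l+1}|Z_{l+1}\bm{1}_m$ by induction, which is exactly your argument with $\bm{S}_l=Z_{l+1}\bm{1}_m$ contracted against $\bm{1}_m$ from the start. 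Your explicit remark that entrywise nonnegativity is what converts the $\ell_1$ norms into right-multiplication by $\bm{1}_m$ is a point the paper uses only implicitly, and is worth stating.
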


\begin{remark}
    In fact, the $\{\bM_l\}_{l=1}^L$ defined by (\ref{PathResM}) can be viewed as the modification term at the $l$-th layer. That is to say, compared to the original weighted path norm, our new norm (\ref{PathResr}) not only includes those paths starting from the biases in all the layers to the output, but also can be extended to hidden neurons, which is crucial for the proof of Theorem \ref{RadResNet}. Please see Appendix \ref{sec:ExtHidNeu} for more details.
\end{remark}

\item[(2)] We have the following upper bound for the modification terms $\bM_l$ and $r$. This theorem shows that these additional terms are not much larger than the original path norm part. 
\begin{theorem}\label{thm: path_res_control}
For $l=1,2,\cdots,L$, $i=1,2,\cdots,m$, we have
\begin{align}
M_{l,i}&\le\|c_{\sigma}|W_{l}^{i,:}|(I+|U_{l-1}|)(I+c_{\sigma}|W_{l-1}|)\cdots(I+|U_2|)(I+c_{\sigma}|W_2|)(I+|U_1|)\|_1,\label{ModControll} \\
r&\le\||\balpha|^T(I+|U_{L}|)(I+c_{\sigma}|W_{L}|)\cdots(I+|U_2|)(I+c_{\sigma}|W_2|)(I+|U_1|)\|_1,\label{ModControlL}
\end{align}
where $M_{l,i}$ is the $i$-th element of $\bM_{l}$, and $A^{i,:}$ is the $i$-th row of $A$.
\end{theorem}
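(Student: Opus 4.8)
The plan is to prove both estimates directly from the recursive definition~\eqref{PathResM} by induction on the layer index $l$, since $\bM_l$ and $r$ depend on the network only through~\eqref{PathResM}. It is convenient to introduce, for $l\ge 1$, the nonnegative vector $\bm{q}_l := (I+|U_l|)(I+c_\sigma|W_l|)\cdots(I+|U_2|)(I+c_\sigma|W_2|)(I+|U_1|)\bm{1}_m$ (with $\bm{q}_0:=\bm{1}$), so that the right-hand side of~\eqref{ModControll} is exactly the $i$-th entry of $c_\sigma|W_l|\bm{q}_{l-1}$, and the right-hand side of~\eqref{ModControlL} is $|\balpha|^T\bm{q}_L$; here all matrices are entrywise nonnegative, so the $\ell_1$-norms turn into inner products with $\bm{1}$. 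By construction $\bm{q}_l$ obeys the one-step recursion $\bm{q}_l=(I+|U_l|)(I+c_\sigma|W_l|)\bm{q}_{l-1}$ for $l\ge 2$, with $\bm{q}_1=(I+|U_1|)\bm{1}_m$. Expanding the product and discarding nonnegative terms yields the telescoping inequality $\bm{q}_l\ge \bm{q}_{l-1}+|U_l|\bigl(c_\sigma|W_l|\bm{q}_{l-1}+\bm{1}_m\bigr)$, valid whenever $\bm{q}_{l-1}\ge\bm{1}$, which is the structural fact driving the whole argument.

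To make the induction self-contained I would carry three invariants simultaneously: (a) $\bM_l\le c_\sigma|W_l|\bm{q}_{l-1}$; (b) $\sum_{k=1}^l|U_k|(\bM_k+\bm{1}_m)\le\bm{q}_l$; and (c) $\bm{q}_l\ge\bm{1}$ (all inequalities entrywise). The base case $l=1$ is immediate: $\bM_1=\bm{0}_m$ gives (a); $\sum_{k=1}^1|U_k|(\bM_k+\bm{1}_m)=|U_1|\bm{1}_m\le(I+|U_1|)\bm{1}_m=\bm{q}_1$ gives (b); and $\bm{q}_1=\bm{1}+|U_1|\bm{1}_m\ge\bm{1}$ gives (c). For the inductive step, (b) at level $l$ plugged into~\eqref{PathResM} gives $\bM_{l+1}=c_\sigma|W_{l+1}|\sum_{k=1}^l|U_k|(\bM_k+\bm{1}_m)\le c_\sigma|W_{l+1}|\bm{q}_l$, which is (a) at level $l+1$; the $\bm{q}$-recursion together with (c) at level $l$ gives $\bm{q}_{l+1}\ge\bm{q}_l\ge\bm{1}$, which is (c) at level $l+1$; and finally, combining (b) at level $l$, (a) at level $l+1$, and the telescoping inequality, $\sum_{k=1}^{l+1}|U_k|(\bM_k+\bm{1}_m)\le\bm{q}_l+|U_{l+1}|\bigl(c_\sigma|W_{l+1}|\bm{q}_l+\bm{1}_m\bigr)\le\bm{q}_{l+1}$, which is (b) at level $l+1$. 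Invariant (a), read componentwise, is precisely~\eqref{ModControll}.

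With the induction complete, \eqref{ModControlL} drops out at once: by invariant (b) at level $L$, $r=|\balpha|^T\sum_{l=1}^L|U_l|(\bM_l+\bm{1}_m)\le|\balpha|^T\bm{q}_L$, and since $\bm{q}_L$ unfolds to the nonnegative product $(I+|U_L|)(I+c_\sigma|W_L|)\cdots(I+|U_1|)\bm{1}_m$, the right-hand side equals the $\ell_1$-norm in~\eqref{ModControlL}. I expect the genuine difficulty to be bookkeeping rather than conceptual: the skip-connection width $D$ and the residual-block width $m$ need not coincide, so the ``identities'' inside the factors $(I+|U_l|)$ and $(I+c_\sigma|W_l|)$ are rectangular, and one must fix a convention (the zero-padded identities, consistent with the ``paths extended to hidden neurons'' picture of Appendix~\ref{sec:ExtHidNeu}) under which the one-step $\bm{q}$-recursion, the expansion $(I+A)(I+B)=I+A+B+AB$ used for the telescoping inequality, and the reduction of $\ell_1$-norms to inner products with $\bm{1}$ are all valid and point in the right direction; handling the anomalous first layer (no $|W_1|$ factor, $\bM_1=\bm{0}_m$, so $\bm{q}_1=(I+|U_1|)\bm{1}_m$ rather than $(I+|U_1|)(I+c_\sigma|W_1|)\bm{q}_0$) is a minor additional care. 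Everything else reduces to entrywise monotonicity of products of nonnegative matrices and vectors.
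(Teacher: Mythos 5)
Your proof is correct and takes essentially the same route as the paper's: both argue by induction on $l$ from the recursion \eqref{PathResM}, using the same product matrices (your $\bm{q}_l$ is the paper's $Z_l\bm{1}_m$) and the same one-step entrywise inequality $\bm{q}_l\ge\bm{q}_{l-1}+c_\sigma|U_l||W_l|\bm{q}_{l-1}+|U_l|\bm{1}_m$. The only difference is presentational: the paper carries out the telescoping sum $\sum_{k}(Z_k-Z_{k-1})$ explicitly at the end, whereas you fold it into the additional invariant (b); your extra invariant (c) plays the role of the paper's observation $Z_k\ge I$.
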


\item[(3)] In the case of ReLU network without bias terms, we will not have the $+\bm{1}_m$ term in (\ref{PathResM}), the recursive definition of $\bM_l$, hence $\bM_1=\bm{0}_m$ implies $\bM_l=0$ for all $l=1,2,\cdots,L$ and $r=0$. In this case, our norm is equivalent with the weighted path norm studied in~\cite{ma2019priori}. Therefore, while our norm applies to more general cases, it works as good as the norm specifically defined for the case of ReLU activation functions. 
\end{itemize}

The proofs of Lemma \ref{thm: recursive_path_res} and Theorem \ref{thm: path_res_control} are given in Appendix~\ref{prf-recursive_path_res} and Appendix~\ref{prf-path_res_control} respectively.

\subsection{A priori estimates}
An important observation is that by splitting the two-layer neural network into several parts and stacking them vertically, we can obtain a ResNet structure like $(\ref{ResNet})$. Based on this observation and Theorem $\ref{2-NNApp}$, we can obtain the following embedding result, whose proof is given in Appendix \ref{prf-approx-resnet}.

\begin{proposition}\label{pro: approx-resnet}
For any $f\in\cB$ and $L,m\in\mathbb{N}^*$, there exists a residual network $f(x;\tilde{\theta})$ with depth $L$ and width\footnote{Here we refer to the width of residual blocks.} $m$, such that
\begin{align}
    \mathbb{E}_{\bx}\left[f(\bx;\tilde{\theta})-f(\bx)\right]^2&\le \frac{3C_{\sigma}\|f\|^2_{\cB}}{Lm},\label{apperr}
    \\ \|\tilde{\theta}\|_{\tP}&\le 2C_{\sigma,2}\|f\|_{\cB},\label{pthnrm}
\end{align}
where $C_{\sigma,2}=4\gamma(\sigma)+1$.
\end{proposition}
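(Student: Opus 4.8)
\emph{Proof plan.} The idea is to \emph{lift} the two-layer approximation of Theorem~\ref{2-NNApp} into the residual architecture by ``stacking'' narrow sub-networks vertically, so that the resulting ResNet computes \emph{exactly} a wide two-layer network, and then to read off its $\tP$-norm from Definition~\ref{eqn: def-norm-resnet}.

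Concretely, I would first apply Theorem~\ref{2-NNApp} with the width parameter taken to be $Lm$: this yields a two-layer network $f_{Lm}(\bx;\theta)=\sum_{k=1}^{Lm}a_k\sigma(\bb_k^T\bx+c_k)$ with $\EE_{\bx}[f_{Lm}(\bx;\theta)-f(\bx)]^2\le 3C_\sigma\|f\|_{\cB}^2/(Lm)$ and $\|\theta\|_{\tP}=\sum_{k=1}^{Lm}|a_k|(\|\bb_k\|_1+|c_k|+1)\le 2\|f\|_{\cB}$. Next I would realize $f_{Lm}$ as a ResNet of depth $L$ and block width $m$ with skip width $D=d+2$: partition the $Lm$ units into $L$ consecutive groups $B_1,\dots,B_L$ of size $m$; take $V$ to be the matrix whose first $d+1$ rows are $I_{d+1}$ and whose last row is $\bm{0}$, so that $\bh_0=(\tilde{\bx}^T,0)^T$; in block $l$ let the rows of $W_l$ be $(\bb_k^T,c_k,0)$ for $k\in B_l$ (so $W_l$ reads $\tilde{\bx}$ but ignores the last, ``accumulator'' coordinate) and let $U_l$ have all rows zero except the last, which equals $(a_k)_{k\in B_l}$; finally set $\balpha=(0,\dots,0,1)^T$. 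A one-line induction on $l$ gives $\bh_l=\bigl(\tilde{\bx}^T,\ \sum_{k\in B_1\cup\cdots\cup B_l}a_k\sigma(\bb_k^T\bx+c_k)\bigr)^T$, hence $f_L(\bx;\tilde\theta)=\balpha^T\bh_L=f_{Lm}(\bx;\theta)$ identically; in particular \eqref{apperr} is inherited verbatim from \eqref{2-NNapperr}.

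It then remains to estimate $\|\tilde\theta\|_{\tP}$. The structural point is that $W_l$ never reads the accumulator coordinate (the last column of $|W_l|$ is zero) while $|U_l|$ writes only into it, so $|W_{l'}||U_l|=0$ for all $l,l'$; consequently every product of two or more factors $c_\sigma|U_j||W_j|$ vanishes and $\prod_j(I+c_\sigma|U_j||W_j|)=I+c_\sigma\sum_j|U_j||W_j|$, so no $c_\sigma^{L}$-type blow-up occurs. Substituting this into \eqref{PathRes} and using $|\balpha|^T|V|=\bm{0}^T$, the $i\ge 1$ terms collapse to $\sum_{k\in B_i}|a_k|$ and the $i=0$ term collapses to $c_\sigma\sum_{k=1}^{Lm}|a_k|(\|\bb_k\|_1+|c_k|)$, whence
\[
\|\tilde\theta\|_{\tP}=c_\sigma\sum_{k=1}^{Lm}|a_k|(\|\bb_k\|_1+|c_k|)+\sum_{k=1}^{Lm}|a_k|\ \le\ c_\sigma\,\|\theta\|_{\tP}\ \le\ 2\,c_\sigma\,\|f\|_{\cB},
\]
which is exactly \eqref{pthnrm} with $C_{\sigma,2}$ playing the role of the weighting constant $c_\sigma=4\gamma(\sigma)+1$.

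The step I expect to be most delicate is precisely this last norm bookkeeping: one must choose the stacking (in particular, that $W_l$ does not see the output coordinate, and that the output is accumulated in a single fresh coordinate) so that the matrix products truncate rather than telescoping into an exponential factor, and one must check that the single $c_\sigma$ picked up on the input-to-hidden paths is absorbed against the ``$+1$'' already sitting inside $\|\theta\|_{\tP}$, so that the final constant is $4\gamma(\sigma)+1$ and not larger. If one prefers a probabilistic route, the $1/(Lm)$ rate can instead be produced by averaging $L$ i.i.d.\ width-$m$ approximations of $f$ across the blocks via a variance-reduction argument; but since Theorem~\ref{2-NNApp} already delivers the width-$Lm$ network with both estimates, the deterministic splitting above is cleaner.
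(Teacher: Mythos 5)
Your proposal is correct and follows essentially the same route as the paper's proof: apply Theorem~\ref{2-NNApp} at width $Lm$, stack the units into $L$ blocks with an extra accumulator coordinate ($D=d+2$, $V=[I_{d+1}\ 0]^T$, $\balpha=e_D$), observe that $|W_{l'}||U_l|=0$ kills all higher-order products in \eqref{PathRes}, and bound $\|\tilde\theta\|_{\tP}\le c_\sigma\|\theta\|_{\tP}\le 2C_{\sigma,2}\|f\|_{\cB}$. The only cosmetic difference is that you evaluate the closed form \eqref{PathRes} directly, whereas the paper computes the modification terms $\bM_l$ and $r$ via the recursion \eqref{PathResM}; the two computations are equivalent by Lemma~\ref{thm: recursive_path_res}.
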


The above proposition implies that the norm \eqref{PathRes} is well-defined in the sense that it is compatible with the norm \eqref{eqn: modified-path-norm-two} defined for two-layer networks. 
Let us consider the regularized estimator:
\begin{align}
   \hat{\theta}_n =\arg\min_{\theta}J(\theta):=\hat{R}_n(\theta)+\lambda\|\theta\|_{\tilde{\cP}},
\end{align}
where $\lambda>0$ is a tuning parameter. Similar as the work \cite{ma2019priori}, we have 
\begin{proposition}\label{pro: apriori-resnet}
Assume $f^*\in \cB$, and choose
$
\lambda\geq\lambda_n:=((8\gamma(\sigma)+2)\sqrt{2\ln{(2d+2)}}+1)/\sqrt{n}.
$
Then for any $\delta\in(0,1)$, with probability at least $1-\delta$ over the random training samples $\{\bx_i\}_{i=1}^n$, the generalization error satisfies
\begin{align}\label{ResNetPriorEst}
   R(\hat{\theta}_n)\le \frac{3C_{\sigma}\|f^*\|^2_{\cB}}{2Lm}+2C_{\sigma,2}\|f^*\|_{\cB}\cdot\lambda
   +2(C_{\sigma,2}\|f^*\|_{\cB}+1)\cdot\lambda_n+2\sqrt{\frac{2\ln{(14/\delta)}}{n}}.
\end{align} 
Therefore, if we take $\lambda\asymp\lambda_n$, we will have
\begin{align*}
   R(\hat{\theta}_n)\lesssim \frac{\|f^*\|^2_{\cB}}{Lm}+\max\{1,\|f^*\|_{\cB}\}\sqrt{\frac{\ln{d}}{n}}+\sqrt{\frac{\ln{(1/\delta)}}{n}}.
\end{align*}
\end{proposition}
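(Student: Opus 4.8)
The plan is to follow the standard a priori estimate template used for two-layer networks (Theorem \ref{thm: apriori-two-layer}) and for ReLU ResNets in \cite{ma2019priori}, now plugging in the two ResNet-specific ingredients established earlier: the approximation/embedding bound of Proposition \ref{pro: approx-resnet} and the Rademacher complexity bound of Theorem \ref{RadResNet}. Concretely, let $\hat\theta_n$ be the minimizer of $J(\theta)=\hat R_n(\theta)+\lambda\|\theta\|_{\tP}$. The argument has three movements: (i) an optimization inequality coming from the fact that $\hat\theta_n$ beats any competitor; (ii) a generalization-gap bound controlled uniformly over a norm ball via Theorem \ref{RadResNet}; (iii) assembling the pieces and optimizing the free constant in the competitor.

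First I would fix a good competitor. By Proposition \ref{pro: approx-resnet}, for the given $L,m$ there is a ResNet $f(\cdot;\tilde\theta)$ with $\EE_{\bx}[f(\bx;\tilde\theta)-f^*(\bx)]^2\le 3C_\sigma\|f^*\|_{\cB}^2/(Lm)$ and $\|\tilde\theta\|_{\tP}\le 2C_{\sigma,2}\|f^*\|_{\cB}$. Since $\varepsilon_i=0$, $\hat R_n(\tilde\theta)=\frac1n\sum_i\frac12(\cT_{[0,1]}f(\bx_i;\tilde\theta)-f^*(\bx_i))^2\le\frac12\cdot\frac1n\sum_i(f(\bx_i;\tilde\theta)-f^*(\bx_i))^2$ because truncation is $1$-Lipschitz and $f^*\in[0,1]$; but here we need the population version of this bound, so I would instead directly estimate: $J(\hat\theta_n)\le J(\tilde\theta)=\hat R_n(\tilde\theta)+\lambda\|\tilde\theta\|_{\tP}$. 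Taking expectation of $\hat R_n(\tilde\theta)$ and bounding it by $\frac{3C_\sigma\|f^*\|_{\cB}^2}{2Lm}$ (up to a concentration term), together with $\lambda\|\tilde\theta\|_{\tP}\le 2C_{\sigma,2}\|f^*\|_{\cB}\lambda$, controls $J(\tilde\theta)$ and hence $J(\hat\theta_n)$. In particular this bounds $\lambda\|\hat\theta_n\|_{\tP}$, which confines $\hat\theta_n$ to a norm ball of radius roughly $Q^*:=J(\tilde\theta)/\lambda$, a quantity I can make explicit in terms of $\|f^*\|_{\cB}$.

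Next I would bound $R(\hat\theta_n)=R(\hat\theta_n)-\hat R_n(\hat\theta_n)+\hat R_n(\hat\theta_n)$. The second term is $\le J(\hat\theta_n)\le J(\tilde\theta)$ already handled. For the first term, I would invoke the uniform generalization gap: with probability at least $1-\delta$, $\sup_{\|\theta\|_{\tP}\le Q}\,(R(\theta)-\hat R_n(\theta))\lesssim \rad_n(\ell\circ\cF_Q)+\sqrt{\ln(1/\delta)/n}$, then use the Lipschitz contraction lemma (the truncated square loss is bounded in $[0,1/2]$ and $1$-Lipschitz in the truncated prediction) to replace $\rad_n(\ell\circ\cF_Q)$ by $\rad_n(\cF_Q)\le c_\sigma^* Q\sqrt{2\ln(2d+2)/n}$ from Theorem \ref{RadResNet}, with $c_\sigma^*=4\gamma(\sigma)+1$. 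Since $Q$ is random (it is $\|\hat\theta_n\|_{\tP}$), I would use the standard stratification trick over a geometric grid of radii $Q\in\{1,2,4,\dots\}$, paying only a $\log$ factor that gets absorbed, or alternatively note that the choice $\lambda\ge\lambda_n$ is exactly calibrated so that $\lambda\|\theta\|_{\tP}$ dominates the Rademacher term $c_\sigma^*\|\theta\|_{\tP}\sqrt{2\ln(2d+2)/n}$ for all $\theta$ simultaneously, which turns "generalization gap $\le$ Rademacher bound" into "generalization gap $\le \lambda\|\hat\theta_n\|_{\tP} + $ (tail)" without ever needing to know $Q$ in advance — this is the mechanism that produces the clean a priori form. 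Collecting: $R(\hat\theta_n)\le \hat R_n(\hat\theta_n) + \lambda\|\hat\theta_n\|_{\tP} + 2(C_{\sigma,2}\|f^*\|_{\cB}+1)\lambda_n + 2\sqrt{2\ln(14/\delta)/n} \le J(\tilde\theta) + 2(C_{\sigma,2}\|f^*\|_{\cB}+1)\lambda_n + 2\sqrt{2\ln(14/\delta)/n}$, and substituting the bound on $J(\tilde\theta)$ gives exactly \eqref{ResNetPriorEst}; taking $\lambda\asymp\lambda_n$ and recalling $\lambda_n\asymp\gamma(\sigma)\sqrt{\ln d/n}$ yields the final simplified rate.

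The main obstacle is handling the random norm radius cleanly. One must ensure the calibration $\lambda\ge\lambda_n=((8\gamma(\sigma)+2)\sqrt{2\ln(2d+2)}+1)/\sqrt n$ genuinely absorbs the ResNet Rademacher constant $c_\sigma^*=4\gamma(\sigma)+1$ (note $8\gamma(\sigma)+2 = 2c_\sigma^*$, so the factor-of-two slack covers the symmetrization constant and the loss contraction) uniformly in $\theta$, so that no a posteriori knowledge of $\|\hat\theta_n\|_{\tP}$ is needed; getting the constants to line up with those in \eqref{ResNetPriorEst} is where care is required. Everything else — the contraction lemma, the bounded-differences concentration for $\sup(R-\hat R_n)$ and for $\hat R_n(\tilde\theta)$, and the truncation/boundedness bookkeeping — is routine and parallels the two-layer proof of Theorem \ref{thm: apriori-two-layer} verbatim, which is why \cite{ma2019priori}'s argument transfers with only the constants changing.
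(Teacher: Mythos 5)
Your proposal is correct and follows essentially the same route as the paper: the paper's proof is literally a one-line reduction to the two-layer argument of Theorem \ref{thm: apriori-two-layer}, replacing the Rademacher constant by $C'_\sigma=4\gamma(\sigma)+1$ from Theorem \ref{RadResNet} (whence $\lambda_n$ with $2C'_\sigma=8\gamma(\sigma)+2$) and using Proposition \ref{pro: approx-resnet} to bound $R(\tilde\theta)$ and $\|\tilde\theta\|_{\tP}$ in the competitor inequality. Your stratification over dyadic radii versus the paper's arithmetic grid $Q=1,2,\dots$ with $\delta_Q=6\delta/(\pi Q)^2$ is an immaterial variation, and your calibration check $8\gamma(\sigma)+2=2c_\sigma^*$ matches the paper's constants.
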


The proof of Proposition \ref{pro: apriori-resnet} can be found in Appendix \ref{prf-apriori-resnet}.

However, this proposition only implies that residual networks with proper regularization can learn functions in $\cB$, the approximation function space for two-layer neural networks. To fully understand the superiority of residual network compared to the two-layer networks, we need to identify the function spaces specialized for residual networks. 

Consider a scaled version of \eqref{ResNet}, 
\begin{equation}\label{eqn: resnet-0}
    \bh_l = \bh_{l-1} + \frac{1}{Lm} U_l \sigma(W_l \bh_{l-1}).
\end{equation}
There are two potential ways to define the function spaces. 

The first one is to  consider \eqref{eqn: resnet-0} with $L$ fixed. 
Taking $m\to \infty$ in \eqref{ResNet}, we obtain
\begin{equation}\label{eqn: resnet-2}
\begin{aligned}
    \bh_{l} &= \bh_{l-1} + \frac{1}{L}\EE_{(\bu,\bw)\sim\pi_{l}}[\bu \sigma(\bw^T\bh_{l-1})] \\
    f_L(\bx;\Pi) &= \balpha^T\bh_{L},
\end{aligned}
\end{equation}
where $\Pi=\{\pi_l\}_{l=1}^L$.
Following the expression \eqref{PathRes}, we could define the norm of  continuous ResNet \eqref{eqn: resnet-2} by
\begin{equation}\label{eqn: norm-resnet-2}
    \|\Pi\|_{\cC} := \sum_{i=0}^L |\balpha|^T(I+\frac{c_{\sigma}}{L}\EE_{\pi_L}|\bu_L||\bw^T_L|)\cdots(I+\frac{c_{\sigma}}{L}\EE_{\pi_1}|\bu_{i+1}||\bw^T_{i+1}|)\EE_{\pi_i}|\bu_i|.
\end{equation}

Another ways is to take both $L\to\infty$ and $m\to\infty$, we obtain  an ODE
\begin{equation}\label{eqn: resnet-3}
    \frac{d\bh(t)}{dt} = \EE_{(\bu,\bw)\sim\pi_t}[\bu\sigma(\bw^T\bh(t))]. 
\end{equation}
The output function is accordingly defined as $f(\bx;\pi):=\balpha^T\bh(1)$ with $\pi=(\pi_t)_{t=0}^1$. In this case, \eqref{eqn: resnet-0} can be viewed a stochastic approximation of \eqref{eqn: resnet-3}. 
\cite{e2019barron} has developed the function space theory in this way for the case $\sigma$ being ReLU, and the norm is defined through the linearized ODE: $d\bn(t)/dt = \EE_{(\bu,\bw)\sim\pi_t}[|\bu||\bw|^T] \bn(t)$.

As shown in \cite{e2019barron}, to control the Euler-forward discretization error of  \eqref{eqn: resnet-0} approximating \eqref{eqn: resnet-3}, we must make certain continuous assumption on the probability measures $(\pi_t)_{t=0}^1$, i.e. $d(\pi_t,\pi_s)\leq c|t-s|$ for some distance $d(\cdot,\cdot)$. In contrast, there is no restriction for  $\{\pi_l\}_{l=1}^L$ in \eqref{eqn: resnet-2}. 

To define function spaces appropriately in either way requires involved mathematical analysis, which we leave as the future work.

\section{Conclusion}
We propose a simple approach to build complexity measures for neural networks with general activation functions. Using this approach, we derive the path-based norms for two-layer networks and deep residual networks.
Moreover, these norms are well-defined in the sense that the Rademacher complexity can be bounded by them without depending on the network size. This property enables us to study the infinitely wide and deep networks, i.e. the function spaces induced by these norms containing functions learnable by corresponding models.

One direct future work is to complete the definition of function spaces for residual networks, which can be viewed as an extension of the function spaces defined in  \cite{e2019barron} but for deep residual networks with ReLU activation function. 

It would also be interesting to extend our results to  fully connected networks. However, this is more challenging, since we are still far away from understanding the approximation function spaces for multilayer fully connected networks, even for the three-layer case \cite{sirignano2019deep,nguyen2019mean}.

\bibliographystyle{plain}
\bibliography{ref}

\appendix

\section{Basic tools}\label{sec:BasicTools}
\subsection{Weak derivatives}\label{sec:WD}

In this section, we provide the definition of weak derivatives as well as some results that will be used in the following analysis.
We denote the classical derivatives of a one-dimensional function $u$ by $\frac{du}{dx}$, $\frac{d^2u}{dx^2}$, $\cdots$, $\frac{d^nu}{dx^n}$, and the weak derivatives by $u'$, $u''$, $\cdots$, $u^{(n)}$. 

\begin{definition}\label{def:1}
We say a function $u$ defined on $\bR$ has some properties $P$ locally, if $u$ has properties $P$ on any compact subset of $\bR$.
\end{definition}

\begin{definition}\label{def:2}
Assume that $u(x)$ is locally Lebesgue integrable on $\bR$. If there exists another locally Lebesgue integrable function $v(x)$, which satisfies the integration by parts
\begin{equation*}
\int_{\bR} u(x)\frac{d^n\phi(x)}{dx^n}dx=(-1)^n\int_{\bR} v(x)\phi(x)dx,\quad \forall \phi\in C_0^{\infty}(\bR),
\end{equation*}
then $u$ is $n$-th order weakly differentiable on $\bR$, and $v$ is the derivative, i.e. $v=u^{(n)}$. 
\end{definition}

Now we can state the following theorems required to prove Theorem \ref{1DApp}.

\begin{theorem}\label{NLFormula}
Assume that $u$ is absolutely continuous on $[a,b]$. Then $u$ is (classical) differentiable almost everywhere on $[a,b]$. Furthermore, we have $\frac{du}{dx}\in L^1[a,b]$ and
\begin{equation*}
\int_a^b \frac{du}{dx}dx=u(b)-u(a).
\end{equation*}
\end{theorem}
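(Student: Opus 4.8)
The plan is to recover this classical statement (the Fundamental Theorem of Calculus for absolutely continuous functions) from Lebesgue's differentiation theory in three movements: first establish a.e.\ differentiability and $L^1$-integrability of $\tfrac{du}{dx}$, then reduce the integral identity to a rigidity claim, and finally prove that claim via a Vitali covering argument.

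First I would note that, being absolutely continuous on $[a,b]$, $u$ is of bounded variation there, hence $u=u_1-u_2$ with $u_1,u_2$ nondecreasing. Lebesgue's theorem on differentiation of monotone functions then gives that $u_1',u_2'$ exist a.e., so $\tfrac{du}{dx}$ exists a.e. For integrability it suffices to treat a single nondecreasing $g$: extending $g$ by $g(x)=g(b)$ for $x>b$, the nonnegative quotients $g_n(x)=n\bigl(g(x+1/n)-g(x)\bigr)$ converge a.e.\ to $g'$, and Fatou's lemma yields
\[
\int_a^b g'\,dx \le \liminf_{n}\int_a^b g_n\,dx = \liminf_{n} n\Bigl(\int_b^{b+1/n}g - \int_a^{a+1/n}g\Bigr)\le g(b)-g(a),
\]
so $g'\in L^1$, and hence $\tfrac{du}{dx}\in L^1[a,b]$.

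Next I would set $F(x)=\int_a^x \tfrac{du}{dt}\,dt$. Then $F$ is absolutely continuous, and by the Lebesgue differentiation theorem $\tfrac{dF}{dx}=\tfrac{du}{dx}$ a.e.; consequently $h:=u-F$ is absolutely continuous with $h'=0$ a.e. The theorem follows once we show such an $h$ is constant, since then $u(x)=F(x)+u(a)$ for all $x\in[a,b]$, and $x=b$ gives the asserted formula. To prove $h$ is constant, fix $c\in(a,b]$ and $\eta>0$, and let $\delta>0$ be the constant corresponding to $\eta$ in the definition of absolute continuity of $h$. The set $E=\{x\in(a,c):h'(x)=0\}$ has measure $c-a$; for each $x\in E$ there are arbitrarily small $t>0$ with $[x,x+t]\subset(a,c)$ and $|h(x+t)-h(x)|\le \eta t$. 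These intervals form a Vitali cover of $E$, so finitely many pairwise disjoint ones $[x_k,x_k+t_k]$, $k=1,\dots,N$, cover $(a,c)$ except for a set of measure $<\delta$. Writing the complement as finitely many gap intervals of total length $<\delta$, absolute continuity bounds the total variation of $h$ over the gaps by $\eta$, while $\sum_k|h(x_k+t_k)-h(x_k)|\le\eta\sum_k t_k\le \eta(c-a)$; telescoping over the ordered intervals and gaps gives $|h(c)-h(a)|\le \eta(c-a)+\eta$, and letting $\eta\to 0$ forces $h(c)=h(a)$.

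The main obstacle is this last step: converting the merely a.e.\ vanishing of $h'$ into a global bound on $h(c)-h(a)$ requires the Vitali covering lemma, and one must arrange the selected disjoint ``good'' intervals (small because $t$ may be taken small) together with the leftover ``bad'' gaps (small by absolute continuity) to jointly exhaust $(a,c)$, so that both contributions are controlled simultaneously. Everything before that is bookkeeping layered on top of the standard Lebesgue differentiation theorems.
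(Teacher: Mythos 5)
Your proof is correct: it is the standard textbook argument (Jordan decomposition plus Lebesgue's monotone differentiation theorem and Fatou for integrability, then reduction to the rigidity lemma that an absolutely continuous function with a.e.\ vanishing derivative is constant, proved by a Vitali covering argument). The paper does not actually prove Theorem~\ref{NLFormula} --- it only remarks that the proof can be found in any standard real analysis textbook --- and your argument is precisely that standard proof, so there is nothing to compare beyond noting that you have filled in the details the paper deliberately omits.
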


The proof of Theorem $\ref{NLFormula}$ can be found in any standard textbooks on real analysis.

\begin{theorem}\label{WD-AC}
Assume that $u(x)$ is locally Lebesgue integrable on $\bR$. Then $u$ is weakly differentiable $\Leftrightarrow$ $u$ is locally absolutely continuous on $\bR$.
\end{theorem}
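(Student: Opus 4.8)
\textbf{Proof proposal for Theorem \ref{WD-AC}.}

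The plan is to prove the two implications separately, and on each side the only issue is the ``locally'' localization, since the corresponding statement on a fixed compact interval $[a,b]$ is the classical Fundamental-Theorem-of-Calculus characterization of absolute continuity (Theorem \ref{NLFormula} and its converse). So the first thing I would do is reduce everything to a fixed interval: by Definition \ref{def:1}, ``locally absolutely continuous'' means absolutely continuous on every $[a,b]\subset\bR$, and I will show that being weakly differentiable (on all of $\bR$, in the sense of Definition \ref{def:2}) is equivalent to being absolutely continuous on each such $[a,b]$.

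($\Leftarrow$) Assume $u$ is locally absolutely continuous. Fix any test function $\phi\in C_0^\infty(\bR)$ and let $[a,b]$ be a compact interval containing $\mathrm{supp}(\phi)$ in its interior. On $[a,b]$ both $u$ and $\phi$ are absolutely continuous, hence so is their product $u\phi$, and by Theorem \ref{NLFormula} applied to $u\phi$ we get the integration-by-parts formula $\int_a^b \frac{du}{dx}\phi\,dx = -\int_a^b u\frac{d\phi}{dx}\,dx$, where the boundary terms vanish because $\phi(a)=\phi(b)=0$. Since $\phi$ vanishes outside $[a,b]$, this is exactly $\int_\bR \frac{du}{dx}\phi\,dx = -\int_\bR u\frac{d\phi}{dx}\,dx$, so the classical a.e.-derivative $\frac{du}{dx}$ (which is in $L^1[a,b]$ for every $[a,b]$, hence locally $L^1$ on $\bR$) serves as the weak derivative $u'=v$ in the sense of Definition \ref{def:2}. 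This handles $n=1$; higher-order weak differentiability is not claimed here, so I stop at first order.

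($\Rightarrow$) Assume $u$ is weakly differentiable with weak derivative $v$, both locally Lebesgue integrable. Fix $[a,b]$ and define $\tilde u(x) = u(a) + \int_a^x v(t)\,dt$ for $x\in[a,b]$; this $\tilde u$ is absolutely continuous on $[a,b]$ with $\frac{d\tilde u}{dx}=v$ a.e. (standard property of the indefinite Lebesgue integral), and by the $(\Leftarrow)$ direction it has weak derivative $v$ as well. Then $u-\tilde u$ has zero weak derivative on the interior of $[a,b]$, and the standard fact that a locally integrable function with vanishing weak derivative on an interval is a.e. equal to a constant (proved by the usual mollification / du Bois-Reymond argument: test against $\phi'$ where $\phi$ is an antiderivative of an arbitrary mean-zero bump) gives $u = \tilde u + \text{const}$ a.e. on $[a,b]$. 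Hence $u$ agrees a.e. with an absolutely continuous function on each $[a,b]$; identifying $u$ with its absolutely continuous representative (as is implicit in the statement) shows $u$ is locally absolutely continuous.

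The main obstacle — really the only non-bookkeeping point — is the du Bois-Reymond lemma in the $(\Rightarrow)$ direction: that a locally $L^1$ function with vanishing distributional derivative on an interval is a.e. constant. Everything else is the routine localization of the classical FTC characterization of absolute continuity plus careful handling of the ``a.e. representative'' caveat. I would either cite this lemma from a standard PDE/real-analysis reference or include the one-line mollifier proof; since the paper treats Theorem \ref{NLFormula} as textbook material, I expect it is fine to treat this companion fact the same way.
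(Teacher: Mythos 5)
Your proof is correct. Note, though, that the paper does not actually prove Theorem \ref{WD-AC}: it defers to a PDE textbook (Evans), remarking only that a ``smoother'' (mollification) technique is used there. So there is no in-paper argument to compare against line by line; what you have written is the standard direct alternative. Your $(\Leftarrow)$ direction, via absolute continuity of the product $u\phi$ on a compact interval containing $\mathrm{supp}(\phi)$ and Theorem \ref{NLFormula}, is sound, and restricting to first order is appropriate since that is all the theorem asserts. Your $(\Rightarrow)$ direction correctly isolates the one genuinely non-routine ingredient, the du Bois-Reymond lemma (a locally integrable function with vanishing weak derivative is a.e.\ constant), which is exactly where the mollification argument alluded to in the paper would enter; your antiderivative-of-a-mean-zero-bump proof of it is the standard one. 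You also handle the necessary caveat that a weakly differentiable $u$ is only a.e.\ equal to a locally absolutely continuous function, so the conclusion is about the continuous representative --- this is implicit in the paper's statement and in how Corollary \ref{cly:1} and Corollary \ref{cly:2} are later used. The only bookkeeping you leave tacit is that the absolutely continuous representatives obtained on different intervals $[a,b]$ patch into a single global one (they agree a.e., hence everywhere, on overlaps by continuity), but that is immediate.
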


The proof of Theorem $\ref{WD-AC}$ can be found in PDE textbooks like \cite{evans2010partial}, where a smoother technique is applied. More importantly, some intermediate results in the proof are listed below, which will be used repeatedly in the proof of Theorem~\ref{1DApp}.

\begin{corollary}\label{cly:1}
If $u$ is locally absolutely continuous on $\bR$, then $\frac{du}{dx}$ exists almost everywhere on $\bR$, and $u'(x)=\frac{du}{dx}$. 
\end{corollary}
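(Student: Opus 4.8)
## Proof Plan for Corollary \ref{cly:1}

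The plan is to extract this statement directly from the structure of the proof of Theorem \ref{WD-AC}, since the corollary is essentially one of the intermediate facts established along the way in the standard PDE-textbook argument. First I would recall the setup: $u$ is locally absolutely continuous on $\bR$, meaning $u$ is absolutely continuous on every compact interval $[a,b]\subset\bR$. The goal splits into two pieces: (i) the classical derivative $\frac{du}{dx}$ exists a.e. on $\bR$, and (ii) this classical derivative coincides (a.e., as an equivalence class of locally Lebesgue integrable functions) with the weak derivative $u'$.

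For part (i), I would apply Theorem \ref{NLFormula} on each interval $[-N,N]$ for $N\in\mathbb{N}$: absolute continuity on $[-N,N]$ gives that $\frac{du}{dx}$ exists a.e. on $[-N,N]$ and lies in $L^1[-N,N]$. Taking the union over $N$, the classical derivative exists a.e. on all of $\bR$ and is locally Lebesgue integrable. For part (ii), the key tool is the fundamental theorem of calculus for absolutely continuous functions (Theorem \ref{NLFormula}), which yields
\[
u(x) = u(a) + \int_a^x \frac{du}{dt}\,dt
\]
for all $x$ in any compact interval containing $a$. Then for any test function $\phi\in C_0^\infty(\bR)$, supported in some $[a,b]$, I would compute $\int_\bR u(x)\phi'(x)\,dx$ by substituting this integral representation and applying Fubini's theorem (justified since $\frac{du}{dt}\cdot\phi'$ is integrable on the bounded support), which produces $-\int_\bR \frac{du}{dx}\phi(x)\,dx$ after integrating the inner integral by parts against $\phi$ and using that $\phi$ vanishes at the endpoints. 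This is exactly the defining identity in Definition \ref{def:2} with $n=1$, so $\frac{du}{dx}$ serves as the weak derivative, and by uniqueness of weak derivatives (up to a.e. equality) we conclude $u' = \frac{du}{dx}$.

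The only mildly delicate point — and hence the part I would treat most carefully — is the Fubini/integration-by-parts interchange: one must confirm that swapping the order of integration in $\int_a^b \phi'(x)\big(\int_a^x \frac{du}{dt}\,dt\big)dx$ is legitimate, which follows because $\frac{du}{dt}\in L^1[a,b]$ and $\phi'$ is bounded with compact support, so the double integral is absolutely convergent over the triangular region $\{a\le t\le x\le b\}$. Beyond that, everything is a routine invocation of the stated real-analysis theorems, so the proof is short. In fact, since the paper explicitly says these intermediate results are part of the proof of Theorem \ref{WD-AC}, I would keep the write-up brief, citing \cite{evans2010partial} for the underlying mollification argument and only sketching the FTC-plus-Fubini computation above.
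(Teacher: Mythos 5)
Your proposal is correct and follows the standard textbook argument that the paper itself merely cites (the paper states this corollary without proof, as an intermediate result of the proof of Theorem \ref{WD-AC} in \cite{evans2010partial}). Your two-step argument---Theorem \ref{NLFormula} on each compact interval for a.e.\ existence and local integrability of $\frac{du}{dx}$, then the FTC representation plus Fubini over the triangle $\{a\le t\le x\le b\}$ to verify the integration-by-parts identity of Definition \ref{def:2}---is exactly the right (and complete) way to establish both claims.
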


\begin{corollary}\label{cly:2}
If $u$ is Lebesgue integrable and weakly differentiable.
%$u\in L^1(\bR)\cap D_w^1(\bR)$. 
Then for any closed interval $[a,b]$, we have 
\begin{equation}\label{NLw}
\int_a^b u'(x)dx=u(b)-u(a).
\end{equation}
\end{corollary}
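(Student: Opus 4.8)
The plan is to reduce this statement to the classical fundamental theorem of calculus for absolutely continuous functions, i.e. Theorem \ref{NLFormula}, using the equivalence between weak differentiability and local absolute continuity (Theorem \ref{WD-AC}) together with the identification of the weak and classical derivatives (Corollary \ref{cly:1}). The corollary is essentially a packaging of these three ingredients, so the argument is a short chain of implications rather than a computation, and I would present it as such.

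Concretely, first I would invoke Theorem \ref{WD-AC}: since $u$ is weakly differentiable, it is locally absolutely continuous on $\bR$. Restricting to the compact interval $[a,b]$, local absolute continuity yields that $u$ is absolutely continuous on $[a,b]$. Second, I would apply Theorem \ref{NLFormula} to this interval, which gives that the classical derivative $\frac{du}{dx}$ exists almost everywhere on $[a,b]$, lies in $L^1[a,b]$, and satisfies $\int_a^b \frac{du}{dx}\,dx = u(b) - u(a)$. Third, I would invoke Corollary \ref{cly:1}, which states that for a locally absolutely continuous function the weak derivative coincides with the classical derivative almost everywhere, so $u'(x) = \frac{du}{dx}$ for a.e. $x$. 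Since the two derivatives agree off a null set, their integrals over $[a,b]$ coincide, and the desired identity \eqref{NLw} follows immediately.

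The only genuine subtlety, and the point I would treat most carefully, is the interpretation of the endpoint values $u(a)$ and $u(b)$. A priori $u$ is given only as an integrable element of $L^1$, hence defined up to a set of measure zero, so pointwise values are not meaningful without selecting a representative. However, Theorem \ref{WD-AC} supplies a locally absolutely continuous, and in particular continuous, representative, and it is this representative whose endpoint values enter \eqref{NLw}; I would state explicitly that $u$ is identified with its continuous representative throughout, so that $u(a)$ and $u(b)$ are well defined. I would also remark that the global Lebesgue integrability hypothesis is stronger than what the local identity actually requires: local Lebesgue integrability already suffices both to define the weak derivative (Definition \ref{def:2}) and to run the entire argument on the compact interval $[a,b]$, so the stated global assumption plays no essential role in the proof and is retained only for consistency with the surrounding setting.
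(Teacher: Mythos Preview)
Your proposal is correct and matches the paper's treatment: the paper does not give a standalone proof of this corollary but presents it as an intermediate consequence of the proof of Theorem~\ref{WD-AC} (referred to Evans), and your argument is precisely the assembly of Theorem~\ref{WD-AC}, Theorem~\ref{NLFormula}, and Corollary~\ref{cly:1} that the paper has set up for this purpose. Your remark on selecting the continuous representative is a valid and useful clarification that the paper leaves implicit.
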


We also need a proposition, which can be seen as a weak form of the mean value theorem for integrals.

\begin{proposition}\label{prop:1}
If $u\in L^1[a,b]$, then
\begin{align}\label{Mv1}
m\left(\left\{c\in [a,b]: u(c)(b-a)\le \int_{a}^b u(x)dx\right\}\right)>0,
\end{align}
and
\begin{align}\label{Mv2}
m\left(\left\{c\in [a,b]: \int_{a}^b u(x)dx \le u(c)(b-a)\right\}\right)>0,
\end{align}
where $m(E)$ denotes the measure of a measurable set $E$. Therefore, there always exists $c_1,c_2\in [a,b]$, such that
\begin{align}\label{Mv3}
u(c_1)(b-a)\le \int_{a}^b u(x)dx \le u(c_2)(b-a).
\end{align}
\end{proposition}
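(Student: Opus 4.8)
\textbf{Proof proposal for Proposition \ref{prop:1}.}

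The plan is to prove the two measure-positivity statements \eqref{Mv1} and \eqref{Mv2}; the sandwich \eqref{Mv3} then follows immediately by picking any point $c_1$ from the first set and any $c_2$ from the second set, both of which are nonempty because they have positive measure. By symmetry (replace $u$ by $-u$), it suffices to establish \eqref{Mv1}. Write $\bar u := \frac{1}{b-a}\int_a^b u(x)\,dx$ for the average value, so that \eqref{Mv1} is the claim that $E := \{c\in[a,b] : u(c)\le \bar u\}$ has positive measure.

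I would argue by contradiction. Suppose $m(E)=0$. Then $u(c) > \bar u$ for almost every $c\in[a,b]$. Integrating this strict pointwise inequality over $[a,b]$ and using that $u\in L^1[a,b]$, we get
\begin{equation*}
\int_a^b u(x)\,dx > \int_a^b \bar u\,dx = \bar u\,(b-a) = \int_a^b u(x)\,dx,
\end{equation*}
a contradiction. The only subtlety here is that integrating a strict inequality that holds a.e. over a set of positive measure does yield a strict inequality between the integrals: if $u - \bar u > 0$ a.e. on $[a,b]$ but $\int_a^b (u-\bar u)\,dx = 0$, then the nonnegative integrable function $u-\bar u$ must vanish a.e., contradicting $u-\bar u>0$ a.e. on a set of positive measure $b-a>0$. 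This handles the main step cleanly.

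The only real obstacle, and it is a minor one, is making the a.e.-positivity-to-strict-integral-inequality step airtight rather than hand-waving it; everything else is bookkeeping. Once \eqref{Mv1} and \eqref{Mv2} are in hand, \eqref{Mv3} is automatic, so no further work is needed. I expect the whole argument to be a short paragraph.
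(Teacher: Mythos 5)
Your proposal is correct and follows essentially the same argument as the paper: assume the set has measure zero, so $u>\bar u$ a.e., and integrate to obtain the contradiction $\int_a^b u > \int_a^b u$. Your extra care in justifying that an a.e.-strict inequality integrates to a strict inequality actually tightens a step the paper glosses over.
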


\begin{proof}
We only prove $(\ref{Mv1})$, since the proof of $(\ref{Mv2})$ is similar. Consider the set 
\begin{align*}
A&=\left\{c\in [a,b]: u(c)(b-a)\le \int_{a}^b u(x)dx\right\}\\
&=u^{-1}\left(\left(-\infty,\frac{1}{b-a}\int_{a}^b u(x)dx\right]\right).
\end{align*}
Obviously it is Lebesgue measurable. If $m(A)=0$, i.e. if for almost every $c\in[a,b]$, 
\begin{equation*}
u(c)>\frac{1}{b-a}\int_{a}^b u(x)dx, 
\end{equation*}
then integrating in both sides, we obtain that
\begin{equation*}
\int_{a}^b u(x)dx>\int_{a}^b u(x)dx, 
\end{equation*}
which is a contraction. 
\end{proof}

\subsection{Redemacher complexity}\label{sec:BasicRad}
The following standard results will be repeatedly used in our analysis, the proof of which can be found from classic machine learning theory books, e.g. \cite{shalev2014understanding}.

\begin{lemma}\label{RadLin}
(Linear functions) Given the samples $\{\bx_i\}_{i=1}^n\subset\mathbb{R}^d$, and the class of linear functions $\mathcal{H}=\{h(\bx)=\bu^T\bx:\|\bu\|_1\le 1\}$. Then
\begin{equation}
\rad_n(\mathcal{H})\le \max_{1\le i\le n}\|\bx_i\|_{\infty}\sqrt{\frac{2\ln(2d)}{n}}.
\end{equation}
\end{lemma}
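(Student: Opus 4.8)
\textbf{Proof proposal for Lemma \ref{RadLin}.}

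The plan is to reduce the supremum over the unit $\ell_1$-ball to a maximum over the finitely many vertices of that ball, and then invoke the standard maximal inequality (Massart's lemma) for a finite collection of sub-Gaussian random variables. First I would write out the empirical Rademacher complexity as
\[
\rad_n(\cH) = \frac{1}{n}\EE_{\bxi}\Big[\sup_{\|\bu\|_1\le 1} \sum_{i=1}^n \xi_i \bu^T\bx_i\Big]
= \frac{1}{n}\EE_{\bxi}\Big[\sup_{\|\bu\|_1\le 1} \bu^T\Big(\sum_{i=1}^n \xi_i \bx_i\Big)\Big].
\]
The key observation is that for a fixed vector $\bv := \sum_{i=1}^n \xi_i \bx_i$, the linear functional $\bu \mapsto \bu^T\bv$ over the $\ell_1$-ball attains its maximum $\|\bv\|_\infty$ at one of the $2d$ signed standard basis vectors $\{\pm \be_j\}_{j=1}^d$. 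Hence the supremum inside the expectation equals $\max_{j\in[d]} |\sum_{i=1}^n \xi_i x_{i,j}|$, which in turn equals $\max_{s\in\{\pm1\},\,j\in[d]} s\sum_{i=1}^n \xi_i x_{i,j}$, a maximum of $2d$ random variables.

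Next I would bound each of these $2d$ random variables in the sub-Gaussian sense: for fixed $s,j$, the variable $s\sum_{i=1}^n \xi_i x_{i,j}$ is a sum of independent, mean-zero terms bounded in absolute value by $|x_{i,j}| \le \max_i \|\bx_i\|_\infty =: B$, so by Hoeffding's lemma it is sub-Gaussian with variance proxy $\sum_{i=1}^n x_{i,j}^2 \le n B^2$. Then Massart's finite maximal inequality gives
\[
\EE_{\bxi}\Big[\max_{s,j} s\sum_{i=1}^n \xi_i x_{i,j}\Big] \le \sqrt{2 n B^2 \ln(2d)} = B\sqrt{2n\ln(2d)}.
\]
Dividing by $n$ yields $\rad_n(\cH) \le B \sqrt{2\ln(2d)/n} = \max_{1\le i\le n}\|\bx_i\|_\infty \sqrt{2\ln(2d)/n}$, as claimed.

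There is no real obstacle here since this is a textbook result; the only point requiring a little care is the reduction step — justifying that the supremum of a linear functional over the $\ell_1$-ball is the $\ell_\infty$-norm of the coefficient vector and is achieved at a signed basis vector, so that Massart's lemma (which needs a finite set) applies. Everything else is the routine combination of Hoeffding's lemma with the finite maximal inequality, and I would simply cite \cite{shalev2014understanding} for those two ingredients rather than reproving them.
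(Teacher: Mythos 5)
Your proof is correct and is precisely the standard argument (reduction of the $\ell_1$-ball supremum to the $2d$ signed basis vectors, followed by Hoeffding plus Massart's finite maximal inequality) that the paper itself does not reproduce but instead cites from \cite{shalev2014understanding}. There is nothing to compare beyond that: the paper treats this as a known textbook lemma, and your write-up supplies exactly the proof the cited reference gives.
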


\begin{lemma}\label{RadContraction}
(Contraction property) Assume that $\{\phi_{i}(\cdot)\}_{i=1}^n$ are Lipschitz continuous functions with a uniform Lipschitz constant $L_{\phi}$, i.e. $|\phi_{i}(x)-\phi_{i}(x')|\le L_{\phi}|x-x'|$, $x, x'\in \mathbb{R}$, $i=1,2,\cdots,n$. Then
\begin{equation}
\mathbb{E}_{\bxi}\left[\sup_{h\in\mathcal{H}}\sum_{i=1}^n\xi_i\phi_i(h(\bx_i))\right]\le L_{\phi}
\mathbb{E}_{\bxi}\left[\sup_{h\in\mathcal{H}}\sum_{i=1}^n\xi_ih(\bx_i)\right].
\end{equation}
\end{lemma}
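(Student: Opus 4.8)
The plan is to establish the inequality by the standard peeling argument: we extract one Rademacher variable at a time and replace the corresponding $\phi_i$ by the identity map, losing nothing in the bound at each step. First I would reduce to the case $L_\phi=1$. Indeed, setting $\psi_i := \phi_i/L_\phi$ makes each $\psi_i$ a $1$-Lipschitz function, and since $\sum_i \xi_i \phi_i(h(\bx_i)) = L_\phi \sum_i \xi_i \psi_i(h(\bx_i))$, the general case follows by pulling the factor $L_\phi$ out of the supremum and the expectation.

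The heart of the argument is a single-coordinate contraction. Fix one sample point, say $\bx=\bx_n$, let $\phi:=\phi_n$ be $1$-Lipschitz, and let $u:\mathcal{H}\to\bR$ be an arbitrary function (which will later absorb the contributions of all the other coordinates). I claim that, for a single Rademacher variable $\xi$,
\[
\EE_{\xi}\left[\sup_{h\in\mathcal{H}}\bigl(u(h)+\xi\,\phi(h(\bx))\bigr)\right]\le \EE_{\xi}\left[\sup_{h\in\mathcal{H}}\bigl(u(h)+\xi\,h(\bx)\bigr)\right].
\]
To prove this I would expand the expectation over the two equally likely values $\xi=\pm1$, so that the left-hand side equals $\tfrac12\sup_{h}[u(h)+\phi(h(\bx))]+\tfrac12\sup_{h'}[u(h')-\phi(h'(\bx))]$. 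For any $\eta>0$ pick near-optimizers $h_1,h_2\in\mathcal{H}$ for the two suprema; using the Lipschitz bound $\phi(h_1(\bx))-\phi(h_2(\bx))\le|h_1(\bx)-h_2(\bx)|$ together with the identity $\tfrac12|a-b|=\max\{\tfrac12(a-b),\tfrac12(b-a)\}$, the combined quantity is bounded by the maximum of $\tfrac12[u(h_1)+h_1(\bx)]+\tfrac12[u(h_2)-h_2(\bx)]$ and $\tfrac12[u(h_1)-h_1(\bx)]+\tfrac12[u(h_2)+h_2(\bx)]$. Each of these two expressions is at most the right-hand side by the definition of the suprema, so letting $\eta\to0$ finishes the single-coordinate claim.

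With this in hand I would complete the proof by induction on the number of contracted coordinates. Conditioning on $\xi_1,\dots,\xi_{n-1}$ and setting $u(h)=\sum_{i=1}^{n-1}\xi_i\phi_i(h(\bx_i))$, the single-coordinate contraction applied to $\xi_n$ replaces $\phi_n$ by the identity inside the supremum without increasing the conditional expectation; taking the outer expectation over the remaining Rademacher variables preserves the inequality. Repeating this, peeling off $\phi_{n-1},\phi_{n-2},\dots,\phi_1$ in turn — at each stage folding the already-linearized terms into the arbitrary function $u$ — replaces every $\phi_i$ by the identity and yields the claimed bound (after restoring the factor $L_\phi$).

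The main obstacle is the single-coordinate step: it is the only place where the Lipschitz hypothesis is used, and it is delicate because one must compare two suprema taken over possibly distinct near-optimal hypotheses $h_1\neq h_2$. The trick is precisely that the Lipschitz property converts the difference of $\phi$-values into the difference of arguments $h_1(\bx)-h_2(\bx)$, which is exactly the quantity generated by the $\pm1$ values of $\xi$ on the right-hand side; this is what allows the two suprema to recombine. Everything else is the routine bookkeeping of the induction.
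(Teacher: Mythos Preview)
Your argument is correct: it is exactly the standard Ledoux--Talagrand peeling proof of the contraction lemma. The paper does not actually prove this lemma itself but defers to a textbook reference (Shalev-Shwartz and Ben-David), so your proposal is in line with what the paper cites and no comparison is needed.
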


We have the Rademacher complexity-based generalization bound as follows.
\begin{theorem}\label{RadGenGap}
Given a function class $\mathcal{H}$, for any $\delta\in(0,1)$, with probability at least $1-\delta$ over the random samples $\{\bx_i\}_{i=1}^n$,
\begin{align}
  \sup_{h\in\mathcal{H}}\left|\mathbb{E}_{\bx}[h(\bx)]-\frac{1}{n}\sum_{i=1}^nh(\bx_i)\right|\le 2\rad_n(\mathcal{H})+2\sup_{h,h'\in\mathcal{H}}\|h-h'\|_{\infty}\sqrt{\frac{2\ln{(4/\delta)}}{n}}.
\end{align} 
\end{theorem}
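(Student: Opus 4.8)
The final statement to prove is Theorem~\ref{RadGenGap}, the Rademacher-complexity-based uniform bound on the generalization gap. The plan is to follow the standard symmetrization argument, since the theorem is a textbook result and the only mild non-standard feature is the appearance of $\sup_{h,h'\in\cH}\|h-h'\|_\infty$ in place of a uniform bound on $\|h\|_\infty$.

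\textbf{Step 1: bounded-differences concentration.} First I would consider the functional $\Phi(\bx_1,\dots,\bx_n):=\sup_{h\in\cH}\left(\EE_{\bx}[h(\bx)]-\tfrac1n\sum_{i=1}^n h(\bx_i)\right)$. Changing one sample $\bx_i$ to $\bx_i'$ changes $\Phi$ by at most $\tfrac1n\sup_{h,h'\in\cH}\|h-h'\|_\infty =: \tfrac1n D$ (using that the $\sup$ of a difference of suprema is bounded by the sup of the difference, and that swapping one point only affects the empirical average through that coordinate). McDiarmid's inequality then gives, with probability at least $1-\delta/2$, that $\Phi \le \EE[\Phi] + D\sqrt{\tfrac{\ln(2/\delta)}{2n}}$. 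The same argument applied to $-\Phi$ and to the sup of the negative part handles the absolute value, with an extra union bound absorbing another $\delta/2$; tracking the constants carefully yields the $2\sqrt{2\ln(4/\delta)/n}$ form stated.

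\textbf{Step 2: symmetrization.} Next I would bound $\EE[\Phi]$ by the Rademacher complexity via the usual ghost-sample trick: introduce an independent copy $S'=\{\bx_i'\}$, write $\EE_{\bx}[h(\bx)] = \EE_{S'}[\tfrac1n\sum_i h(\bx_i')]$, pull the sup inside the expectation over $S'$ (Jensen), then insert i.i.d.\ Rademacher signs $\xi_i$ since $h(\bx_i')-h(\bx_i)$ is symmetric. Splitting the resulting sup of a difference into two terms, each equal in distribution to $\EE_{\bxi,S}[\sup_h \tfrac1n\sum_i \xi_i h(\bx_i)] = \rad_n(\cH)$, gives $\EE[\Phi]\le 2\rad_n(\cH)$. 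Combining with Step~1 finishes the proof.

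\textbf{Main obstacle.} There is no real obstacle here — this is a classical result and a reference is already cited. The only point requiring a little care is bookkeeping the constants so that the McDiarmid tail and the two-sided union bound produce exactly the coefficient $2$ in front of both $\rad_n(\cH)$ and the $\sqrt{2\ln(4/\delta)/n}$ term, and making sure the bounded-differences constant is $D/n$ with $D=\sup_{h,h'}\|h-h'\|_\infty$ rather than $2\sup_h\|h\|_\infty/n$; the slightly sharper $\sup$-of-differences form is what lets the bound stay meaningful when the function class is not centered. Since the excerpt places this in the appendix of ``basic tools'', I would simply cite \cite{shalev2014understanding} for the details and present only the two-line sketch above.
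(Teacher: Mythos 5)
The paper does not actually prove Theorem~\ref{RadGenGap}: it is stated in the ``basic tools'' appendix with a pointer to \cite{shalev2014understanding}, and your McDiarmid-plus-symmetrization sketch is exactly the argument that reference gives, so the overall route is the right one. Two points in your bookkeeping, however, do not go through as written. First, your Step~2 ends with $\EE[\Phi]\le 2\,\EE_{\bxi,S}[\sup_h\frac1n\sum_i\xi_i h(\bx_i)]$, i.e.\ the \emph{expected} Rademacher complexity, whereas the right-hand side of the theorem is the \emph{empirical} quantity $\rad_n(\cH)$ as the paper defines it (expectation over $\bxi$ only, conditional on the sample $S$). Passing from the expected to the empirical complexity requires a second application of McDiarmid, to the random variable $S\mapsto\rad_n(\cH)$; that extra concentration step is precisely what produces the second contribution to the deviation term and the $4/\delta$ inside the logarithm, so it cannot be elided from the constant-tracking.

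Second, your bounded-differences constant is not correct. Replacing $\bx_i$ by $\bx_i'$ changes $\Phi$ by at most $\frac1n\sup_{h\in\cH}|h(\bx_i)-h(\bx_i')|\le\frac1n\sup_{h\in\cH}\sup_{\bx,\bx'}|h(\bx)-h(\bx')|$ --- the maximal \emph{oscillation} of a single function in the class --- and this is exactly what your own justification (``sup of a difference of suprema is bounded by the sup of the difference'') delivers. It is \emph{not} dominated by the class diameter $D=\sup_{h,h'\in\cH}\|h-h'\|_\infty$: for a singleton class $\cH=\{h\}$ with $h$ non-constant one has $D=0$ and $\rad_n(\cH)=0$, yet $\EE_{\bx}[h(\bx)]-\frac1n\sum_i h(\bx_i)$ is not almost surely zero, so neither your Step~1 nor, strictly speaking, the inequality as printed survives this example. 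The statement should be read with $D$ replaced by an oscillation or range bound (e.g.\ $b-a$ for $h:X\to[a,b]$); in the paper's only uses of the theorem the losses take values in $[0,1/2]$, where both quantities are at most $1/2$, so nothing downstream is affected, but as a proof of the literal statement your Step~1 has a gap at this point.
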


\section{Proofs for Section 3}

\subsection{Proofs for Theorem \ref{1DApp}}\label{sec: 1d-approx}

\subsubsection{Proof of Lemma \ref{Asymptote}}\label{sec:prf-Asymptote}
%\noindent \textbf{2. Proof of Lemma $\ref{Asymptote}$}

Based on results in Appendix \ref{sec:WD}, we can prove Lemma $\ref{Asymptote}$.

\begin{proof}
We only prove that $f$ has an asymptote when $x\rightarrow+\infty$, since the condition that $x\rightarrow-\infty$ is similar.

Firstly, we show that $\lim\limits_{x\rightarrow+\infty}f'(x)$ exists. If not, we can find a constant $\delta$ and infinite many pairs of points $\{x_n,x'_n\}_{n=1}^\infty$ that satisfy $x_n<x'_n<x_{n+1}$ for any $n\in\mathbb{N}^*$ and $|f'(x_n)-f'(x'_n)|>\delta$. By Corollary $\ref{cly:2}$, this implies
\begin{equation*}
\delta<|f'(x_n)-f'(x'_n)|=\left|\int_{x'_n}^{x_n}f''(x)dx\right|\le\int_{x'_n}^{x_n}|f''(x)|dx.
\end{equation*}
Hence, we have $\int_\bR|f''(x)|dx=+\infty$, which is contradictory with $\gamma_0(f)<+\infty$.

Then, without loss of generality, we assume $\lim\limits_{x\rightarrow+\infty}f'(x)=0$, and show that $\lim\limits_{x\rightarrow+\infty}f(x)$ exists. If not, similarly we can show that $\int_0^{+\infty}|f'(x)|dx=+\infty$. On the other hand, since $\lim\limits_{x\rightarrow+\infty}f'(x)=0$, we have
\begin{equation*}
|f'(x)|=\left|\int_x^{+\infty} f''(y)dy\right|\leq \int_x^{+\infty} |f''(y)|dy
\end{equation*}
by Corollary $\ref{cly:2}$. Therefore, according to the Fubini theorem, 
\begin{equation*}
\int_0^{+\infty} |f'(x)|dx\leq\int_0^{+\infty}\int_x^{+\infty}|f''(y)|dydx=\int_0^{+\infty} |f''(x)||x|dx\le\gamma_0(f)<+\infty.
\end{equation*}
This is contradictory with $\int_0^{+\infty}|f'(x)|dx=+\infty$. Hence, $\lim\limits_{x\rightarrow+\infty}f(x)$ exists.

Combining the existence of $f'(+\infty)$ and the existence of $f(+\infty)$ when $f'(+\infty)=0$, we can conclude that $f$ has an asymptote when $x\rightarrow+\infty$. In fact, let $f'(+\infty)=a$, then for $F(x):=f(x)-ax$, we have $\gamma_0(F)=\gamma_0(f)$ and $F'(+\infty)=0$, then $F(+\infty)$ exists. Let $F(+\infty)=b$, then we have $\lim\limits_{x\rightarrow +\infty} [f(x)-(ax+b)]=0$, $a=f'(+\infty)$ and $b=\lim\limits_{x\rightarrow +\infty} [f(x)-ax]$.
\end{proof}

\begin{remark}
The basic tools used in the proof is Cauchy criterion for convergence, Newton-Leibniz formula and Fubini theorem, where only the Newton-Leibniz formula needs to be checked in the weak form. Its validity is guaranteed by Corollary $\ref{cly:2}$.
\end{remark}

\subsubsection{Proof of Theorem \ref{1DApp}}
%Now we can get down to prove Theorem $\ref{1DApp}$.

\begin{proof}
$\forall\epsilon>0$, $\exists x_{\epsilon}\in \bR$, such that $g(x_{\epsilon})\le \inf_{x} g(x)+\epsilon/2$. Let $f_{\epsilon}(x)=f(x)-f(x_{\epsilon})-f'(x_{\epsilon})(x-x_{\epsilon})$, then we have $\gamma_0(f_\epsilon)=\gamma_0(f)\le \gamma(f)<+\infty$, $f_{\epsilon}(x_\epsilon)=f'_{\epsilon}(x_\epsilon)=0$, and $f''_{\epsilon}(x)=f''(x)$. According to Lemma $\ref{Asymptote}$, let $L^l(x)=a^lx+b^l$ be the asymptote of $f_{\epsilon}(x)$ as $x\rightarrow-\infty$, and $L^r(x)=a^rx+b^r$ be the asymptote of $f_{\epsilon}(x)$ as $x\rightarrow+\infty$. That is to say, there exists a constant $T=T(\epsilon)$ that satisfies 
\begin{align*}
|f_\epsilon(x)-L^l(x)|&<\epsilon/2,\qquad x\leq x_{\epsilon}-T,\\
|f_\epsilon(x)-L^r(x)|&<\epsilon/2,\qquad x\geq x_{\epsilon}+T.
\end{align*}
Since $\gamma_0(f)<+\infty$, we can simultaneously have 
\begin{align}
\int^{x_{\epsilon}-T}_{-\infty} |f''(x)|(|x|+1)dx+
\int_{x_{\epsilon}+T}^{+\infty} |f''(x)|(|x|+1)dx<\epsilon/32,\label{IntTail}
\end{align}
with $x_{\epsilon}-T<0<x_{\epsilon}+T$.

Consider the decomposition $f_{\epsilon}(x)=f^r_{\epsilon}(x)+f^l_{\epsilon}(x)$, where 
\begin{eqnarray*}
f^r_{\epsilon}(x)=\left\lbrace
\begin{array}{cl}
f_{\epsilon}(x),& x\ge x_{\epsilon} \\
0,& x< x_{\epsilon}
\end{array}
\right.,\quad
f^l_{\epsilon}(x)=\left\lbrace
\begin{array}{cl}
0,& x\ge x_{\epsilon} \\
f_{\epsilon}(x),& x< x_{\epsilon}
\end{array}
\right..
\end{eqnarray*}
For any positive integer $N$, make a uniform partition 
\begin{equation*}
x_{\epsilon}=x_0<x_1<\cdots<x_N=x_{\epsilon}+T,
\end{equation*}
with the step size $h=x_i-x_{i-1}$, $i=1,2,\cdots N$. Let  
\begin{equation}
\Delta_0=0;\ \Delta_i=\frac{f^r_{\epsilon}(x_i)-f^r_{\epsilon}(x_{i-1})}{x_i-x_{i-1}}, \ i=1,2,\cdots,N;\ \Delta_{N+1}=a^r,
\end{equation}
and
\begin{equation}
h_N^r(x)=\sum\limits_{i=0}^N (\Delta_{i+1}-\Delta_i)\sigma_R(x-x_i).
\end{equation}
Then $h^r_N(x)$ is a piecewise linear interpolation of $f^r_{\epsilon}(x)$ on $[x_{\epsilon},x_{\epsilon}+T]$ and $h^r_N(x)=f^r_{\epsilon}(x)=0$ on $(-\infty, x_{\epsilon})$. Hence, there exists $N_1=N_1(\epsilon)$, such that $|h^r_N(x)-f^r_{\epsilon}(x)|<\epsilon$ for any $x\in(-\infty,x_{\epsilon}+T]$ when $N\ge N_1$. For $x\ge x_{\epsilon}+T$, since $h^r_N(x)=f^r_{\epsilon}(x_N)+a^r(x-x_N)$, we have
\begin{align*}
|h^r_N(x)-f^r_{\epsilon}(x)| &\leq|h^r_N(x)-L^r(x)|+|L^r(x)-f^r_{\epsilon}(x)|\\
    &=|h^r_N(x_{\epsilon}+T)-L^r(x_{\epsilon}+T)|+|L^r(x)-f^r_{\epsilon}(x)|\\
    &\le\frac{\epsilon}{2}+\frac{\epsilon}{2}=\epsilon.
\end{align*}
Therefore, we have $|h^r_N(x)-f^r_{\epsilon}(x)|\le\epsilon$ for any $x\in \bR$.

Similarly we can construct a piecewise linear function $h^l_N(x)$ which is an $\epsilon$-approximation of $f^l_{\epsilon}(x)$ on $\bR$. Let $h_N(x)=h^l_N(x)+h^r_N(x)$, then we have $|h_N(x)-f_\epsilon(x)|<\epsilon$ for any $x\in\bR$.

On the other hand, obviously $h^r_N(x)$ is a two-layer neural network with a finite width $N+1$. Its path norm is
\begin{equation}\label{PathInt}
\|\theta(h^r_N)\|_{\cP}=\sum\limits_{i=0}^N|\Delta_{i+1}-\Delta_i|(|x_i|+1)\rightarrow\int_{x_\epsilon}^{x_\epsilon+T}|f''(x)|(|x|+1)dx,\quad N\rightarrow\infty.
\end{equation}
This limit will be verified later in Appendix \ref{sec:IntLim}. Hence, with a similar analysis for $\|\theta(h_N^l)\|_{\cP}$ and by $(\ref{IntTail})$, there exists $N_2=N_2(\epsilon)$, such that 
\begin{equation}\label{PathTol}
\|\theta(h_N)\|_{\cP}\leq\gamma_0(f)+\epsilon/2,\quad \forall N\ge N_2.
\end{equation}

Finally, let
\begin{align}
  f_N(x)=&h_N(x)+f'(x_\epsilon)(\sigma(x)-\sigma(-x))\nonumber\\
  &  +sign(f(x_\epsilon)-x_\epsilon f'(x_\epsilon))\sigma(0\cdot x+|f(x_\epsilon)-x_\epsilon f'(x_\epsilon)|),  
\end{align}
and $N\ge\max\{N_1,N_2\}=N(\epsilon)$, we have
\begin{equation*}
\|f_N(x)-f(x)\|_\infty\le\epsilon,
\end{equation*}
and
\begin{equation*}
\|\theta(f_N)\|_{\cP}\leq \|\theta(h_N)\|_{\cP}+g(x_\epsilon)\leq\gamma_0(f)+\epsilon/2+\inf_{x\in \bR} g(x)+\epsilon/2=\gamma(f)+\epsilon,
\end{equation*}
which completes the proof.
\end{proof}

\subsubsection{A supplemental proof}\label{sec:IntLim}
%\noindent\textbf{4. A supplemental proof}

This subsection gives a rigorous proof of the limit $(\ref{PathInt})$.

\begin{proof} 
The aim is to show
\begin{align*}
\lim_{N\rightarrow\infty}\|\theta(h^r_N)\|_{\cP}=\int_{x_\epsilon}^{x_\epsilon+T}|f''(x)|(|x|+1)dx.
\end{align*}
Notice that 
\begin{align*}
\|\theta(h^r_N)\|_{\cP}&=\sum\limits_{i=0}^N|\Delta_{i+1}-\Delta_i|(|x_i|+1)\\
&=\left|\frac{f_{\epsilon}(x_1)-f_{\epsilon}(x_0)}{x_1-x_0}\right|(|x_0|+1)+\left|a^r-\frac{f_{\epsilon}(x_N)-f_{\epsilon}(x_{N-1})}{x_N-x_{N-1}}\right|(|x_N|+1)\\
& \ +\sum\limits_{i=1}^{N-1}\left|\frac{f_{\epsilon}(x_{i+1})-f_{\epsilon}(x_{i})}{x_{i+1}-x_{i}}-\frac{f_{\epsilon}(x_i)-f_{\epsilon}(x_{i-1})}{x_i-x_{i-1}}\right|(|x_i|+1)
\\&=\frac{1}{h}|f_{\epsilon}(x_1)-f_{\epsilon}(x_0)|(|x_0|+1)+\left|a^r-\frac{1}{h}(f_{\epsilon}(x_N)-f_{\epsilon}(x_{N-1}))\right|(|x_N|+1)\\
& \ +\sum\limits_{i=1}^{N-1}\frac{1}{h^2}\left|{f_{\epsilon}(x_{i+1})-2f_{\epsilon}(x_{i})}+f_{\epsilon}(x_{i-1})\right|(|x_i|+1)h,
\end{align*}
we can divide the analysis into three parts.

\textbf{(1)~Boundary terms}

To obtain a Riemann Integral limit, we firstly bound the boundary term, i.e. the condition that $i=0$ and $i=N$. Note that $f''(x)$ is locally Riemann integrable, hence Lebesgue integrable on $\bR$, according to Corollary $\ref{cly:2}$ and Proposition $\ref{prop:1}$, when $i=0$, we have
\begin{align*}
\frac{1}{h}|f_{\epsilon}(x_1)-f_{\epsilon}(x_0)|(|x_0|+1)
&=\frac{1}{h}\left|\int_{x_0}^{x_1}f'_{\epsilon}(x)dx\right|(|x_0|+1)\\
&\le\frac{1}{h}\int_{x_0}^{x_1}|f'_{\epsilon}(x)|dx\cdot(|x_0|+1)\\
&\le|f'_{\epsilon}(\xi_0)|(|x_0|+1)\\
&=|f'_{\epsilon}(\xi_0)-f'_{\epsilon}(x_\epsilon)|(|x_\epsilon|+1)\\
&=\left|\int_{x_\epsilon}^{\xi_0}f''_{\epsilon}(x)dx\right|(|x_\epsilon|+1)\\
&\le|f''_{\epsilon}(\eta_0)|(\xi_0-x_\epsilon)(|x_\epsilon|+1)\\
&\le h|f''_{\epsilon}(\eta_0)|(|x_\epsilon|+1), 
\end{align*}
where $x_{\epsilon}=x_0\le\eta_0\le\xi_0\le x_1$. Let $M_\epsilon=\sup\limits_{x\in[x_\epsilon, x_\epsilon+T]}f''_{\epsilon}(x)$, then for $h=T/N<\epsilon/(16M_\epsilon(|x_\epsilon|+1))$, i.e. $N>N_{2,1}:=[16TM_\epsilon(|x_\epsilon|+1)/\epsilon]+1=N_{2,1}(\epsilon)$, we have 
\begin{align*}
\frac{1}{h}|f_{\epsilon}(x_1)-f_{\epsilon}(x_0)|(|x_0|+1)
\le \epsilon/16;
\end{align*}
when $i=N$, we similarly have
\begin{align*}
\frac{1}{h}(f_{\epsilon}(x_N)-f_{\epsilon}(x_{N-1}))&=\frac{1}{h}\int_{x_{N-1}}^{x_N}f'_{\epsilon}(x)dx \in [f'_{\epsilon}(\xi_1),f'_{\epsilon}(\xi_2)],
\end{align*}
where $\xi_1,\xi_2\in [x_{N-1},x_N]$. According to Theorem $\ref{WD-AC}$, $f'_{\epsilon}$ is absolutely continuous locally on $\bR$, certainly $f'_{\epsilon}\in C(\bR)$. Then there exists $\delta=\delta(\epsilon,x_N)=\delta(\epsilon)$, such that when $h=T/N<\delta$, i.e. $N>[T/\delta]+1:=N_{2,2}=N_{2,2}(\epsilon)$, we have $|f'_{\epsilon}(x)-f'_{\epsilon}(x_N)|<\epsilon/(32(|x_{\epsilon}+T|+1))$ for any $x$: $|x-x_N|\le \delta$. Hence, for any $N>N_{2,2}$,
%Now for any $N>N_{2,2}$, $|f'_{\epsilon}(x)-f'_{\epsilon}(x_N)|<\epsilon/(32(|x_{\epsilon}+T|+1))$ for any $x\in [x_{N-1},x_N]$. Thus 
\begin{align*}
&\left|f'_{\epsilon}(x_N)-\frac{1}{h}(f_{\epsilon}(x_N)-f_{\epsilon}(x_{N-1}))\right|(|x_{N}|+1)\\
\le & \ \max\{\left|f'_{\epsilon}(x_N)-f'_{\epsilon}(\xi_1)\right|,\left|f'_{\epsilon}(x_N)-f'_{\epsilon}(\xi_2)\right|\}(|x_{\epsilon}+T|+1)\\
< & \ \epsilon/(32(|x_{\epsilon}+T|+1))\cdot(|x_{\epsilon}+T|+1)=\epsilon/32.
\end{align*}
On the other hand, by $(\ref{IntTail})$ and Corollary $\ref{cly:2}$, we have
\begin{align*}
|f'_{\epsilon}(x_N)-a^r|(|x_{N}|+1)
&=|f'_{\epsilon}(x_N)-f'_{\epsilon}(+\infty)|(|x_{N}|+1)\\
&=\left|\int_{x_N}^{+\infty}f''_{\epsilon}(x)dx\right|(x_{N}+1)\\
&\le (x_{N}+1)\int_{x_N}^{+\infty}\left|f''_{\epsilon}(x)\right|dx\\
&\le \int_{x_N}^{+\infty}(x+1)\left|f''_{\epsilon}(x)\right|dx\\
&=\int_{x_N}^{+\infty}\left|f''(x)\right|(|x|+1)dx\\
&<\epsilon/32.
\end{align*}
Therefore 
\begin{align*}
&\left|a^r-\frac{1}{h}(f_{\epsilon}(x_N)-f_{\epsilon}(x_{N-1}))\right|(|x_N|+1)\\ \ 
\le & \ |f'_{\epsilon}(x_N)-a^r|(|x_N|+1)+\left|f'_{\epsilon}(x_N)-\frac{1}{h}(f_{\epsilon}(x_N)-f_{\epsilon}(x_{N-1}))\right|(|x_N|+1)\\
< & \ \epsilon/32+\epsilon/32=\epsilon/16.
\end{align*}

\textbf{(2)~Interior terms}

Now we can consider $i=1,2,\cdots, N-1$. Define $\delta_i=[{f_{\epsilon}(x_{i+1})-2f_{\epsilon}(x_{i})}+f_{\epsilon}(x_{i-1})]/h^2-f''(x_i)$, by Corollary $\ref{cly:2}$, we have
\begin{align*}
f_{\epsilon}(x_{i+1})-2f_{\epsilon}(x_{i})+f_{\epsilon}(x_{i-1})
&=\int_{x_i}^{x_{i+1}}f'_{\epsilon}(x)dx-\int_{x_{i-1}}^{x_{i}}f'_{\epsilon}(x)dx\\
&=\int_{x_i}^{x_{i+1}}(f'_{\epsilon}(x)-f'_{\epsilon}(x_i))dx-\int_{x_{i-1}}^{x_{i}}(f'_{\epsilon}(x)-f'_{\epsilon}(x_i))dx\\
&=\int_{x_i}^{x_{i+1}}\int_{x_i}^{x}f''(y)dy dx-\int_{x_{i-1}}^{x_{i}}\int_{x_i}^{x}f''(y)dy dx.
\end{align*}
According to the Fubini theorem, 
\begin{align*}
\delta_i&=\frac{1}{h^2}\int_{x_i}^{x_{i+1}}\int_{x_i}^{x}(f''(y)-f''(x_i))dy dx-\frac{1}{h^2}\int_{x_{i-1}}^{x_{i}}\int_{x_i}^{x}(f''(y)-f''(x_i))dy dx\\
&=\frac{1}{h^2}\int_{x_i}^{x_{i+1}}(x_{i+1}-y)(f''(y)-f''(x_i))dy -\frac{1}{h^2}\int_{x_{i-1}}^{x_{i}}(x_{i-1}-y)(f''(y)-f''(x_i))dy,
\end{align*}
therefore 
\begin{align*}
|\delta_i|&\le\frac{1}{h^2}\int_{x_i}^{x_{i+1}}(x_{i+1}-y)|f''(y)-f''(x_i)|dy + \frac{1}{h^2}\int_{x_{i-1}}^{x_{i}}(y-x_{i-1})|f''(y)-f''(x_i)|dy\\
&\le\frac{1}{h^2}w_{i+1}(f'')\int_{x_i}^{x_{i+1}}(x_{i+1}-y)dy + \frac{1}{h^2}w_i(f'')\int_{x_{i-1}}^{x_{i}}(y-x_{i-1})dy\\
&=\frac{1}{2}w_{i+1}(f'')+\frac{1}{2}w_{i}(f''),
\end{align*}
where $w_i(f)$ is the amplitude of $f$ on $[x_{i-1},x_i]$: 
\begin{align*}
w_i(f)=\sup_{x\in [x_{i-1},x_i]}f(x)-\inf_{x\in [x_{i-1},x_i]}f(x).
\end{align*}
Therefore we have
\begin{align*}
\sum_{i=1}^{N-1}\left|\delta_i\right|(|x_i|+1)h
&\le \frac{1}{2}\sum_{i=1}^{N-1}w_{i+1}(f'')(|x_i|+1)h+\frac{1}{2}\sum_{i=1}^{N-1}w_{i}(f'')(|x_i|+1)h\\
&\le \frac{1}{2}\sum_{i=1}^{N}w_{i}(f'')(|x_{i-1}|+1)h+\frac{1}{2}\sum_{i=1}^{N}w_{i}(f'')(|x_i|+1)h.
\end{align*}
Since $f''(x)$ is locally Riemann integrable, for $\epsilon'=\epsilon/(16(\max\{|x_\epsilon|,|x_\epsilon+T|\}+1))$, there exists $\delta=\delta(\epsilon)$, such that when $h=T/N<\delta$, i.e. $N>N_{2,3}:=[T/\delta]+1=N_{2,3}(\epsilon)$, we have  
\begin{align*}
\sum_{i=1}^{N}w_{i}(f'')h\le \epsilon'.
\end{align*}
Therefore 
\begin{align*}
\sum_{i=1}^{N}w_{i}(f'')(|x_i|+1)h\le \sum_{i=1}^{N}w_{i}(f'')(\max\{|x_\epsilon|,|x_\epsilon+T|\}+1)h\le\epsilon/16,
\end{align*}
which implies
\begin{align*}
\sum_{i=1}^{N-1}\left|\delta_i\right|(|x_i|+1)h \le \epsilon/16. 
\end{align*}

\textbf{(3)~The integral limit}

At last, the Riemann sum 
\begin{align*}
\sum\limits_{i=1}^{N-1}|f''(x_i)|(|x_i|+1)h\rightarrow \int_{x_\epsilon}^{x_\epsilon+T}|f''(x)|(|x|+1)dx,\quad N\rightarrow\infty,
\end{align*}
i.e. there exists $N_{2,4}=N_{2,4}(\epsilon)$, s.t. 
\begin{align*}
\left|\sum\limits_{i=1}^{N-1}|f''(x_i)|(|x_i|+1)h- \int_{x_\epsilon}^{x_\epsilon+T}|f''(x)|(|x|+1)dx\right|<\epsilon/16, \quad \forall N>N_{2,4}.
\end{align*}

\textbf{(4)~Final results}

Combining above and let $N^+_2=\max\{N_{2,1},N_{2,2},N_{2,3},N_{2,4}\}=N^+_{2}(\epsilon)$, we finally have
\begin{align*}
&\left|\sum\limits_{i=0}^N|\Delta_{i+1}-\Delta_i|(|x_i|+1)- \int_{x_\epsilon}^{x_\epsilon+T}|f''(x)|(|x|+1)dx\right|\\
\le&\ \frac{1}{h}|f_{\epsilon}(x_1)-f_{\epsilon}(x_0)|(|x_0|+1)+\left|a^r-\frac{1}{h}(f_{\epsilon}(x_N)-f_{\epsilon}(x_{N-1}))\right|(|x_N|+1)\\
&\sum\limits_{i=1}^{N-1}\left|\frac{1}{h^2}\left|{f_{\epsilon}(x_{i+1})-2f_{\epsilon}(x_{i})}+f_{\epsilon}(x_{i-1})\right|-|f''(x_i)|\right|(|x_i|+1)h\\
&+\left|\sum\limits_{i=1}^{N-1}|f''(x_i)|(|x_i|+1)h- \int_{x_\epsilon}^{x_\epsilon+T}|f''(x)|(|x|+1)dx\right|\\
\le&\ \epsilon/16+\epsilon/16+\sum\limits_{i=1}^{N-1}|\delta_i|(|x_i|+1)h+\epsilon/16\\
\le&\ \epsilon/16+\epsilon/16+\epsilon/16+\epsilon/16=\epsilon/4, \quad \forall N>N^+_{2},
\end{align*}
which completes the proof.
%With the similar bound for $\|\theta(h^l_N)\|_{\cP}$, we have that there exists $N_2=\max\{N_2^+,N_2^-\}=N_2(\epsilon)$, such that ($\ref{PathTol}$) holds for all $N>N_2$. The proof is completed.
\end{proof}

\subsection{Some examples}\label{sec: cacl-activation}

In this section, we compute (or estimate) $\gamma(f)$ for several commonly used activation functions: sigmoid, tanh, exponential ReLU, (leaky) ReLU, swish, softplus and Gaussian Error Linear Unit (GELU). The results are summarized in Table~\ref{tab: activations}.
%The definition of $\gamma(f)$ can be seen in the last section (Theorem $\ref{1DApp}$: $(\ref{gam0})$, $(\ref{g01})$ and $(\ref{gam})$).

Recall the definition of $\gamma(f)$: 
\begin{equation*}
\gamma(f) = \gamma_0(f)+\inf_{x\in \bR}g(x),
\end{equation*}
where 
\begin{align*}
\gamma_0(f) &= \int_\bR |f''(x)|(|x|+1)dx,\\
g(x)&=|f(x)|+(|x|+2)|f'(x)|.
\end{align*}
%It is easy to see $\gamma(\sigma)<\infty$ for all these activation functions, since their second derivatives are all decay exponentially (or equal zero) when $|x|\rightarrow\infty$.

\subsubsection{sigmoid}\label{sigmoid}
%\textbf{(1) sigmoid}: $\sigma_s(x)=(1+e^{-x})^{-1}$.

The sigmoid function is $\sigma_s(x)=(1+e^{-x})^{-1}$. We have $\sigma_s(x)\in C^2(\bR)$, and
\begin{align*}
\sigma'_s(x)=\frac{e^{-x}}{(1+e^{-x})^2},\ \sigma''_s(x)=\frac{e^{-x}(e^{-x}-1)}{(1+e^{-x})^3}.
\end{align*}
Then $\sigma'_s(x)>0$, and $\sigma''_s(x)>0$ when $x<0$, $\sigma''_s(x)<0$ when $x>0$. Therefore
\begin{align*}
\gamma_0(\sigma_s) &= \int_\bR |\sigma''_s(x)|(|x|+1)dx,\\
&=-\int_0^{+\infty} \sigma''_s(x)(x+1)dx+\int^0_{-\infty} \sigma''_s(x)(-x+1)dx\\
&=-\int_0^{+\infty} \sigma''_s(x)xdx-\int_0^{+\infty} \sigma''_s(x)dx-\int^0_{-\infty} \sigma''_s(x)xdx+\int^0_{-\infty} \sigma''_s(x)dx.
\end{align*}
%For any $f$ satisfies $\gamma_0(f)<\infty$, Lemma $\ref{lm:1}$ suggests that $f'(\infty)$ exists and $f(\infty)$ exists when $f'(\infty)=0$. 
Generally, by the Fubini theorem, 
\begin{align*}
\int_0^{+\infty}xf''(x)dx=\int_0^{+\infty}f''(x)\int_{0}^x 1dydx
=\int_0^{+\infty}\int_{y}^{+\infty} f''(x)dxdy=\int_0^{+\infty} (f'(+\infty)-f'(y))dy.
\end{align*}
Let $a=f'(+\infty)$, and $F(x)=f(x)-ax$, then $F'(x)=f'(x)-a$, $F'(+\infty)=0$. Therefore 
\begin{align*}
\int_0^{+\infty}xf''(x)dx&=\int_0^{+\infty} [(f'(+\infty)-a)-(f'(y)-a)]dy
=-\int_0^{+\infty} F'(y)dy=-F(+\infty)+F(0)\\
&=f(0)-\lim_{x\rightarrow+\infty}(f(x)-f'(+\infty)x).
\end{align*}
Similarly,
\begin{align*}
\int^0_{-\infty}xf''(x)dx=-f(0)+\lim_{x\rightarrow-\infty}(f(x)-f'(-\infty)x).
\end{align*}
It is easy that
\begin{align*}
\int_0^{+\infty}f''(x)dx=f'(+\infty)-f'(0),\quad \int^0_{-\infty}f''(x)dx=f'(0)-f'(-\infty).
\end{align*}
Therefore, for any $f$ has the same monotonicity as $\sigma_s$, we have 
\begin{align}\label{GammaSigmoid}
\begin{split}
\gamma_0(f) &= \lim_{x\rightarrow+\infty}(f(x)-f'(+\infty)x)-\lim_{x\rightarrow-\infty}(f(x)-f'(-\infty)x)\\
& \ -(f'(+\infty)+f'(-\infty))+2f'(0).
\end{split}
\end{align}
Take $f=\sigma_s$, use the fact that $\sigma'_s(+\infty)=0$, $\sigma'_s(-\infty)=0$, $\sigma_s(+\infty)=1$, $\sigma_s(-\infty)=0$, $\sigma_s(0)=1/2$, $\sigma'_s(0)=1/4$, we have $\gamma_0(\sigma_s)=3/2$. 

Denote $g(x)$ for sigmoid by $g_s(x)$. Notice that $g_s(x)\ge 0$ and $g_s(-\infty)=0$, we have $\inf_x g_s(x)=0$, therefore $\gamma(\sigma_s)=\gamma_0(\sigma_s)=3/2$.
\begin{comment}
$g_s(x)=\frac{1}{1+e^{-x}}+\frac{2e^{-x}}{(1+e^{-x})^2}$. Let $t=e^{-x}>0$, using chain rule to have 
\begin{align*}
g'_s(x)=\frac{(3e^{-x}-1)e^{-x}}{(1+e^{-x})^3},
\end{align*}
but $g'_s(x)=0$ leads to the only global maximum. Therefore $\inf_xg_s(x)=\min\{g_s(-\infty),g_s(+\infty)\}=g_s(-\infty)=0$.
\end{comment}

\subsubsection{tanh}\label{tanh}
%\textbf{(2) tanh}: $\sigma_t(x)=\frac{e^x-e^{-x}}{e^x+e^{-x}}$.

The tanh function is $\sigma_t(x)=\frac{e^x-e^{-x}}{e^x+e^{-x}}$. We have $\sigma_t(x)\in C^2(\bR)$, and
\begin{align*}
\sigma'_t(x)=\frac{4}{(e^x+e^{-x})^2},\ \sigma''_t(x)=\frac{8e^{-x}(1-e^{2x})}{(e^x+e^{-x})^3}.
\end{align*}
Then $\sigma'_t(x)>0$, and $\sigma''_t(x)>0$ when $x<0$, $\sigma''_t(x)<0$ when $x>0$, just like the monotonicity of $\sigma_s(\cdot)$. Take $f=\sigma_t$ in ($\ref{GammaSigmoid}$), and notice that $\sigma'_t(+\infty)=0$, $\sigma'_t(-\infty)=0$, $\sigma_t(+\infty)=1$, $\sigma_t(-\infty)=-1$, $\sigma_t(0)=0$, $\sigma'_t(0)=1$, we have $\gamma_0(\sigma_t)=4$. 

Denote $g(x)$ for tanh by $g_t(x)$. Then
\begin{align*}
g_t(x)=\frac{|e^x-e^{-x}|}{e^x+e^{-x}}+\frac{4(|x|+2)}{(e^x+e^{-x})^2}
\ge\frac{|e^x-e^{-x}|}{e^x+e^{-x}}+\frac{8}{(e^x+e^{-x})^2}:=\tilde{g}_t(x),
\end{align*}
i.e.
\begin{align*}
\tilde{g}_t(x)=\left\lbrace
\begin{array}{cl}
\frac{e^x-e^{-x}}{e^x+e^{-x}}+\frac{8}{(e^x+e^{-x})^2},& x\ge 0 \\
\frac{-e^x+e^{-x}}{e^x+e^{-x}}+\frac{8}{(e^x+e^{-x})^2},& x<0 
\end{array}
\right..
\end{align*}
Let $t=e^{2x}>0$, and using the chain rule to have
\begin{align*}
\tilde{g}'_t(x)=\left\lbrace
\begin{array}{cl}
\frac{4e^{2x}(5-3e^{2x})}{(e^{2x}+1)^3},& x>0 \\
\frac{4e^{2x}(3-5e^{2x})}{(e^{2x}+1)^3},& x<0 
\end{array}
\right.,
\end{align*}
but $\tilde{g}'_t(x)=0$ leads to two local maximums. Therefore, $\inf_x \tilde{g}_t(x)=\min\{\tilde{g}_t(-\infty),\tilde{g}_t(0),\tilde{g}_t(+\infty)\}=\tilde{g}_t(-\infty)=\tilde{g}_t(+\infty)=1$. Notice that ${g}_t(-\infty)=1$, we have $\inf_x g_t(x)={g}_t(-\infty)=1$. Combining all above gives $\gamma(\sigma_t)=5$.

\subsubsection{exponential ReLU}\label{eReLU}
%\textbf{(3) exponential ReLU}: 

The exponential ReLU function is
$\sigma_e(x)=\left\lbrace
\begin{array}{cl}
x,& x\ge 0 \\
\alpha(e^x-1),& x<0 
\end{array}
\right.
$.
When $\alpha=1$, $\sigma_e(x)$ is twice weakly differentiable on $\bR$, and
$
\sigma'_{e}(x)=\left\lbrace
\begin{array}{cl}
1,& x\ge 0 \\
e^x,& x<0 
\end{array}
\right.
$, $
\sigma''_{e}(x)=\left\lbrace
\begin{array}{cl}
0,& x\ge 0 \\
e^x,& x<0 
\end{array}
\right.
$.
Then
\begin{align*}
\gamma_0(\sigma_e) = \int_\bR |\sigma''_{e}(x)|(|x|+1)dx
=\int^0_{-\infty} e^x(-x+1)dx=2.
\end{align*}

Denote $g(x)$ for exponential ReLU by $g_e(x)$. Then
$g_e(x)=\left\lbrace
\begin{array}{cl}
2(x+1),& x\ge 0 \\
(1-x)e^x+1,& x<0 
\end{array}
\right.
$,
thus $\inf_xg_e(x)=g_e(-\infty)=1$. Combining above gives $\gamma(\sigma_e)=3$.

When $\alpha\neq 1$, $\sigma_e(x)$ is not twice weakly differentiable on $\bR$.
%$\sigma_e(x)\notin D_w^2(\bR)$. 
Notice that $\sigma_e$ is continuous and piecewise smooth, according to the proof of Theorem $\ref{1DApp}$, we are supposed to define \footnote{The reason is that the two-layer ReLU network for approximation is constructed on $(-\infty,x_{0}]$ and $[x_{0},+\infty)$ respectively in the proof, where $x_0$ can be pre-selected on demand.}
\begin{equation*}
\tilde{\gamma}(f) = \tilde{\gamma}_0(f)+|f(x_0)|+(1+|x_0|)(|f'_+(x_0)|+|f'_-(x_0)|),
\end{equation*}
where 
\begin{align*}
\tilde{\gamma}_0(f) = \int_{x_0}^{+\infty}|f''(x)|(|x|+1)dx+\int^{x_0}_{-\infty}|f''(x)|(|x|+1)dx,
\end{align*}
and $x_0$ is the only ``singular'' point (here $x_0=0$). Then it is easy to have
\begin{align*}
\tilde{\gamma}_0(\sigma_e) &= \int^0_{-\infty}|\alpha|e^x(-x+1)dx=2|\alpha|, \\
\tilde{\gamma}(\sigma_e) &= 3|\alpha|+1.
\end{align*}

\subsubsection{(leaky) ReLU}\label{lReLU}
%\textbf{(4) (leaky) ReLU}: 

The (leaky) ReLU function is $\sigma_R(x)=\max(\lambda x,x)$, where $\lambda \in [0,1)$. Obviously $\sigma_R(x)$ is continuous and piecewise smooth, but not twice weakly differentiable. We have
$\sigma'_R(x)=\left\lbrace
\begin{array}{cl}
1,& x>0 \\
\lambda,& x<0 
\end{array}
\right.
$, and 
$\sigma''_R(x)=\left\lbrace
\begin{array}{cl}
0,& x>0 \\
0,& x<0 
\end{array}
\right.
$. Thus $\tilde{\gamma}_0(\sigma_R)=0$, and $\tilde{\gamma}(\sigma_R)=\lambda+1$.

\subsubsection{swish}\label{swish}
%\textbf{(5) swish}: 

The swish function is $f(x)=x\sigma_s(\beta x)$, where $\sigma_s(x)$ is the sigmoid function $\sigma_s(x)=(1+e^{-x})^{-1}$, and $\beta>0$. We have $f(x)\in C^2(\bR)$, and
\begin{align*}
f'(x)&=\sigma_s(\beta x)+\beta x\sigma_s'(\beta x)=\frac{1+(1+\beta x)e^{-\beta x}}{(1+e^{-\beta x})^2},\\
f''(x)&=2\beta\sigma_s'(\beta x)+\beta^2 x\sigma_s''(\beta x)=\frac{\beta e^{-\beta x}}{(1+e^{-\beta x})^3}[e^{-\beta x}(\beta x+2)-(\beta x-2)].
\end{align*}
Now we need to determine the sign of $f''(x)$. Let $f''(x)=0$, we get 
\begin{equation}\label{f2Eq}
e^{-\beta x}=1-\frac{4}{\beta x+2}.
\end{equation}
It is easy to see that ($\ref{f2Eq}$) has two different roots $x_1<-2/\beta <0<x_2$, where $x_1:=x_{1}(\beta)$ and $x_2:=x_{2}(\beta)$. Furthermore, we have $f''(x)<0$ when $x<x_1$, and $f''(x)>0$ when $x\in (x_1,x_2)$, and $f''(x)<0$ when $x>x_2$. Therefore, 
\begin{align*}
\gamma_0(f) &= \int_\bR |f''(x)|(|x|+1)dx\\
&=\int^{x_1}_{-\infty} (-f''(x))(-x+1)dx+\int_{x_1}^{0} f''(x)(-x+1)dx\\
& \quad +
\int^{x_2}_{0} f''(x)(x+1)dx+
\int_{x_2}^{+\infty} (-f''(x))(x+1)dx\\
&=\int^{x_1}_{-\infty} f''(x)xdx-\int^{x_1}_{-\infty} f''(x)dx-\int_{x_1}^{0} f''(x)xdx+\int_{x_1}^{0} f''(x)dx\\
& \quad +
\int^{x_2}_{0} f''(x)xdx+\int^{x_2}_{0} f''(x)dx-
\int_{x_2}^{+\infty} f''(x)xdx-
\int_{x_2}^{+\infty} f''(x)dx\\
&=\int^{0}_{-\infty} f''(x)xdx-2\int_{x_1}^{0} f''(x)xdx-\int_{0}^{+\infty} f''(x)xdx+2\int^{x_2}_{0} f''(x)xdx\\
& \quad -(f'(x_1)-f'(-\infty))+(f'(0)-f'(x_1))+(f'(x_2)-f'(0))-(f'(+\infty)-f'(x_2)).
\end{align*}
Similar to the computation in Appendix \ref{sigmoid}, we have
\begin{align*}
\int_0^{+\infty}xf''(x)dx&=f(0)-\lim_{x\rightarrow+\infty}(f(x)-f'(+\infty)x),\\
\int_0^b xf''(x)dx&=f(0)-(f(b)-f'(b)b),\quad \forall b>0,
\end{align*}
and 
\begin{align*}
\int^0_{-\infty}xf''(x)dx&=-f(0)+\lim_{x\rightarrow-\infty}(f(x)-f'(-\infty)x),\\
\int^0_b xf''(x)dx&=-f(0)+(f(b)-f'(b)b),\quad \forall b<0.
\end{align*}
Combining above gives 
\begin{align*}
\gamma_0(f) &= \left\{\lim_{x\rightarrow+\infty}(f(x)-f'(+\infty)x)+\lim_{x\rightarrow-\infty}(f(x)-f'(-\infty)x)-(f'(+\infty)-f'(-\infty))+2f(0)\right\}\\
& \ +\left\{2(x_1f'(x_1)+x_2f'(x_2))-2(f'(x_1)-f'(x_2))-2(f(x_1)+f(x_2))\right\}:=C_1(\beta)+C_2(\beta).
\end{align*}
We have $f'(+\infty)=1$, $f'(-\infty)=0$, $f(0)=0$, $f(-\infty)=0$, %$\lim\limits_{x\rightarrow +\infty}(f(x)-x)=\lim\limits_{x\rightarrow +\infty}\frac{-x}{e^{\beta x}+1}=\lim\limits_{x\rightarrow +\infty}\frac{-1}{\beta e^{\beta x}}=0$, 
$\lim\limits_{x\rightarrow +\infty}(f(x)-x)=0$, so $C_1(\beta)=-1$. Let $t_i=\beta x_i$, $i=1,2$, then $t_1<-2<0<t_2$ are two different roots of the equation $e^{-t}=1-\frac{4}{t+2}$. Therefore 
\begin{align*}
C_2(\beta)&=2(\beta x_1^2\sigma_s'(\beta x_1)+\beta x_2^2\sigma_s'(\beta x_2)-\sigma_s(\beta x_1)-\beta x_1\sigma_s'(\beta x_1)+\sigma_s(\beta x_2)+\beta x_2\sigma_s'(\beta x_2))\\
&=\frac{2}{\beta}(t_1^2\sigma_s'(t_1)+t_2^2\sigma_s'(t_2))+2[(t_2\sigma_s'(t_2)-t_1\sigma_s'(t_1))+(\sigma_s(t_2)-\sigma_s(t_1))]:=\frac{1}{\beta}c_1+c_2,
\end{align*}
where $c_1=2(t_1^2\sigma_s'(t_1)+t_2^2\sigma_s'(t_2))$ and $c_2=2[(t_2\sigma_s'(t_2)-t_1\sigma_s'(t_1))+(\sigma_s(t_2)-\sigma_s(t_1))]$ are two constants not related to $\beta$. As a result, $\gamma_0(f)=\frac{1}{\beta}c_1+c_2-1$.

On the other hand, by L'Hospital rule,
\begin{align*}
\lim_{x\rightarrow-\infty}\sigma_s(x)=\lim_{x\rightarrow-\infty}x\sigma_s(x)=\lim_{x\rightarrow-\infty}x\sigma'_s(x)=\lim_{x\rightarrow-\infty}x^2\sigma'_s(x)=0,
%\lim_{x\rightarrow-\infty}\sigma_s(x)&=\lim_{x\rightarrow-\infty}\frac{1}{1+e^{-x}}=0,\\
%\lim_{x\rightarrow-\infty}x\sigma_s(x)&=\lim_{x\rightarrow-\infty}\frac{x}{1+e^{-x}}=\lim_{x\rightarrow-\infty}\frac{1}{-e^{-x}}=0,\\
%\lim_{x\rightarrow-\infty}x\sigma'_s(x)&=\lim_{x\rightarrow-\infty}\frac{xe^{-x}}{(1+e^{-x})^2}=\lim_{x\rightarrow-\infty}\frac{x}{e^x+e^{-x}+2}=\lim_{x\rightarrow-\infty}\frac{1}{e^x-e^{-x}}=0,\\
%\lim_{x\rightarrow-\infty}x^2\sigma'_s(x)&=\lim_{x\rightarrow-\infty}\frac{x^2}{e^x+e^{-x}+2}=\lim_{x\rightarrow-\infty}\frac{2x}{e^x-e^{-x}}=\lim_{x\rightarrow-\infty}\frac{2}{e^x+e^{-x}}=0,
\end{align*}
and notice that  
\begin{align*}
g(x)&=|f(x)|+(|x|+2)|f'(x)|=| x\sigma_s(\beta x)|+(|x|+2)|\sigma_s(\beta x)+\beta x\sigma_s'(\beta x)|\\
&\le \frac{2}{\beta}|\beta x\sigma_s(\beta x)|+2|\sigma_s(\beta x)|+\frac{1}{\beta}|(\beta x)^2\sigma_s'(\beta x)|+2|\beta x\sigma_s'(\beta x)|,
\end{align*}
we have $\inf_{x}g(x)=g(-\infty)=0$. Combining all above gives $\gamma(f)=\gamma_0(f)=\frac{1}{\beta}c_1+c_2-1$. \footnote{Notice that $e^{-t}=1-\frac{4}{t+2}=\frac{t-2}{t+2}$ implies $e^{t}=\frac{t+2}{t-2}=1-\frac{4}{2-t}$, we just need to solve $e^{-t}=1-\frac{4}{t+2}$ for $t\ge 0$. A numerical result can be given: $t_2=-t_1\approx 2.3994$, $c_1\approx 1.7569$, $c_2\approx 2.3994$, thus $\gamma(f)\approx\frac{1.7569}{\beta}+1.3994$.}

\subsubsection{softplus}\label{softplus}
%\textbf{(6) softplus}: $f(x)=\ln{(1+e^x)}$.

The softplus function is $f(x)=\ln{(1+e^x)}$. We have $f(x)\in C^2(\bR)$, and
\begin{align*}
f'(x)=\frac{e^x}{1+e^{x}}>0,\
f''(x)=\frac{e^x}{(1+e^{x})^2}>0.
\end{align*}
Therefore 
\begin{align*}
\gamma_0(f) &= \int_\bR |f''(x)|(|x|+1)dx\\
&=\int^{0}_{-\infty} f''(x)(-x+1)dx+
\int_{0}^{+\infty} (f''(x))(x+1)dx\\
&=-\int^{0}_{-\infty} f''(x)xdx+\int^{0}_{-\infty} f''(x)dx
+\int_{0}^{+\infty} f''(x)xdx+
\int_{0}^{+\infty} f''(x)dx\\
&=-\lim_{x\rightarrow+\infty}(f(x)-f'(+\infty)x)-\lim_{x\rightarrow-\infty}(f(x)-f'(-\infty)x)+(f'(+\infty)-f'(-\infty))+2f(0).
\end{align*}
We have $f'(+\infty)=1$, $f'(-\infty)=0$, $f(0)=\ln{2}$, $f(-\infty)=0$,
%$\lim\limits_{x\rightarrow +\infty}(f(x)-x)=\lim\limits_{x\rightarrow +\infty}\ln{(1+e^{-x})}=0$,
$\lim\limits_{x\rightarrow +\infty}(f(x)-x)=0$, so $\gamma_0(f)=1+2\ln{2}$. Meanwhile, it is easy to check
$g(-\infty)=0$, therefore $\gamma(f)=\gamma_0(f)=1+2\ln{2}\approx 2.3863$. 

\subsubsection{Gaussian Error Linear Unit (GELU)}\label{GELU}
%\textbf{(7) Gaussian Error Linear Unit (GELU)}: 
The Gaussian Error Linear Unit (GELU) function is $f(x)=x\Phi(x)$, where $\Phi(x)$ is the cumulative distribution function of the standard normal distribution
\begin{align*}
\Phi(x)=\int_{-\infty}^x\frac{1}{\sqrt{2\pi}}e^{-\frac{t^2}{2}}dt:=\int_{-\infty}^x \phi(t)dt.
\end{align*}
We have $f(x)\in C^2(\bR)$, and
\begin{align*}
f'(x)&=\Phi(x)+x\phi(x),\\
f''(x)&=2\phi(x)+x\phi'(x)=\phi(x)(2-x^2).
\end{align*}
Since $\phi(x)>0$, we have that $f''(x)<0$ when $x<x_1$, and $f''(x)>0$ when $x\in (x_1,x_2)$, and $f''(x)<0$ when $x>x_2$, where $x_1=-\sqrt{2}$ and $x_2=\sqrt{2}$. The monotonicity is the same as swish except the different critical points $x_{1}, x_{2}$. According to the expression of $\gamma_0(f)$ in Appendix \ref{swish}, we have
\begin{align*}
\gamma_0(f) &= \left\{\lim_{x\rightarrow+\infty}(f(x)-f'(+\infty)x)+\lim_{x\rightarrow-\infty}(f(x)-f'(-\infty)x)-(f'(+\infty)-f'(-\infty))+2f(0)\right\}\\
& \ +\left\{2(x_1f'(x_1)+x_2f'(x_2))-2(f'(x_1)-f'(x_2))-2(f(x_1)+f(x_2))\right\}:=C+C_{1,2}.
\end{align*}
We have $f(0)=0$, and by L'Hospital rule,
\begin{align*}
f'(+\infty)=1, f'(-\infty)=0, f(-\infty)=0. 
%f'(+\infty)&=\Phi(+\infty)+\lim_{x\rightarrow +\infty}x\cdot\frac{1}{\sqrt{2\pi}}e^{-\frac{x^2}{2}}=1+0=1,\\
%f'(-\infty)&=\Phi(-\infty)+\lim_{x\rightarrow -\infty}x\cdot\frac{1}{\sqrt{2\pi}}e^{-\frac{x^2}{2}}=0+0=0,\\
%f(-\infty)&=\lim_{x\rightarrow -\infty}\frac{1}{x}\int_{-\infty}^x\frac{1}{\sqrt{2\pi}}e^{-\frac{t^2}{2}}dt=\lim_{x\rightarrow -\infty}\frac{\frac{1}{\sqrt{2\pi}}e^{-\frac{x^2}{2}}}{-\frac{1}{x^2}}=0.
\end{align*}
For any $x>0$, notice that
\begin{align*}
|f(x)-x|=x(1-\Phi(x))=
\frac{1}{\sqrt{2\pi}}\int^{+\infty}_x xe^{-\frac{t^2}{2}}dt
\le \frac{1}{\sqrt{2\pi}}\int^{+\infty}_x te^{-\frac{t^2}{2}}dt=\frac{1}{\sqrt{2\pi}}e^{-\frac{x^2}{2}},
\end{align*}
we have $\lim\limits_{x\rightarrow +\infty}(f(x)-f'(+\infty)x)=0$, thus $C=-1$. 

On the other hand, using the fact that $x_1=-x_2$ and $\Phi(x)-\Phi(-x)=2\Phi(x)-1$ for any $x>0$, we have 
\begin{align*}
C_{1,2}&=2[\Phi(x_2)-\Phi(x_1)+x_1\phi(x_1)(x_1-1)+x_2\phi(x_2)(x_2+1)]
\\&=2[2\Phi(x_2)-1+2x_2\phi(x_2)(x_2+1)]=4\left(\Phi(\sqrt{2})+\frac{1+\sqrt{2}}{e\sqrt{\pi}}\right)-2.
\end{align*}

In the meanwhile, it is not hard to have
$g(-\infty)=0$, therefore $\gamma(f)=\gamma_0(f)=4\left(\Phi(\sqrt{2})+\frac{1+\sqrt{2}}{e\sqrt{\pi}}\right)-3\approx 2.6897$.

\section{Proofs for Section \ref{sec: two-layer}} \label{app: two-layer}

\subsection{Proof of Lemma \ref{lem: rkhs-barron}} \label{sec: lemma-rkhs}

\begin{proof}
For any $\pi\in P(\mathbb{R}^{d+1})$, according to \cite{rahimi2008uniform}, 
\begin{align}\label{RKHS}
    \mathcal{H}_{k_{\pi}}=\left\{\int_{\mathbb{R}^{d+1}} a(\bw)\sigma(\bw^T\tilde{\bx}) d\pi(\bw):\mathbb{E}_{\bw\sim\pi} \left[a(\bw)^2\right]<+\infty\right\},
\end{align}
moreover $\|f\|^2_{\cH_{k_\pi}} = \EE_{\bw\sim\pi}[a(\bw)^2]$. For any $f\in\cB$, $f$ also has the integral representation ($\ref{fint}$)
\begin{align*}
    f(\bx)=\int_{\mathbb{R}^{d+1}} a(\bw)\sigma(\bw^T\tilde{\bx}) d\pi(\bw),\quad \forall \bx\in X,
\end{align*}
and
\begin{align*}
    \|f\|_{\cB}^2=\inf_{(a,\pi)\in\Pi_f}
    \mathbb{E}_{\bw\sim\pi}\left[a(\bw)^2(\|\bw\|_1+1)^2\right]<+\infty
\end{align*}
by (\ref{fBarNrmExp1}). Then there exists $(a_0,\pi_0)\in\Pi_f$, such that
\begin{align*}
    \mathbb{E}_{\bw\sim\pi_0}\left[a_0(\bw)^2(\|\bw\|_1+1)^2\right]
    \le \|f\|_{\cB}^2+1<+\infty.
\end{align*}
Notice that 
\begin{align*}
  \mathbb{E}_{\bw\sim\pi_0}\left[a_0(\bw)^2(\|\bw\|_1+1)^2\right]\ge \mathbb{E}_{\bw\sim\pi_0}\left[a_0(\bw)^2\right], 
\end{align*}
we have $f\in\mathcal{H}_{k_{\pi_0}}$.

On the other hand, for any $\pi\in P_c(\mathbb{R}^{d+1})$, denote the support of $\pi$ by $supp(\pi)$, then $supp(\pi)\subset\mathbb{R}^{d+1}$ is compact, hence it is bounded and closed. Let
\begin{align*}
    M_{\pi}:=\sup_{\bw\in supp(\pi)}(\|\bw\|_1+1)^2,
\end{align*}
then $M_{\pi}<+\infty$. Therefore
\begin{align*}
    \int_{\mathbb{R}^{d+1}}a(\bw)^2(\|\bw\|_1+1)^2 d\pi(\bw)
    &=\int_{supp(\pi)}|a(\bw)|^2(\|\bw\|_1+1)^2 d\pi(\bw)\\
    &\le M_{\pi} \int_{supp(\pi)}|a(\bw)|^2 d\pi(\bw)\\
    &=M_{\pi}\int_{\mathbb{R}^{d+1}}|a(\bw)|^2 d\pi(\bw).
\end{align*}
For any $f\in\mathcal{H}_{k_{\pi}}$, where $\pi\in P_c(\mathbb{R}^{d+1})$, we have
\begin{align*}
    f(\bx)=\int_{\mathbb{R}^{d+1}} a(\bw)\sigma(\bw^T\tilde{\bx}) d\pi(\bw),\quad \int_{\mathbb{R}^{d+1}} a(\bw)^2 d\pi(\bw)<+\infty
\end{align*}
by $(\ref{RKHS})$. Combining above gives
\begin{align*}
    \|f\|_{\cB}^2\le\mathbb{E}_{\bw\sim\pi}\left[a(\bw)^2(\|\bw\|_1+1)^2\right]\le M_{\pi}\mathbb{E}_{\pi}\left[a(\bw)^2\right]<+\infty,
\end{align*}
which completes the proof.
\end{proof}

\subsection{Proof of Theorem \ref{2-NNApp}}\label{sec:prf-2-NNApp}

\begin{proof}
By definition, for any $f\in\cB, \epsilon>0$, there exists $(a,\pi):=(a_\epsilon,\pi_\epsilon)\in\Pi_f$, such that
\begin{align}\label{BarInf1}
    \mathbb{E}_{\bw\sim\pi}\left[a(\bw)^2(\|\bw\|_1+1)^2\right]\le \|f\|^2_{\cB}+\epsilon.
\end{align}
Write $(\ref{fint})$ to an expectation form
\begin{align}\label{fExp}
    f(\bx)=\mathbb{E}_{\bw\sim\pi}\left[a(\bw)\sigma(\bw^T\tilde{\bx})\right],
\end{align}
then it is natural to sample i.i.d. random variables $U=\{\bw_i\}_{i=1}^m$ from the distribution $\pi(\cdot)$, and define a two-layer neural network 
\begin{align*}
    \hat{f}_U(\bx)=\frac{1}{m}\sum_{i=1}^ma(\bw_i)\sigma(\bw_i^T\tilde{\bx}).
\end{align*}
Let $L_U=\mathbb{E}_{\bx}|\hat{f}_U(\bx)-f(\bx)|^2$ denote the population risk, then we have
\begin{align*}
   \mathbb{E}_U[L_U]&=\mathbb{E}_{\bx}\mathbb{E}_U|\hat{f}_U(\bx)-f(\bx)|^2\\
   &=\frac{1}{m^2}\mathbb{E}_{\bx}\left[\sum_{i\ne j}\mathbb{E}_{\bw_i,\bw_j}\big[(a(\bw_i)\sigma(\bw_i^T\tilde{\bx})-f(\bx))(a(\bw_j)\sigma(\bw_j^T\tilde{\bx})-f(\bx))\big]\right]\\
   & \quad +\frac{1}{m^2}\mathbb{E}_{\bx}\left[\sum_{i=1}^m\mathbb{E}_{\bw_i}\big[(a(\bw_i)\sigma(\bw_i^T\tilde{\bx})-f(\bx))^2\big]\right],
\end{align*}
where $\mathbb{E}_{\bw}:=\mathbb{E}_{\bw\sim\pi}$. Notice that $\{\bw_i\}_{i=1}^m$ are i.i.d samples. So together with \eqref{fExp}, we have  for any $i,j\in [m]$ and $i\neq j$,
\begin{align*}
   &\mathbb{E}_{\bw_i,\bw_j}\big[(a(\bw_i)\sigma(\bw_i^T\tilde{\bx})-f(\bx))(a(\bw_j)\sigma(\bw_j^T\tilde{\bx})-f(\bx))\big]
   =0.
\end{align*}
Let $L_\sigma$ denote the Lipschitz constant of $\sigma$, then it is easy to have 
\begin{align}\label{ActLin}
   |\sigma(y)|\le|\sigma(y)-\sigma(0)|+|\sigma(0)|\le L_{\sigma}|y|+|\sigma(0)|.
\end{align}
By $(\ref{fExp})$, $(\ref{ActLin})$ and H$\rm\ddot{o}$lder's inequality, we have
\begin{align}
    &\mathbb{E}_{\bw}\left[(a(\bw)\sigma(\bw^T\tilde{\bx})-f(\bx))^2\right] \nonumber\\
   \le~&\mathbb{E}_{\bw}\left[a(\bw)^2\sigma(\bw^T\tilde{\bx})^2\right] \label{2ndMoment}\\
   \le~&\mathbb{E}_{\bw}\left[a(\bw)^2(L_{\sigma}\|\bw\|_1\|\tilde{\bx}\|_{\infty}+|\sigma(0)|)^2\right] \nonumber\\
   \le~& (L_{\sigma}+|\sigma(0)|)^2~\mathbb{E}_{\bw}\left[a(\bw)^2(\|\bw\|_1+1)^2\right] \nonumber\\
   \le~& C_{\sigma}(\|f\|^2_{\cB}+\epsilon), \nonumber
\end{align}
where the last inequality is due to $(\ref{BarInf1})$. Combining above gives
\begin{align*}
   \mathbb{E}_U[L_U]\le \frac{C_{\sigma}(\|f\|^2_{\cB}+\epsilon)}{m}.
\end{align*}

Denote the modified path norm of $\hat{f}_U(x)$ by $A_U$, i.e. 
$
   A_U=\frac{1}{m}\sum_{i=1}^m |a(\bw_i)|(\|\bw_i\|_1+1),
$
then it is easy to have 
\begin{align*}
   \mathbb{E}_U[A_U]=\frac{1}{m}\sum_{i=1}^m \mathbb{E}_{\bw_i}\big[|a(\bw_i)|(\|\bw_i\|_1+1)\big]
   =\mathbb{E}_{\bw}\big[|a(\bw)|(\|\bw\|_1+1)\big],
   %&\le \|f\|^2_{\tilde{B}}+\epsilon,
\end{align*}
which implies $\mathbb{E}_U^2[A_U]\le \|f\|^2_{\cB}+\epsilon$ due to Jensen's inequality and $(\ref{BarInf1})$. Now we obtain
\begin{align*}
   \mathbb{E}_U[A_U]\le \|f\|_{\cB}+\sqrt{\epsilon}.
\end{align*}

Consider the event $E_1:=\left\{L_U<\frac{3C_{\sigma}\|f\|^2_{\cB}}{m}\right\}$, $E_2:=\left\{A_U<2\|f\|_{\cB}\right\}$. By Markov's inequality and taking $\epsilon\leq \|f\|_{\cB}^2/100$, we have
\begin{align*}
   \mathbb{P}(E_1)&=1-\mathbb{P}\left(\left\{L_U\ge\frac{3C_{\sigma}\|f\|^2_{\cB}}{m}\right\}\right)\ge 1-\frac{\mathbb{E}_U[L_U]}{3C_{\sigma}\|f\|^2_{\cB}/m}
   \ge 1-\frac{\|f\|^2_{\cB}+\epsilon}{3\|f\|^2_{\cB}}\ge \frac{199}{300},\\
   \mathbb{P}(E_2)&=1-\mathbb{P}\left(\left\{A_U\ge2\|f\|_{\cB}\right\}\right)\ge 1-\frac{\mathbb{E}_U[A_U]}{2\|f\|_{\cB}}
   \ge 1-\frac{\|f\|_{\cB}+\sqrt{\epsilon}}{2\|f\|_{\cB}}\ge \frac{9}{20}. 
\end{align*}
Therefore, the probability that two events happen together is
\begin{align*}
   \mathbb{P}(E_1\cap E_2)\ge \mathbb{P}(E_1)+\mathbb{P}(E_2)-1\ge \frac{199}{300}+\frac{9}{20}-1>\frac{1}{10}>0,
\end{align*}
which completes the proof by taking $\epsilon\to 0$.
\end{proof}

\subsection{Proof of Theorem \ref{thm: apriori-two-layer}}\label{sec: prf-apriori-two-layer}

Now we are ready to derive a priori estimates for the generalization error of two-layer neural networks with general activation functions. The proof is almost the same as~\cite{ma2019priori}, except for a different upper bound of the Rademacher complexity.

\subsubsection{A posteriori estimates}
According to Theorem $\ref{RadGenGap}$ in Appendix \ref{sec:BasicRad}, the Rademacher complexity can help bound the generalization gap. Combining with Theorem $\ref{Rad2-NN}$ we have the following a posteriori estimates.

\begin{theorem}\label{PosEst}
For any $\delta\in(0,1)$, with probability at least $1-\delta$ over the random training samples $\{\bx_i\}_{i=1}^n$, we have
\begin{align*}
   \left|R(\theta)-\hat{R}_n(\theta)\right|\le (\|\theta\|_{\tP}+1)\frac{2C'_\sigma\sqrt{2\ln{(2d+2)}}+1}{\sqrt{n}}+\sqrt{\frac{2\ln{(7/\delta)}}{n}},
\end{align*} 
where $C'_\sigma=4\gamma(\sigma)$.
\end{theorem}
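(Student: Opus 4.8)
The statement is a uniform-in-$\theta$ (``a posteriori'') control of the generalization gap, so the plan is the standard peeling argument: first bound $\sup_{\|\theta\|_{\tP}\le Q}|R(\theta)-\hat R_n(\theta)|$ for each fixed norm budget $Q$ through Rademacher complexity, then glue these bounds together over a geometric sequence of budgets by a union bound. This is the template of \cite{ma2019priori}, the only new input being Theorem \ref{Rad2-NN}.

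First I would pass from the loss class to the network class. Since $\mathcal{T}_{[0,1]}f(\bx;\theta)\in[0,1]$ and $y\in[0,1]$, the truncated square loss $\ell(\bx,y;\theta)$ lies in $[0,\tfrac{1}{2}]$, and for each fixed $y$ the map $t\mapsto\tfrac{1}{2}(\mathcal{T}_{[0,1]}t-y)^2$ is $1$-Lipschitz (its derivative is $(\mathcal{T}_{[0,1]}t-y)\mathbf{1}_{[0,1]}(t)$, of modulus at most $1$). Writing $\cF_Q=\{f_m(\cdot;\theta):\|\theta\|_{\tP}\le Q\}$ and letting $\ell\circ\cF_Q$ denote the induced loss class, the contraction lemma (Lemma \ref{RadContraction}) gives $\rad_n(\ell\circ\cF_Q)\le\rad_n(\cF_Q)$, while Theorem \ref{Rad2-NN} gives $\rad_n(\cF_Q)\le 2\gamma(\sigma)Q\sqrt{2\ln(2d+2)/n}$. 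Feeding this into Theorem \ref{RadGenGap} and using $\sup_{h,h'}\|h-h'\|_\infty\le\tfrac{1}{2}$ yields: with probability at least $1-\delta'$, $\sup_{\|\theta\|_{\tP}\le Q}|R(\theta)-\hat R_n(\theta)|\le 4\gamma(\sigma)Q\sqrt{2\ln(2d+2)/n}+\sqrt{2\ln(4/\delta')/n}$.

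Then I would peel. For $l=0,1,2,\dots$ apply the previous bound with $Q=2^{l}$ and confidence level $\delta_l\propto\delta/(l+1)^2$, normalized so $\sum_{l\ge0}\delta_l=\delta$; on the intersection of these events (probability $\ge 1-\delta$) the fixed-budget bound holds for all $l$ simultaneously. For a given $\theta$ I take the least $l$ with $\|\theta\|_{\tP}\le 2^{l}$, so that $2^{l}\le 2(\|\theta\|_{\tP}+1)$ --- this is exactly the step that turns the coefficient $4\gamma(\sigma)$ into $8\gamma(\sigma)=2C'_\sigma$. The residual term $\sqrt{2\ln(4/\delta_l)/n}$ splits (via $\sqrt{a+b}\le\sqrt a+\sqrt b$) into a $\sqrt{2\ln(7/\delta)/n}$ piece --- the Basel constant $\pi^2/6$ is what makes the logarithm's argument at most $7/\delta$ --- plus a $2\sqrt{\ln(l+1)/n}$ piece, which is itself dominated by $(\|\theta\|_{\tP}+1)/\sqrt n$. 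Collecting terms gives exactly $(\|\theta\|_{\tP}+1)(8\gamma(\sigma)\sqrt{2\ln(2d+2)}+1)/\sqrt n+\sqrt{2\ln(7/\delta)/n}$, i.e. the claim with $C'_\sigma=4\gamma(\sigma)$.

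I do not anticipate a genuine obstacle; the argument is routine once Theorem \ref{Rad2-NN} is in hand. The points that need care are: checking that the truncated-square-loss composition is honestly $1$-Lipschitz so the contraction step costs no constant; using a \emph{geometric} rather than arithmetic grid of budgets, so the leading coefficient comes out as $2C'_\sigma$ and the $\sqrt{\ln l}$ remainders are absorbed into $\|\theta\|_{\tP}+1$; and tuning the weights $\delta_l$ (Basel-type, $\propto 1/(l+1)^2$) so that precisely $\ln(7/\delta)$ appears.
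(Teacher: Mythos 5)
Your proposal is correct and follows essentially the same route as the paper: contraction of the truncated square loss onto $\cF_Q$, the fixed-budget bound via Theorem \ref{Rad2-NN} and Theorem \ref{RadGenGap}, and a union bound over a grid of norm budgets with Basel-type weights, yielding the $\ln(7/\delta)$ term and absorbing the residual into $(\|\theta\|_{\tP}+1)/\sqrt{n}$. The only (immaterial) difference is that the paper peels over the arithmetic grid $Q=1,2,\dots$ after inflating the Rademacher constant to $C'_\sigma=4\gamma(\sigma)$, whereas you keep the tight constant $2\gamma(\sigma)$ and recover the same final coefficient $2C'_\sigma$ from the factor of $2$ lost on the geometric grid.
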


\begin{proof}
Define $\mathcal{H}_Q=\left\{\ell(\bx,y;\theta):\|\theta\|_{\tP}\le Q\right\}$. For any $\{\bx_i\}_{i=1}^n$, let $\phi_i(y)=\frac{1}{2}\left|\mathcal{T}_{[0,1]}y-f^*(\bx_i)\right|^2$, $i=1,2,\cdots,n$, then it is easy to have
\begin{align*}
   |\phi_i(y)-\phi_i(y')|&=\frac{1}{2}\left|\mathcal{T}_{[0,1]}y-\mathcal{T}_{[0,1]}y'\right|\left|(\mathcal{T}_{[0,1]}y-f^*(\bx_i))+(\mathcal{T}_{[0,1]}y'-f^*(\bx_i))\right|\\
   &\le\frac{1}{2}\left|\mathcal{T}_{[0,1]}y-\mathcal{T}_{[0,1]}y'\right|\left(\left|\mathcal{T}_{[0,1]}y-f^*(\bx_i)\right|+\left|\mathcal{T}_{[0,1]}y'-f^*(\bx_i)\right|\right)\\
   &\le \left|\mathcal{T}_{[0,1]}y-\mathcal{T}_{[0,1]}y'\right|
   \le \left|y-y'\right|, 
\end{align*} 
i.e. $\phi_i(\cdot)$ is a 1-Lipschitz function. By the contraction lemma (Lemma $\ref{RadContraction}$), we have 
$
   \rad_n(\mathcal{H}_Q)\leq \rad_n(\mathcal{F}_Q).
$
By Theorem $\ref{RadGenGap}$, for any $\delta\in(0,1)$, with probability at least $1-\delta$ over the random training samples $\{\bx_i\}_{i=1}^n$,
\begin{align*}
   \sup_{\|\theta\|_{\tP}\le Q}\left|R(\theta)-\hat{R}_n(\theta)\right|
   \le 2\rad_n(\mathcal{H}_Q)+\sqrt{\frac{2\ln{(4/\delta)}}{n}}
   \le 2\rad_n(\mathcal{F}_Q)+\sqrt{\frac{2\ln{(4/\delta)}}{n}}.
\end{align*} 
According to Theorem $\ref{Rad2-NN}$,
\begin{align}\label{Rad2Res}
    \rad_n(\mathcal{F}_Q)\le C'_{\sigma}Q\sqrt{\frac{2\ln{(2d+2)}}{n}},
\end{align} 
where $C'_{\sigma}=4\gamma(\sigma)$. 
Therefore, with probability at least $1-\delta$,
\begin{align*}
   \sup_{\|\theta\|_{\tP}\le Q}\left|R(\theta)-\hat{R}_n(\theta)\right|
   \le 2C'_{\sigma}Q\sqrt{\frac{2\ln{(2d+2)}}{n}}+\sqrt{\frac{2\ln{(4/\delta)}}{n}}.
\end{align*} 

Now take $Q=1,2,\cdots$ and $\delta_Q=6\delta/(\pi Q)^2$, then with probability at least $1-\sum_{Q=1}^{\infty}\delta_Q=1-\delta$, the upper bound 
\begin{align*}
   \sup_{\|\theta\|_{\tP}\le Q}\left|R(\theta)-\hat{R}_n(\theta)\right|
   \le 2C'_{\sigma}Q\sqrt{\frac{2\ln{(2d+2)}}{n}}+\sqrt{\frac{2\ln{(4/\delta_Q)}}{n}}
\end{align*}
holds for all $Q\in\mathbb{N}^*$. Hence, for any given $\theta$, we can take $Q=\lceil\|\theta\|_{\tP}\rceil$, then $\|\theta\|_{\tP}\le Q<\|\theta\|_{\tP}+1$. Using the fact that $\sqrt{a+b}\le\sqrt{a}+\sqrt{b}$ for any $a,b\ge 0$ and $2\ln{t}<t$ for any $t\ge 1$, we have
\begin{align*}
   \left|R(\theta)-\hat{R}_n(\theta)\right|&\le\sup_{\|\theta\|_{\tP}\le Q}\left|R(\theta)-\hat{R}_n(\theta)\right|\\
   &\le 2C'_{\sigma}(\|\theta\|_{\tP}+1)\sqrt{\frac{2\ln{(2d+2)}}{n}}
   +\sqrt{\frac{2}{n}\ln{\left(\frac{2\pi^2}{3\delta}(\|\theta\|_{\tP}+1)^2\right)}}\\
   &\le 2C'_{\sigma}(\|\theta\|_{\tP}+1)\sqrt{\frac{2\ln{(2d+2)}}{n}}+\sqrt{\frac{2}{n}\ln{\left(\frac{2\pi^2}{3\delta}\right)}}+\sqrt{\frac{2\ln{\left(\|\theta\|_{\tP}+1\right)^2}}{n}}\\
   &\le 2C'_{\sigma}(\|\theta\|_{\tP}+1)\sqrt{\frac{2\ln{(2d+2)}}{n}}+\sqrt{\frac{2\ln{(7/\delta)}}{n}}+\frac{(\|\theta\|_{\tP}+1)}{\sqrt{n}}\\
   &=(\|\theta\|_{\tP}+1)\frac{2C'_{\sigma}\sqrt{2\ln{(2d+2)}}+1}{\sqrt{n}}+\sqrt{\frac{2\ln{(7/\delta)}}{n}},
\end{align*}
which completes the proof.
\end{proof}

\subsubsection{A priori estimates}
%\textbf{(2) A priori estimate}

\begin{proof}
Firstly, a direct estimate can be performed on $R(\hat{\theta}_n)$:
\begin{align*}
   R(\hat{\theta}_n)&=R(\tilde{\theta})+\left[R(\hat{\theta}_n)-R(\tilde{\theta})\right]\\
   &=R(\tilde{\theta})+\left[R(\hat{\theta}_n)-J(\hat{\theta}_n)\right]+\left[J(\hat{\theta}_n)-J(\tilde{\theta})\right]+\left[J(\tilde{\theta})-R(\tilde{\theta})\right]\\
   &\le R(\tilde{\theta})+\left[R(\hat{\theta}_n)-J(\hat{\theta}_n)\right]+\left[J(\tilde{\theta})-R(\tilde{\theta})\right],
\end{align*}
where the last inequality uses the fact that $J(\hat{\theta}_n)=\min_{\theta}J(\theta)$. Then
\begin{align*}
   R(\hat{\theta}_n)
   &\le R(\tilde{\theta})+\left[R(\hat{\theta}_n)-\hat{R}_n(\hat{\theta}_n)\right]+\left[\hat{R}_n(\hat{\theta}_n)-J(\hat{\theta}_n)\right]+\left[J(\tilde{\theta})-\hat{R}_n(\tilde{\theta})\right]+\left[\hat{R}_n(\tilde{\theta})-R(\tilde{\theta})\right]\\
   &\le R(\tilde{\theta})+\left|R(\hat{\theta}_n)-\hat{R}_n(\hat{\theta}_n)\right|-\lambda\|\hat{\theta}_n\|_{\tP}+\lambda\|\tilde{\theta}\|_{\tP}+\left|R(\tilde{\theta})-\hat{R}_n(\tilde{\theta})\right|.
\end{align*}
According to Theorem $\ref{PosEst}$, for any $\delta\in(0,1)$, with probability at least $1-\delta/2$ over the random training samples $\{\bx_i\}_{i=1}^n$,
\begin{align*}
   \left|R(\hat{\theta}_n)-\hat{R}_n(\hat{\theta}_n)\right|\le (\|\hat{\theta}_n\|_{\tP}+1)\frac{2C'_{\sigma}\sqrt{2\ln{(2d+2)}}+1}{\sqrt{n}}+\sqrt{\frac{2\ln{(14/\delta)}}{n}},
\end{align*} 
and 
\begin{align*}
   \left|R(\tilde{\theta})-\hat{R}_n(\tilde{\theta})\right|\le (\|\tilde{\theta}\|_{\tilde{P}}+1)\frac{2C'_{\sigma}\sqrt{2\ln{(2d+2)}}+1}{\sqrt{n}}+\sqrt{\frac{2\ln{(14/\delta)}}{n}}.
\end{align*} 
Thus, with probability at least $1-\delta$ over the random training samples $\{\bx_i\}_{i=1}^n$, the above two inequalities hold simultaneously. Therefore,
\begin{align}
   R(\hat{\theta}_n)&\le R(\tilde{\theta})+(\|\hat{\theta}_n\|_{\tP}+1)\lambda_n-\lambda\|\hat{\theta}_n\|_{\tP}+\lambda\|\tilde{\theta}\|_{\tP}+(\|\tilde{\theta}\|_{\tP}+1)\lambda_n+2\sqrt{\frac{2\ln{(14/\delta)}}{n}}\nonumber\\
   &=R(\tilde{\theta})+\lambda\|\tilde{\theta}\|_{\tP}+(\lambda_n-\lambda)\|\hat{\theta}_n\|_{\tP}+(\|\tilde{\theta}\|_{\tP}+2)\lambda_n+2\sqrt{\frac{2\ln{(14/\delta)}}{n}}\nonumber \\
   &\le R(\tilde{\theta})+\lambda\|\tilde{\theta}\|_{\tP}+(\|\tilde{\theta}\|_{\tP}+2)\lambda_n+2\sqrt{\frac{2\ln{(14/\delta)}}{n}}.\label{InterRhat}
\end{align}
Combining with Theorem $\ref{2-NNApp}$ yields $(\ref{2-NNPriorEst})$, which completes the proof.
\end{proof}

\section{Proofs for Section~\ref{SecResRad}}

\subsection{Proofs for Theorem \ref{RadResNet}}\label{sec: pfSecResRad}

\subsubsection{Extension of norms to hidden neurons} \label{sec:ExtHidNeu}

To use the method of induction to prove Theorem \ref{RadResNet}, firstly we need to extend the definition of modified weighted path norm~\eqref{PathRes} (equivalently (\ref{PathResr})-(\ref{PathResM})) to the hidden neurons in the residual network.

\begin{definition}\label{eqn: def-norm-resnetHid}
Given a residual network defined by $(\ref{ResNet})$, let
\begin{align*}
\bg_l=\sigma(W_l\bh_{l-1}),\ l=1,2,\cdots,L,
\end{align*}
and $g_l^i$ be the $i$-th element of $\bg_l$. Define its norm to be 
\begin{align}\label{PathResHid}
\|g_l^i\|_{\tP}=c\sum_{k=0}^{l-1}\||W_l^{i,:}|(I+c|U_{l-1}||W_{l-1}|)\cdots(I+c|U_{k+1}||W_{k+1}|)|U_k|\|_1,
\end{align}
with the convention that $A_{l-1}A_{l-2}\cdots A_{k+1}=I$ when $k\ge l-1$. Here $U_0:=V$, $W_l^{i,:}$ is the $i$-th row of $W_l$, and $c:=c_{\sigma}>0$ is an absolute constant only related to the activation function.
\end{definition}

Obviously this definition is a natural extension of the norm~\eqref{PathRes} to hidden neurons, and consistent with \eqref{PathRes}. Next, we give the following recursive form of the norm~\eqref{PathResHid}, which will be used in the proof of Theorem~\ref{RadResNet}.

\begin{theorem}\label{thm: recursive_path_resHid}
The norm~\eqref{PathResHid} can be written as a modification of the weighted path norm for hidden neurons $\|g_l^i\|_{\cP}$ defined in~\cite{ma2019priori}:
\begin{align}
\|g_l^i\|_{\tP}&=c\||W_l^{i,:}|(I+c|U_{l-1}||W_{l-1}|)\cdots(I+c|U_1||W_1|)|V|\|_1\label{WtPthNrm}\\
&\quad + c\sum_{k=1}^{l-1}\||W_l^{i,:}|(I+c|U_{l-1}||W_{l-1}|)\cdots(I+c|U_{k+1}||W_{k+1}|)|U_k|\|_1\label{Modli}\\
&:=\|g_l^i\|_{\cP}+ M_{l,i},\label{Decompsition}
\end{align}
where the modification term $M_{l,i}$ satisfies
\begin{align}\label{PathResMHidi}
M_{l,i}=c|W_{l}^{i,:}|\sum_{k=1}^{l-1}|U_k|(\bM_k+\bm{1}_m),\quad l=2,3,\cdots,L, \ i=1,2,\cdots,m,
\end{align}
with $\bM_1=\bm{0}_m$. Here $\bM_k:=(M_{k,1},M_{k,2},\cdots,M_{k,m})^T$ for $k=1,2,\cdots,l$.
\end{theorem}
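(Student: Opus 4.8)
The plan is to reduce the claim to a single vector recursion in $\bR^{D}$. The decomposition \eqref{WtPthNrm}--\eqref{Decompsition} is itself tautological: the $k=0$ summand of \eqref{PathResHid} is, by the convention $U_0:=V$, exactly the weighted-path-norm term $\|g_l^i\|_{\cP}$ of \cite{ma2019priori} associated to the hidden neuron $g_l^i$ (with weight factor $c$), and the summands $k=1,\dots,l-1$ are what we call $M_{l,i}$. So the only substantive point is to check that this $M_{l,i}$ obeys the recursion \eqref{PathResMHidi}.

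First I would record that, since $|U_k|$, $|W_k|$, hence each factor $I+c|U_s||W_s|$, and $|W_l^{i,:}|$ have nonnegative entries, every $\ell_1$-norm appearing in \eqref{Modli} is just the sum of the entries of the nonnegative row vector inside it; writing $R_{l,k}:=(I+c|U_{l-1}||W_{l-1}|)\cdots(I+c|U_{k+1}||W_{k+1}|)$ with the convention $R_{l,l-1}:=I$, this gives
\[
M_{l,i}=c\,|W_l^{i,:}|\,\bm{S}_l,\qquad \bm{S}_l:=\sum_{k=1}^{l-1}R_{l,k}\,|U_k|\,\bm{1}_m\ \in\ \bR^{D}.
\]
Next I would peel off the $k=l-1$ term of $\bm{S}_l$ (which equals $|U_{l-1}|\bm{1}_m$, since $R_{l,l-1}=I$) and use $R_{l,k}=(I+c|U_{l-1}||W_{l-1}|)R_{l-1,k}$ for $k\le l-2$ to obtain
\[
\bm{S}_l=|U_{l-1}|\bm{1}_m+(I+c|U_{l-1}||W_{l-1}|)\,\bm{S}_{l-1},\qquad \bm{S}_1=\bm{0}.
\]

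The theorem then follows once I show $\bm{S}_l=\sum_{k=1}^{l-1}|U_k|(\bM_k+\bm{1}_m)$, which I would prove by induction on $l$. The base case $l=1$ is trivial (both sides are empty sums). For the step, expand $(I+c|U_{l-1}||W_{l-1}|)\bm{S}_{l-1}=\bm{S}_{l-1}+|U_{l-1}|\big(c|W_{l-1}|\bm{S}_{l-1}\big)$; by the displayed identity above the $i$-th entry of $c|W_{l-1}|\bm{S}_{l-1}$ is $M_{l-1,i}$, so $c|W_{l-1}|\bm{S}_{l-1}=\bM_{l-1}$, and substituting the inductive hypothesis $\bm{S}_{l-1}=\sum_{k=1}^{l-2}|U_k|(\bM_k+\bm{1}_m)$ for the remaining copy of $\bm{S}_{l-1}$ and regrouping gives $\bm{S}_l=|U_{l-1}|(\bM_{l-1}+\bm{1}_m)+\sum_{k=1}^{l-2}|U_k|(\bM_k+\bm{1}_m)=\sum_{k=1}^{l-1}|U_k|(\bM_k+\bm{1}_m)$. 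Left-multiplying by $c|W_l^{i,:}|$ yields exactly \eqref{PathResMHidi}, and $M_{1,i}=0$ (i.e. $\bM_1=\bm{0}_m$) because $\bm{S}_1=\bm{0}$.

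I do not expect a genuine obstacle: the argument is bookkeeping together with one induction. The points requiring care are the conventions for the (possibly empty) matrix products $R_{l,k}$ and the base case, and the observation that makes the induction close — namely that rewriting each $\ell_1$-norm via nonnegativity identifies $c|W_{l-1}|\bm{S}_{l-1}$ with $\bM_{l-1}$, so the recursion for $\bm{S}_l$ folds back into the same closed form.
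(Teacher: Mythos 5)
Your proposal is correct and follows essentially the same route as the paper: the paper introduces the accumulated matrix $Z_{l+1}=\sum_{k=1}^{l}(I+c|U_l||W_l|)\cdots(I+c|U_{k+1}||W_{k+1}|)|U_k|$ and the telescoping identity $Z_{l+1}-Z_l=|U_l|(c|W_l|Z_l+I)$, which is exactly your recursion for $\bm{S}_l=Z_l\bm{1}_m$, and then closes the induction by identifying $c|W_k|Z_k\bm{1}_m$ with $\bM_k$. The only cosmetic difference is that you verify one direction (the closed form satisfies the recursion) and let uniqueness of the recursion's solution do the rest, whereas the paper spells out both implications explicitly; this changes nothing of substance.
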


\begin{proof}
The fact that $\bM_1=\bm{0}_m$ is obvious. We need to prove the equivalence of $(\ref{Modli})$ and $(\ref{PathResMHidi})$, i.e. 
\begin{align}
\bM_{l+1}&=c|W_{l+1}|\sum_{k=1}^{l}|U_k|(\bM_k+\bm{1}_m),\quad\bM_1=\bm{0}_m,\label{PathResMHid}\\
\Leftrightarrow\bM_{l+1}&=c|W_{l+1}|\sum_{k=1}^{l}(I+c|U_{l}||W_{l}|)(I+c|U_{l-1}||W_{l-1}|)\cdots(I+c|U_{k+1}||W_{k+1}|)|U_k|\bm{1}_m\label{Modl}
\end{align}
for $l=1,2,\cdots,L-1$. Denote 
\begin{align*}
Z_{l+1}=\sum_{k=1}^{l}(I+c|U_l||W_l|)(I+c|U_{l-1}||W_{l-1}|)\cdots(I+c|U_{k+1}||W_{k+1}|) |U_k|,
\end{align*}
and $Z_1=0$. Then we have $Z_2=|U_1|$ by convention, and
\begin{align*}
Z_{l+1}&=(I+c|U_l||W_l|)\sum_{k=1}^{l-1}(I+c|U_{l-1}||W_{l-1}|)\cdots(I+c|U_{k+1}||W_{k+1}|) |U_k|+|U_l|\\
&=(I+c|U_l||W_l|)Z_l+|U_l|,
\end{align*}
or
\begin{align}\label{ResZ}
Z_{l+1}-Z_l=|U_l|(c|W_l|Z_l+I).
\end{align}

\textbf{(i)}~(\ref{PathResMHid})$\Rightarrow$(\ref{Modl}). The aim is to prove 
\begin{align}\label{ModlZ}
\bM_{l+1}=c|W_{l+1}|Z_{l+1}\bm{1}_m,\quad l=1,2,\cdots,L-1, 
\end{align}
This is done by induction. 

For $l=1$, since $\bM_1=\bm{0}_m$, (\ref{PathResMHid}) gives $\bM_2=c|W_2||U_1|\bm{1}_m=c|W_2|Z_2\bm{1}_m$. Assume that $\bM_k=c|W_k|Z_k\bm{1}_m$ for $k=1,2,\cdots,l$, applying $(\ref{ResZ})$ to have
\begin{align*}
\bM_{l+1}&=c|W_{l+1}|\sum_{k=1}^l|U_k|(\bM_k+\bm{1}_m)\\
&=c|W_{l+1}|\sum_{k=1}^l|U_k|(c|W_k|Z_k+I)\bm{1}_m\\
&=c|W_{l+1}|\sum_{k=1}^l(Z_{k+1}-Z_k)\bm{1}_m\\
&=c|W_{l+1}|Z_{l+1}\bm{1}_m.
\end{align*}

\textbf{(ii)}~(\ref{Modl})$\Rightarrow$(\ref{PathResMHid}). Assume that $\bM_{k}=c|W_{k}|Z_{k}\bm{1}_m$ for $k=2,3,\cdots,L$.\footnote{This assumption is also consistent with $k=1$ by the convention that $Z_1=0$.} Again, by $(\ref{ResZ})$, for any $l=1,2,\cdots,L-1$, we have
\begin{align*}
&c|W_{l+1}|\sum_{k=1}^l|U_k|(\bM_k+\bm{1}_m)\\
=~&c|W_{l+1}|\sum_{k=1}^l|U_k|(c|W_k|Z_k+I)\bm{1}_m\\
=~&c|W_{l+1}|\sum_{k=1}^l(Z_{k+1}-Z_k)\bm{1}_m\\
=~&c|W_{l+1}|Z_{l+1}\bm{1}_m=\bM_{l+1}.
\end{align*}

Combining \textbf{(i)} and \textbf{(ii)} finishes the proof. 
\end{proof}

\begin{remark}\label{rmk:prfmodsame}
   The $\bM_{l}$ defined in Theorem \ref{thm: recursive_path_resHid} is exactly the desired $\bM_{l}$ in Lemma \ref{thm: recursive_path_res}. As a result, the proof of Lemma \ref{thm: recursive_path_res} is totally the same as above. That is to say, the recursive definition (\ref{PathResr})-(\ref{PathResM}) is equivalent to the closed form~\eqref{PathRes}.
\end{remark}

\begin{remark}
   Theorem \ref{thm: recursive_path_resHid} and Lemma \ref{thm: recursive_path_res} show that $\bM_{l}$ and $r$ can be viewed as the modification of the weighted path norm (see \cite{ma2019priori}) at the $l$-th layer and the output, respectively. In addition, this modification captures the effect of all bias, and can be obtained recursively.
\end{remark}

\begin{remark}
Since the modification is caused by the approximation for activation functions (utilizing two-layer ReLU networks), it does not occur until the first activation. Therefore, the modified terms~(\ref{PathResr}) and (\ref{Modli}) do not contain $V$ and $W_1$, since they are both before the first activation.
\end{remark}

The following lemmas establish the relationship between $\|\theta\|_{\cP}$ and $\|g_l^i\|_{\cP}$. It is in fact similar to the ReLU case handled in~\cite{ma2019priori}, only with a change of weight factor from $3$ to any constant $c>0$.

\begin{lemma}\label{GPathRel}
For the weighted path norm $\|\theta\|_{\cP}$ and $\|g_l^i\|_{\cP}$, we have 
\begin{align}
\|\theta\|_{\cP}=\sum_{l=1}^L\sum_{j=1}^m\left(|\bm{\alpha}|^T|U_l^{:,j}|\right)\|g_l^j\|_{\cP}+\||\bm{\alpha}|^T|V|\|_1,\label{PathNormRel1}
\end{align}
and
\begin{align}
\|g_l^i\|_{\cP}=c\sum_{k=1}^{l-1}\sum_{j=1}^m\left(|W_l^{i,:}||U_k^{:,j}|\right)\|g_k^j\|_{\cP}+c\||W_l^{i,:}||V|\|_1,\label{PathNormRel2}
\end{align}
where $U_l^{:,j}$ is the $j$-th column of $U_l$.
\end{lemma}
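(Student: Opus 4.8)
The statement asserts two identities: \eqref{PathNormRel1} expressing the weighted path norm $\|\theta\|_{\cP}$ of the full residual network in terms of the hidden-neuron norms $\|g_l^j\|_{\cP}$, and \eqref{PathNormRel2} giving a recursion for $\|g_l^i\|_{\cP}$ itself. My plan is to prove both by unwinding the defining products \eqref{WtPthNrm}/\eqref{PathResHid} one factor at a time, exploiting that all matrices involved have non-negative entries so that $\ell_1$-norms of products of absolute-value matrices distribute over sums cleanly. Concretely, recall (the $\|\cdot\|_{\cP}$-part of) the definitions: $\|\theta\|_{\cP} = \| |\balpha|^T (I+c|U_L||W_L|)\cdots(I+c|U_1||W_1|) |V| \|_1$ and $\|g_l^i\|_{\cP} = c\, \| |W_l^{i,:}| (I+c|U_{l-1}||W_{l-1}|)\cdots(I+c|U_1||W_1|) |V| \|_1$.

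First I would prove \eqref{PathNormRel2}. Peel off the outermost factor: since all entries are non-negative, $|W_l^{i,:}|(I+c|U_{l-1}||W_{l-1}|)\cdots = |W_l^{i,:}|\cdot(\text{rest}) + c|W_l^{i,:}||U_{l-1}||W_{l-1}|\cdot(I+c|U_{l-2}||W_{l-2}|)\cdots$, and because everything is entrywise non-negative the $\ell_1$ norm of a sum of such non-negative vectors is the sum of the $\ell_1$ norms. Iterating this distributive expansion down through layers $l-1, l-2, \dots, 1$ and finally hitting $|V|$ produces exactly the telescoped sum $c\sum_{k=1}^{l-1} \||W_l^{i,:}| (I+c|U_{l-1}||W_{l-1}|)\cdots(I+c|U_{k+1}||W_{k+1}|) |U_k| |W_k|(\cdots)|V|\|_1 + c\||W_l^{i,:}||V|\|_1$; matching the inner part $|W_k|(I+c|U_{k-1}||W_{k-1}|)\cdots|V|$ against the definition of $\|g_k^j\|_{\cP}$ for each column index $j$ (note $|W_k^{j,:}|$ is row $j$ of $|W_k|$, and $|W_l^{i,:}||U_k| = \sum_j (|W_l^{i,:}||U_k^{:,j}|) e_j^T$ picks out those rows with the stated non-negative coefficients), and using $\|g_k^j\|_{\cP} = c\||W_k^{j,:}|(\cdots)|V|\|_1$, yields \eqref{PathNormRel2}. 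Cleanest is an induction on $l$: the base case $l=1$ gives $\|g_1^i\|_{\cP} = c\||W_1^{i,:}||V|\|_1$ (empty sum), and the inductive step is exactly the single peel-off move combined with the definition/IH. The same argument proves \eqref{PathNormRel1}, peeling $|\balpha|^T(I+c|U_L||W_L|)\cdots$ from the top: $|\balpha|^T(\text{rest})|V| + \sum_{l,j}(|\balpha|^T|U_l^{:,j}|)\cdot |W_l^{j,:}|(\cdots)|V|$, and recognizing the inner factors as $c^{-1}\|g_l^j\|_{\cP}$... actually more directly, one peels all the way and recognizes each appearance of $|W_l^{j,:}|(I+c|U_{l-1}||W_{l-1}|)\cdots|V|$ as $c^{-1}\|g_l^j\|_{\cP}$, giving the stated coefficients $|\balpha|^T|U_l^{:,j}|$ and the residual term $\||\balpha|^T|V|\|_1$.

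The only real bookkeeping hazard — and the step I expect to be fiddliest — is keeping the index gymnastics straight when converting between "the $i$-th row of a matrix product" and "a sum over intermediate column indices $j$ of scalar coefficients times $\|g_k^j\|_{\cP}$", i.e. the identities $|W_l^{i,:}|\,|U_k| \,= \sum_{j=1}^m \bigl(|W_l^{i,:}||U_k^{:,j}|\bigr)\, (e^{(j)})^T$ and $|\balpha|^T |U_l| = \sum_{j=1}^m \bigl(|\balpha|^T|U_l^{:,j}|\bigr)(e^{(j)})^T$, and verifying that the telescoping of the peel-off terminates correctly at the $V$ factor with no off-by-one error in the range of $k$ (it should run $k=1$ to $l-1$ in \eqref{PathNormRel2} and $l=1$ to $L$ in \eqref{PathNormRel1}). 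Since this is structurally the $c$-reweighted version of the corresponding identity for ReLU nets in \cite{ma2019priori}, I would either cite that computation and note the weight factor generalizes verbatim, or write out the one-line induction; no genuinely new idea is needed beyond non-negativity of the matrices.
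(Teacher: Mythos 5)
Your approach is essentially the paper's: telescope the product of the factors $(I+c|U_k||W_k|)$, use entrywise non-negativity to split the $\ell_1$ norm over the resulting sum of non-negative vectors, and decompose $|U_k||W_k|=\sum_j|U_k^{:,j}||W_k^{j,:}|$ to recognize each summand as $(|W_l^{i,:}||U_k^{:,j}|)\cdot c\||W_k^{j,:}|Z_{k-1}|V|\|_1=(|W_l^{i,:}||U_k^{:,j}|)\|g_k^j\|_{\cP}$; the paper packages the telescoping as the recursion $Z_l=c\sum_{i=1}^l|U_i||W_i|Z_{i-1}+I$ and then substitutes. One displayed formula in your writeup is, however, wrong as written: in the ``telescoped sum'' for \eqref{PathNormRel2} you insert the product $(I+c|U_{l-1}||W_{l-1}|)\cdots(I+c|U_{k+1}||W_{k+1}|)$ between $|W_l^{i,:}|$ and $|U_k|$ while also keeping the full lower product $(I+c|U_{k-1}||W_{k-1}|)\cdots|V|$ to the right of $|W_k|$. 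A product of factors $(I+cA_k)$ telescopes with the full sub-product on one side of $A_k$ only; keeping it on both sides double-counts the cross terms (already at three factors you get $2c^2A_2A_1$ instead of $c^2A_2A_1$) and would yield coefficients $|W_l^{i,:}|(I+c|U_{l-1}||W_{l-1}|)\cdots|U_k^{:,j}|$ rather than the $|W_l^{i,:}||U_k^{:,j}|$ appearing in the statement. The correct $k$-th term is simply $c\,|W_l^{i,:}||U_k||W_k|(I+c|U_{k-1}||W_{k-1}|)\cdots(I+c|U_1||W_1|)|V|$, with nothing between $|W_l^{i,:}|$ and $|U_k|$ --- which is in fact what your verbal description of the peel-off (iterating on the ``rest'' term) and your induction-on-$l$ variant produce, so this is a transcription slip rather than a conceptual gap. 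If you do run the induction on $l$, note that the leftover term $c\||W_l^{i,:}|Z_{l-2}|V|\|_1$ is not literally a $\|g_{l-1}^j\|_{\cP}$, so you should carry the induction hypothesis for an arbitrary non-negative row vector in place of the $W$-row (or just prove the $Z$-recursion once and substitute, as the paper does).
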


\begin{proof}
Denote $Z_l=(I+c|U_{l}||W_{l}|)(I+c|U_{l-1}||W_{l-1}|)\cdots(I+c|U_1||W_1|)$, $l=1,2,\cdots,L$, and $Z_0=I$. Then we have
\begin{align*}
Z_l&=(I+c|U_{l}||W_{l}|)Z_{l-1}=c|U_{l}||W_{l}|Z_{l-1}+Z_{l-1}\\
&=c|U_{l}||W_{l}|Z_{l-1}+c|U_{l-1}||W_{l-1}|Z_{l-2}+Z_{l-2}\\
&=\cdots=c\sum_{i=1}^l|U_i||W_i|Z_{i-1}+I.
%\\&=c\sum_{i=1}^l|U_i||W_i|\prod_{j=1}^{i-1}(I+c|U_{i-j}||W_{i-j}|)+I.
\end{align*}
%Rewrite the definition of $\|\theta\|_{P}$ and $\|g_l^i\|_P$ to be
%\begin{align*}
%\|\theta\|_{P}&=\||u|^T(I+c|U_L||W_L|)\cdots(I+c|U_1||W_1|)|V|\|_1=\||u|^TZ_L|V|\|_1,\\
%\|g_l^i\|_P&=c\||W_l^{i,:}|(I+c|U_{l-1}||W_{l-1}|)\cdots(I+c|U_1||W_1|)|V|\|_1=c\||W_l^{i,:}|Z_{l-1}|V|\|_1,
%\end{align*}
Therefore
\begin{align*}
\|\theta\|_{\cP}&=\||\bm{\alpha}|^TZ_L|V|\|_1=\left\||\bm{\alpha}|^T\left(c\sum_{l=1}^L|U_l||W_l|Z_{l-1}+I\right)|V|\right\|_1\\
&=\sum_{l=1}^Lc\left\||\bm{\alpha}|^T|U_l||W_l|Z_{l-1}|V|\right\|_1+\left\||\bm{\alpha}|^T|V|\right\|_1\\
&=\sum_{l=1}^Lc\left\|\sum_{j=1}^m\left(|\bm{\alpha}|^T|U_l^{:,j}|\right)|W_l^{j,:}|Z_{l-1}|V|\right\|_1+\left\||\bm{\alpha}|^T|V|\right\|_1\\
&=\sum_{l=1}^L\sum_{j=1}^m\left(|\bm{\alpha}|^T|U_l^{:,j}|\right)\cdot c\left\||W_l^{j,:}|Z_{l-1}|V|\right\|_1+\left\||\bm{\alpha}|^T|V|\right\|_1\\
&=\sum_{l=1}^L\sum_{j=1}^m\left(|\bm{\alpha}|^T|U_l^{:,j}|\right)\cdot \|g_l^j\|_{\cP}+\left\||\bm{\alpha}|^T|V|\right\|_1,
\end{align*}
which gives $(\ref{PathNormRel1})$. The proof of $(\ref{PathNormRel2})$ is similar.
\end{proof}

The following two lemmas will be repeatedly utilized in the Rademacher calculus of residual networks (the proof of Theorem \ref{RadResNet}).

\begin{lemma}\label{RelG1}
Let $G_l^Q=\{g_l^i:\|g_l^i\|_{\tP}\le Q\}$. Then we have $G_{l'}^Q\subset G_l^Q$ for $l'\le l$.
\end{lemma}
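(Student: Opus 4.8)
The plan is to prove the containment by an explicit \emph{padding} construction. Recall that $G_l^Q$ collects the functions $\bx\mapsto g_l^i(\bx)$ that arise as a layer-$l$ hidden neuron of some residual network of the form~\eqref{ResNet} (over all choices of depth $\ge l$, of the widths, and of the index $i$) with $\|g_l^i\|_{\tP}\le Q$. So it suffices to show: given any residual network whose $i$-th layer-$l'$ hidden neuron satisfies $\|g_{l'}^i\|_{\tP}\le Q$, one can build another residual network --- deeper, but with the same residual-block width $m$ and skip width $D$ --- in which the \emph{same} function $\bx\mapsto g_{l'}^i(\bx)$ reappears as a layer-$l$ hidden neuron with an unchanged $\tP$-norm. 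The case $l=l'$ is trivial, so I assume $l'<l$.

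For the construction I would take the new parameters (written with tildes) to be $\tilde V=V$, with the first $l'-1$ residual blocks copied verbatim, $(\tilde W_k,\tilde U_k)=(W_k,U_k)$ for $k=1,\dots,l'-1$; then $l-l'$ \emph{trivial} blocks $k=l',\dots,l-1$ with $\tilde U_k=0$ (and, say, $\tilde W_k=0$); and finally $\tilde W_l=W_{l'}$ (with $\tilde U_l$ and $\tilde\balpha$ arbitrary, e.g.\ zero, since they affect neither the neuron $\tilde g_l^i$ nor its norm). A one-line forward-pass induction then gives $\tilde\bh_k=\bh_k$ for $0\le k\le l'-1$, and $\tilde\bh_k=\bh_{l'-1}$ for $l'-1\le k\le l-1$ because each inserted block contributes $\tilde U_k\sigma(\cdot)=0$ to the skip path; in particular $\tilde\bh_{l-1}=\bh_{l'-1}$, whence $\tilde g_l^i=\sigma\!\big((\tilde W_l\tilde\bh_{l-1})_i\big)=\sigma\!\big((W_{l'}\bh_{l'-1})_i\big)=g_{l'}^i$ as functions of $\bx$.

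It then remains to check $\|\tilde g_l^i\|_{\tP}=\|g_{l'}^i\|_{\tP}$, and here I would argue directly from the closed form~\eqref{PathResHid} --- which already absorbs the modification/bias term, so that the recursion~\eqref{PathResMHidi} need not be re-derived. Reading off the defining sum~\eqref{PathResHid} for $\|\tilde g_l^i\|_{\tP}$ (which runs over $k=0,\dots,l-1$): every factor $(I+c|\tilde U_k||\tilde W_k|)$ with $l'\le k\le l-1$ equals the identity; every summand indexed by such a $k$ vanishes because it carries $|\tilde U_k|=0$ as a factor; and for $0\le k\le l'-1$ the matrix product in the $k$-th summand collapses to $(I+c|U_{l'-1}||W_{l'-1}|)\cdots(I+c|U_{k+1}||W_{k+1}|)$ while $|\tilde W_l^{i,:}|=|W_{l'}^{i,:}|$, so that summand is exactly the $k$-th summand of $\|g_{l'}^i\|_{\tP}$. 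Summing, the two norms agree, hence $g_{l'}^i\in G_l^Q$ and the lemma follows.

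I do not expect a genuine obstacle here; the only delicate point is clerical --- correctly handling the block indices shifted by $l-l'$ and making sure the bias part of the norm is carried along unchanged --- and working from the closed form~\eqref{PathResHid} rather than from the recursion makes that correspondence visible by inspection. (One could alternatively track the modification vectors $\bM_l$ through the padded network and verify that the inserted blocks with $\tilde U_k=0$ leave them unchanged, but that route is strictly messier.)
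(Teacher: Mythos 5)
Your proof is correct and follows essentially the same route as the paper's: both pad the network with zero blocks $W_k=U_k=0$ for $k=l',\dots,l-1$ and take the final row equal to $W_{l'}^{i,:}$, so that the neuron's function and its $\tP$-norm are unchanged. The only cosmetic difference is that you verify the norm equality directly from the closed form \eqref{PathResHid}, whereas the paper checks the modification term via the recursion \eqref{PathResMHidi}; the two bookkeeping schemes are equivalent by Theorem \ref{thm: recursive_path_resHid}.
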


\begin{proof}
For any $g_{l'}^i\in G_{l'}^Q$, let $V,W_1,U_1,\cdots,W_{l'-1},U_{l'-1},W_{l'}^{i,:}$ be the parameters of $g_{l'}^i$. For any $l\ge l'$, consider $g_l^j$ generated by parameters $V,W_1,U_1,\cdots,W_{l'-1},U_{l'-1},W_{l'},U_{l'},\cdots,W_{l-1},U_{l-1},W_{l}^{j,:}$ with $W_k=U_k=0$ for $k=l',l'+1,\cdots,l-1$ and $W_{l}^{j,:}=W_{l'}^{i,:}$. Then it is easy to verify $g_l^j=g_{l'}^i$ and $\|g_{l'}^i\|_{\cP}=\|g_l^j\|_{\cP}$. By (\ref{PathResMHidi}), notice that
\begin{align*}
M_{l,j}=c|W_{l}^{j,:}|\sum_{k=1}^{l-1}|U_k|(\bM_k+\bm{1}_m)=c|W_{l'}^{i,:}|\sum_{k=1}^{l'-1}|U_k|(\bM_k+\bm{1}_m)=M_{l',i},
\end{align*}
we have $\|g_{l}^j\|_{\tP}=\|g_{l'}^i\|_{\tP}\le Q$. That is to say, $g_{l'}^i=g_{l}^j\in G_{l}^Q$.
\end{proof}

For convenience, we also write $\bg_l'=W_l\bh_{l-1}$ for $l=1,2,\cdots,L$, i.e. $\bg_l=\sigma(\bg_l')$, and $(g_l^{i})'$ be the $i$-th element of $\bg_l'$.

\begin{lemma}\label{RelG2Sca}
Let $(G_l^{Q})'=\{(g_l^{i})':\|(g_l^{i})'\|_{\tP}\le Q\}$, then $(G_l^{q})'\subset (G_l^{Q})'$ and $(G_l^{q})'=\frac{q}{Q} (G_l^{Q})'$.
\end{lemma}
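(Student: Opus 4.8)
The plan is to unwind the definitions and reduce everything to the fact that $(g_l^i)'$ is a \emph{linear} (positively homogeneous of degree $1$) functional of the last weight row $W_l^{i,:}$, together with the observation that the norm $\|(g_l^i)'\|_{\tP}$ scales linearly with $W_l^{i,:}$ while leaving the earlier parameters free. First I would record that a pre-activation $(g_l^i)' = W_l^{i,:}\bh_{l-1}$, where $\bh_{l-1}$ depends only on $V, W_1, U_1,\dots,W_{l-1},U_{l-1}$. By Definition \ref{eqn: def-norm-resnetHid} applied with the row $W_l^{i,:}$ playing the role of the terminal weight, $\|(g_l^i)'\|_{\tP}$ is the same expression \eqref{PathResHid} but without the outer activation, hence it is \emph{exactly linear} in the entries of $|W_l^{i,:}|$: multiplying $W_l^{i,:}$ by a scalar $t>0$ multiplies $\|(g_l^i)'\|_{\tP}$ by $t$, and multiplies the represented function $(g_l^i)'$ by $t$ as well, since $\bh_{l-1}$ is untouched.

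Next I would prove the scaling identity $(G_l^q)' = \tfrac{q}{Q}(G_l^Q)'$. For $\subseteq$: given $(g_l^i)' \in (G_l^q)'$ with parameters $(V,W_1,U_1,\dots,W_{l-1},U_{l-1},W_l^{i,:})$ and $\|(g_l^i)'\|_{\tP}\le q$, replace $W_l^{i,:}$ by $\tfrac{Q}{q}W_l^{i,:}$ (keeping all earlier parameters). By the homogeneity just noted, the new pre-activation equals $\tfrac{Q}{q}(g_l^i)'$ and has $\tP$-norm at most $Q$, so $\tfrac{Q}{q}(g_l^i)' \in (G_l^Q)'$, i.e. $(g_l^i)' \in \tfrac{q}{Q}(G_l^Q)'$. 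The reverse inclusion $\tfrac{q}{Q}(G_l^Q)' \subseteq (G_l^q)'$ is the same argument with the roles of $q$ and $Q$ swapped (scaling by $\tfrac{q}{Q}$). Since $q\le Q$ implies $\tfrac{q}{Q}(G_l^Q)' \subseteq (G_l^Q)'$ — because $\|\tfrac{q}{Q}(g_l^i)'\|_{\tP} = \tfrac{q}{Q}\|(g_l^i)'\|_{\tP}\le \tfrac{q}{Q}Q = Q$ whenever $\|(g_l^i)'\|_{\tP}\le Q$ — the inclusion $(G_l^q)'\subseteq (G_l^Q)'$ follows immediately from the identity.

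The main (minor) obstacle is purely bookkeeping: one must check that the norm \eqref{PathResHid}, when read for the pre-activation $(g_l^i)'$ rather than the post-activation $g_l^i$, is genuinely homogeneous of degree one in $W_l^{i,:}$ and that rescaling that last row does not alter the realizability of the other parameters or the recursive modification terms $\bM_1,\dots,\bM_{l-1}$ — which it does not, since those depend only on $W_1,U_1,\dots,W_{l-1},U_{l-1}$. Once this is noted the proof is a one-line scaling argument; I would present it in two or three sentences as above.
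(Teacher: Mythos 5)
Your proposal is correct and follows essentially the same route as the paper's proof: both arguments rest on the observation that the pre-activation $(g_l^i)'=W_l^{i,:}\bh_{l-1}$ and both components of $\|(g_l^i)'\|_{\tP}$ (the weighted path norm and the modification term $M_{l,i}$) are homogeneous of degree one in the terminal row $W_l^{i,:}$ while the earlier parameters and $\bM_1,\dots,\bM_{l-1}$ are untouched, followed by the two-sided rescaling of that row to get $\frac{q}{Q}(G_l^Q)'\subset (G_l^q)'$ and $\frac{Q}{q}(G_l^q)'\subset (G_l^Q)'$. The only cosmetic difference is that you derive the inclusion $(G_l^q)'\subset (G_l^Q)'$ from the scaling identity, whereas the paper notes it directly from $q\le Q$; both are fine.
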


\begin{proof}
Obviously we have $(G_l^q)'\subset (G_l^Q)'$ for any $q\le Q$. Notice that both the two parts of $\|(g_l^i)'\|_{\tP}$ ($\|(g_l^i)'\|_{\cP}$ and $M_{l,i}$) have an output parameter $W_{l}^{i,:}$, meaning that the scaling process can be done on the row vector $W_{l}^{i,:}$, we can easily have $(G_l^{q})'=\frac{q}{Q} (G_l^{Q})'$. 

In fact, for any $(g_{l}^i)'\in (G_{l}^Q)'$, define $(\tilde{g}_l^i)'$ by replacing the output parameter $W_{l}^{i,:}$ by $\frac{q}{Q}W_{l}^{i,:}$, then we have $(\tilde{g}_l^i)'=\frac{q}{Q}(g_{l}^i)'$, and by (\ref{Decompsition}),  $$\|(\tilde{g}_l^i)'\|_{\tP}=\|(\tilde{g}_l^i)'\|_{\cP}+\tilde{M}_{l,i}=\frac{q}{Q}\|(g_l^i)'\|_{\cP}+\frac{q}{Q}M_{l,i}=\frac{q}{Q}\|(g_{l}^i)'\|_{\tP}\le q,$$
hence $(\tilde{g}_{l}^i)'\in (G_{l}^q)'$. Therefore we have $\frac{q}{Q}(G_{l}^Q)'\subset (G_{l}^q)'$. Similarly we can obtain $\frac{Q}{q}(G_{l}^q)'\subset (G_{l}^Q)'$. As a result, $(G_l^{q})'=\frac{q}{Q} (G_l^{Q})'$. 
\end{proof}
\begin{comment}
The proof is the same as ``$\|g_l^i\|_{P}$'' case, only to notice that: 

(i) for any $l'>l$, if we take the parameters $V,W_1,U_1,\cdots,W_{l-1},U_{l-1},W_l,U_l,\cdots,W_{l'-1},U_{l'-1},W_{l'}^{i,:}$ to be $W_i=U_i=0$ for $i=l,l+1,\cdots,l'-1$ and $W_{l'}^{i,:}=W_{l}^{i,:}$, we then have  
\begin{align*}
M_{l',i}=c|W_{l'}^{i,:}|\sum_{k=1}^{l'-1}|U_k|(M_k+\bm{1}_m)=|W_{l}^{i,:}|\sum_{k=1}^{l-1}|U_k|(cM_k+\bm{1}_m)=M_{l,i};
\end{align*}

(ii) Both the two parts of $\|g_l^i\|_{\tilde{P}}$ ($\|g_l^i\|_{P}$ and $M_{l,i}$) have an output parameter $W_{l}^{i,:}$, meaning that the scaling process can be done on the row vector $W_{l}^{i,:}$.
\end{comment}

The following simple lemma shows the Lipschitz continuity of the activation function $\sigma(\cdot)$.
\begin{lemma}\label{ActLip}
Assume that the activation function $\sigma(\cdot)$ satisfies the conditions in Theorem $\ref{1DApp}$. Then $\sigma(\cdot)$ is a Lipschitz continuous function on $\bR^m$ with the Lipschitz constant $L_{\sigma}$ satisfying $L_{\sigma}\le \gamma(\sigma)+\min\{|\sigma'(+\infty)|, |\sigma'(-\infty)|\}$.
\end{lemma}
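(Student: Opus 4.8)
The plan is to reduce the vector statement to the scalar one and then bound the scalar Lipschitz constant by the $L^1$ mass of $\sigma''$ plus the value of $\sigma'$ at one of the ends. First I would note that $\sigma$ acts coordinatewise on $\bR^m$, so it suffices to prove that $\sigma:\bR\to\bR$ is $L_\sigma$-Lipschitz with $L_\sigma\le\gamma(\sigma)+\min\{|\sigma'(+\infty)|,|\sigma'(-\infty)|\}$; once this is known, for any $\bu,\bv\in\bR^m$ we get $|\sigma(u_i)-\sigma(v_i)|\le L_\sigma|u_i-v_i|$ for each coordinate $i$, and hence $\|\sigma(\bu)-\sigma(\bv)\|\le L_\sigma\|\bu-\bv\|$ in any $\ell_p$ norm.

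For the scalar bound, the hypotheses of Theorem \ref{1DApp} give $\gamma_0(\sigma)=\int_\bR|\sigma''(x)|(|x|+1)\,dx<+\infty$, so in particular $\int_\bR|\sigma''(x)|\,dx\le\gamma_0(\sigma)<+\infty$; moreover, by (the first part of the proof of) Lemma \ref{Asymptote}, the limits $\sigma'(\pm\infty)$ exist and are finite. Applying the weak Newton--Leibniz formula (Corollary \ref{cly:2}) on $[x,b]$ and letting $b\to+\infty$ — legitimate because $\sigma'(b)\to\sigma'(+\infty)$ and $\int_x^b|\sigma''|\to\int_x^{+\infty}|\sigma''|$ by monotone convergence — yields $\sigma'(x)=\sigma'(+\infty)-\int_x^{+\infty}\sigma''(y)\,dy$, whence
\begin{equation*}
|\sigma'(x)|\le|\sigma'(+\infty)|+\int_x^{+\infty}|\sigma''(y)|\,dy\le|\sigma'(+\infty)|+\gamma_0(\sigma),
\end{equation*}
and symmetrically $|\sigma'(x)|\le|\sigma'(-\infty)|+\gamma_0(\sigma)$ for every $x\in\bR$. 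Taking the minimum gives $\sup_{x\in\bR}|\sigma'(x)|\le\gamma_0(\sigma)+\min\{|\sigma'(+\infty)|,|\sigma'(-\infty)|\}$. Since $\sigma$ is weakly differentiable it is locally absolutely continuous (Theorem \ref{WD-AC}), so $|\sigma(b)-\sigma(a)|=|\int_a^b\sigma'(x)\,dx|\le(\sup_x|\sigma'(x)|)(b-a)$ for all $a<b$; combining this with the previous display and with $\gamma_0(\sigma)\le\gamma(\sigma)$ (which holds because $\gamma(\sigma)-\gamma_0(\sigma)=\inf_x g(x)\ge0$ by \eqref{g01}--\eqref{gam}) shows that $\sigma$ is Lipschitz with constant $L_\sigma\le\gamma_0(\sigma)+\min\{|\sigma'(+\infty)|,|\sigma'(-\infty)|\}\le\gamma(\sigma)+\min\{|\sigma'(+\infty)|,|\sigma'(-\infty)|\}$, as claimed.

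The only step needing care is the passage to the limit $b\to+\infty$ in the Newton--Leibniz identity on the half-line, since Corollary \ref{cly:2} is stated for compact intervals; but this is exactly the manipulation already performed in the proof of Lemma \ref{Asymptote} in Appendix \ref{sec:prf-Asymptote}, so no new obstacle appears. Everything else — the coordinatewise reduction and the bound $|\sigma(b)-\sigma(a)|\le\sup|\sigma'|\cdot(b-a)$ — is routine.
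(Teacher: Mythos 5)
Your proof is correct and follows essentially the same route as the paper's: both establish the pointwise bound $|\sigma'(x)|\le\gamma_0(\sigma)+\min\{|\sigma'(+\infty)|,|\sigma'(-\infty)|\}$ by writing $\sigma'(x)-\sigma'(\pm\infty)$ as an integral of $\sigma''$ via the weak Newton--Leibniz formula and bounding it by $\int_\bR|\sigma''|\le\gamma_0(\sigma)\le\gamma(\sigma)$. Your version is slightly more complete in that it spells out the coordinatewise reduction and the passage from a bounded weak derivative to a Lipschitz estimate, which the paper leaves implicit.
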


\begin{proof}
Lemma $\ref{Asymptote}$ shows that $\sigma'(+\infty)$ and $\sigma'(-\infty)$ exist, therefore, by Corollary $\ref{cly:2}$, we have 
\begin{align*}
|\sigma'(x)-\sigma'(+\infty)| &
= \left|\int_{x}^{+\infty}\sigma''(t)dt\right|
\le \int_{\bR} |{\sigma}''(t)|dt\le \gamma_0(\sigma) \le \gamma(\sigma),\\
|\sigma'(x)-\sigma'(-\infty)| &
= \left|\int^{x}_{-\infty}\sigma''(t)dt\right|
\le \int_{\bR} |{\sigma}''(t)|dt\le \gamma_0(\sigma) \le \gamma(\sigma).
\end{align*}
Hence $|\sigma'(x)|\leq \gamma(\sigma) + \min\{|\sigma'(+\infty)|,|\sigma'(-\infty)|\}$
\end{proof}

\subsubsection{Proof of Theorem~\ref{RadResNet}}\label{sec:ProofRad}
%\noindent\textbf{3. Proof of Theorem~\ref{RadResNet}}

Based on Lemma \ref{RadLin} and Lemma \ref{RadContraction} in Appendix \ref{sec:BasicRad}, and Appendix \ref{sec:ExtHidNeu}, we can now get down to prove Theorem~\ref{RadResNet}.
\begin{proof}
The crucial step is to inductively estimate the Rademacher complexity of $G_l^Q$. We will show that
\begin{equation}\label{RadG}
\rad_n(G_l^Q)\le Q\sqrt{\frac{2\ln{(2d+2)}}{n}},\ l=1,2,\cdots,L.
\end{equation}
\\ \hspace*{\fill} \\
\noindent\textbf{(1)~The first layer: a standard analysis}

For $l=1$, according to the contraction lemma (Lemma $\ref{RadContraction}$) and the Rademacher complexity of linear functions (Lemma $\ref{RadLin}$), we have
\begin{align*}
n\rad_n(G_1^Q)&=\mathbb{E}_{\bxi}\sup_{g_1\in G_1^Q}\sum_{i=1}^n \xi_ig_1(\tilde{\bx}_i)\\
&=\mathbb{E}_{\bxi}\sup_{g_1\in G_1^Q}\sum_{i=1}^n \xi_i\sigma(\bw_1^T V\tilde{\bx}_i)\\
&\le L_{\sigma}\mathbb{E}_{\bxi}\sup_{g_1\in G_1^Q}\sum_{i=1}^n \xi_i \bw_1^T V\tilde{\bx}_i\\
&= L_{\sigma}\mathbb{E}_{\bxi}\sup_{c\||\bw_1|^T|V|\|_1\le Q}\sum_{i=1}^n \xi_i \||\bw_1|^T |V|\|_1 \hat{\bw}^T\tilde{\bx}_i \qquad (\|\hat{\bw}\|_1=1)\\
& \le L_{\sigma}\mathbb{E}_{\bxi}\sup_{c\||\bw_1|^T|V|\|_1\le Q}\||\bw_1|^T |V|\|_1\sup_{\|\hat{\bw}\|_1\le 1}\sum_{i=1}^n \xi_i\hat{\bw}^T\tilde{\bx}_i\\
& \le \frac{L_{\sigma}}{c}Q\mathbb{E}_{\bxi}\sup_{\|\hat{\bw}\|_1\le 1}\sum_{i=1}^n \xi_i\hat{\bw}^T\tilde{\bx}_i\\
&\le n Q\sqrt{\frac{2\ln{(2d+2)}}{n}},
\end{align*}
as long as $c\ge L_{\sigma}$.
\\ \hspace*{\fill} \\
\noindent\textbf{(2)~The general layer: approximation and decomposition}

Assume the result $(\ref{RadG})$ holds for $1,2,\cdots,l$, the aim is to prove that $(\ref{RadG})$ holds for $l+1$. Consider the following dynamics
\begin{align}\label{ResNetApp}
\begin{split}
\tilde{\bh}_0&=V\tilde{\bx},\\
\tilde{\bg}_i&=\tilde{\bm{\sigma}}_i(W_i\tilde{\bh}_{i-1}),\\
\tilde{\bh}_i&=\tilde{\bh}_{i-1}+U_i\tilde{\bg}_i,\ i=1,2,\cdots,l,\\
\tilde{g}_{l+1}^{s}&=\sigma(W_{l+1}^{s,:}\tilde{\bh}_l),
\end{split}
\end{align}
where $\tilde{\bm{\sigma}}_i:=(\tilde{\sigma}_{i,k})$, $i=1,2,\cdots,l$, $k=1,2,\cdots,m$, are some two-layer ReLU networks which can vary from different neurons at different layers, and the activation is operated in the same indices. According to Theorem $\ref{1DApp}$, the dynamics $(\ref{ResNetApp})$ can be seen as an approximation of ResNet $(\ref{ResNet})$ up to the $(l+1)$-th layer by selecting appropriate $\tilde{\sigma}(\cdot)$. Then consider the decomposition 
\begin{align*}
n\rad_n(G_{l+1}^Q)&=
\mathbb{E}_{\bxi}\sup_{g_{l+1}\in G_{l+1}^Q}
\sum_{i=1}^n \xi_ig_{l+1}(\tilde{\bx}_i)\\
&=\mathbb{E}_{\bxi}\sup_{(1)}
\left\{\sum_{i=1}^n \xi_i[g_{l+1}(\tilde{\bx}_i)-\tilde{g}_{l+1}(\tilde{\bx}_i)]+\sum_{i=1}^n\xi_i\tilde{g}_{l+1}(\tilde{\bx}_i)\right\}\\
&\le\mathbb{E}_{\xi}\sup_{(1)}
\sum_{i=1}^n \xi_i[g_{l+1}(\tilde{\bx}_i)-\tilde{g}_{l+1}(\tilde{\bx}_i)]+\mathbb{E}_{\xi}\sup_{(1)} \sum_{i=1}^n\xi_i\tilde{g}_{l+1}(\tilde{\bx}_i)\\
&=\mathbb{E}_{\xi}\sup_{(1)}
\sum_{i=1}^n \xi_i[g_{l+1}(\tilde{\bx}_i)-\tilde{g}_{l+1}(\tilde{\bx}_i)]+\mathbb{E}_{\xi}\sup_{\tilde{g}_{l+1}\in G_{l+1}^Q} \sum_{i=1}^n\xi_i\tilde{g}_{l+1}(\tilde{\bx}_i):=I_1+I_2,
\end{align*}
where condition $(1)$ is 
$$\|g_{l+1}^i\|_{\cP}+M_{l+1,i}=c\||W_{l+1}^{i,:}|(I+c|U_{l}||W_{l}|)\cdots(I+c|U_1||W_1|)|V|\|_1+M_{l+1,i}\le Q$$ 
according to (\ref{Decompsition}). Since $M_{l+1,i}$ is a function of parameters $U_1,W_2,U_2,\cdots,W_{l},U_{l},W_{l+1}^{i,:}$, just like $\|g_{l+1}^i\|_{\cP}$, the supreme is in fact taken over parameters up to the $(l+1)$-th layer which satisfy the (modified weighted path) norm control. Notice that ${g}_{l+1}$ and $\tilde{g}_{l+1}$ enjoy the same parameters, the last equality holds. 

Next we need to bound $I_1$ and $I_2$ respectively. The basic framework is as follows:
\begin{itemize}
    \item control the ``approximation error'' $I_1$ to be arbitrarily small (i.e. smaller than a pre-selected tolerance error $\epsilon$), by selecting appropriate $\tilde{\sigma}(\cdot)$;
    \item derive a ``uniform'' upper bound for $I_2$ like $(\ref{RadG})$ for any $\epsilon$-approximation dynamics $(\ref{ResNetApp})$, by taking appropriate weight $c_{\sigma}$ (according to different $\epsilon$);
    \item combining the above and letting $\epsilon\rightarrow 0$ yield the desired conclusion.
\end{itemize}

\textbf{(2a) The error term $I_1$}

Fix any $g_{l+1}\in G_{l+1}^Q$, or parameters $\{V,W_1,U_1,\cdots,W_l,U_l,W_{l+1}^{s,:}\}$. Let the dynamics $(\ref{ResNetApp})$ and $(\ref{ResNet})$ evolve from the same data $\bx\in X$, then $\bh_0=\tilde{\bh}_0$. Define $\be_i=\bh_i-\tilde{\bh}_i$, $i=0,1,\cdots,l$. Notice that 
\begin{align}
\bh_i&=\bh_{i-1}+U_i{\sigma}(W_i{\bh}_{i-1}),\label{h}\\
\tilde{\bh}_i&=\tilde{\bh}_{i-1}+U_i\tilde{\bm{\sigma}}_i(W_i\tilde{\bh}_{i-1}),\label{h'}
\end{align}
$(\ref{h})-(\ref{h'})$ gives
\begin{align*}
\be_i&=\be_{i-1}+U_i[{\sigma}(W_i{\bh}_{i-1})-\tilde{\bm{\sigma}}_i(W_i\tilde{\bh}_{i-1})]\\
&=\be_{i-1}+U_i\left[({\sigma}(W_i{\bh}_{i-1})-{\sigma}(W_i\tilde{\bh}_{i-1}))+({\sigma}(W_i\tilde{\bh}_{i-1})-\tilde{\bm{\sigma}}_i(W_i\tilde{\bh}_{i-1}))\right].
\end{align*}
Then we have 
\begin{align}\label{ErrM1}
\|\be_i\|_2&\le\|\be_{i-1}\|_2+\|U_i\|_2\left(\|{\sigma}(W_i{\bh}_{i-1})-{\sigma}(W_i\tilde{\bh}_{i-1})\|_2+\|{\sigma}(W_i\tilde{\bh}_{i-1})-\tilde{\bm{\sigma}}_i(W_i\tilde{\bh}_{i-1})\|_2\right).
\end{align}

Fix $\epsilon\in(0,1)$ arbitrarily. Set 
\begin{align}\label{tol}
\epsilon_{i,k}=\epsilon\left(2^i\sqrt{m}L_{\sigma}\max\{\|W_{l+1}^{s,:}\|_2,1\}\max\{\|U_i\|_2,1\}\prod_{j=i+1}^l(1+L_{\sigma}\|U_j\|_2\|W_j\|_2)\right)^{-1} 
\end{align}
for $i=1,2,\cdots,l$ and $k=1,2,\cdots,m$. According to Theorem $\ref{1DApp}$, we can build a sequence of two-layer ReLU networks $\{\tilde{\sigma}_{i,k}(\cdot)\}$ that satisfies
\begin{align}
&\|\sigma(y)-\tilde{\sigma}_{i,k}(y)\|_\infty\le\epsilon_{i,k},\\
&\|\theta(\tilde{\sigma}_{i,k})\|_{\cP}\le\gamma(\sigma)+\epsilon_{i,k}\le \gamma(\sigma)+\epsilon.\label{2NNPthNrm}
\end{align}
Let $\bm{\epsilon}_i=(\epsilon_{i,1},\epsilon_{i,2},\cdots,\epsilon_{i,m})^T$, recall $\tilde{\bm{\sigma}}_i:=(\tilde{\sigma}_{i,1},\tilde{\sigma}_{i,2},\cdots,\tilde{\sigma}_{i,m})^T$, we have
\begin{align}\label{tolv}
\|\|\sigma\bm{1}_m-\tilde{\bm{\sigma}}_{i}\|_\infty\|_2\le\|\bm{\epsilon}_{i}\|_2=\sqrt{m}\epsilon_{i,k},
\end{align}
where $\|\|\bm{f}(\bx)\|_{\infty}\|_2:=\|(\|(f_1(x_1)\|_{\infty},\|(f_2(x_2)\|_{\infty},\cdots,\|(f_m(x_m)\|_{\infty})\|_2$ for any $\bm{f}: \bR^m\mapsto\bR^m$.

Now we can continue to estimate $(\ref{ErrM1})$:
\begin{align*}
\|\be_l\|_2&\le\|\be_{l-1}\|_2+\|U_l\|_2\left(L_{\sigma}\|W_l{\bh}_{l-1}-W_l\tilde{\bh}_{l-1}\|_2+\|\|{\sigma}\bm{1}_m-\tilde{\bm{\sigma}}_{l}\|_{\infty}\|_2\right)\\
&\le \|\be_{l-1}\|_2+\|U_l\|_2\left(L_{\sigma}\|W_l\|_2\|{\be}_{l-1}\|_2+\|\bm{\epsilon}_{l}\|_2\right)\\
&=\|\be_{l-1}\|_2(1+L_{\sigma}\|U_l\|_2\|W_l\|_2)+\|U_{l}\|_2\|\bm{\epsilon}_{l}\|_2.
\end{align*}
Using this inequality recursively, we finally obtain that
\begin{align*}
\|\be_l\|_2&\le\prod_{i=1}^l(1+L_{\sigma}\|U_i\|_2\|W_i\|_2)\|\be_0\|_2+
\sum_{i=1}^l \|U_{i}\|_2\|\bm{\epsilon}_{i}\|_2\prod_{j=i+1}^l(1+L_{\sigma}\|U_j\|_2\|W_j\|_2),
\end{align*}
with the convention that $\prod_{k=i}^j c_k=1$ when $i>j$. Notice that $\be_0=\bh_0-\tilde{\bh}_0=\bm{0}_D$, and the selection of $\bm{\epsilon}_i$ $(\ref{tol})$ and $(\ref{tolv})$, we have
\begin{align*}
\|\be_l\|_2&\le\sum_{i=1}^l\frac{\epsilon}{2^iL_{\sigma}\max\{\|W_{l+1}^{s,:}\|_2,1\}}\le\frac{\epsilon}{L_{\sigma}\max\{\|W_{l+1}^{s,:}\|_2,1\}}.
\end{align*}
Then applying the Cauchy-Schwarz inequality to have
\begin{align*}
|\tilde{g}_{l+1}^{s}-{g}_{l+1}^{s}|=|\sigma(W_{l+1}^{s,:}\tilde{\bh}_l)-\sigma(W_{l+1}^{s,:}{\bh}_l)|\le L_{\sigma}|W_{l+1}^{s,:}\be_l|\le L_{\sigma} \|W_{l+1}^{s,:}\|_2\|\be_l\|_2\le\epsilon,
\end{align*}
therefore  
\begin{align*}
|I_1|\le\mathbb{E}_{\bxi}\sup_{g_{l+1}\in G_{l+1}^Q}
\sum_{i=1}^n |\xi_i||g_{l+1}(\tilde{\bx}_i)-\tilde{g}_{l+1}(\tilde{\bx}_i)|\le n\epsilon.
\end{align*}

\textbf{(2b) The main term $I_2$}

By the contraction lemma (Lemma $\ref{RadContraction}$) and approximation dynamics $(\ref{ResNetApp})$, we have that 
\begin{align}
I_2&=\mathbb{E}_{\bxi}\sup_{\tilde{g}_{l+1}\in G_{l+1}^Q} \sum_{t=1}^n\xi_t\tilde{g}_{l+1}(\tilde{\bx}_t) \nonumber\\
&= \mathbb{E}_{\bxi}\sup_{\tilde{g}_{l+1}\in G_{l+1}^Q} \sum_{t=1}^n\xi_t\sigma(\bw_{l+1}^T\tilde{\bh}_l) \nonumber\\
&\le L_{\sigma}\mathbb{E}_{\bxi}\sup_{\tilde{g}_{l+1}\in G_{l+1}^Q} \sum_{t=1}^n\xi_t\left[\bw_{l+1}^T\left(\sum_{i=1}^l(\tilde{\bh}_i-\tilde{\bh}_{i-1})+\tilde{\bh}_0\right)\right] \label{Activ} \\
&=L_{\sigma}\mathbb{E}_{\bxi}\sup_{\tilde{g}_{l+1}\in G_{l+1}^Q}\left\{ \sum_{t=1}^n\xi_t \bw_{l+1}^T\sum_{i=1}^lU_i\tilde{\bg}_i+\sum_{t=1}^n\xi_t \bw_{l+1}^TV\tilde{\bx}_t\right\} \nonumber\\
&\le L_{\sigma}\mathbb{E}_{\bxi}\sup_{\tilde{g}_{l+1}\in G_{l+1}^Q}\sum_{t=1}^n\xi_t \bw_{l+1}^T\sum_{i=1}^lU_i\tilde{\bg}_i+L_{\sigma}\mathbb{E}_{\bxi}\sup_{\tilde{g}_{l+1}\in G_{l+1}^Q}\sum_{t=1}^n\xi_t \bw_{l+1}^TV\tilde{\bx}_t:=L_{\sigma}I_{2,1}+L_{\sigma}I_{2,2}. \nonumber
\end{align}

\textbf{(2b-i)~Bound for $I_{2,2}$}

The analysis is the same as the case $l=1$ (see Appendix \ref{sec:ProofRad} \textbf{(1)}), where the Rademacher complexity of linear functions (Lemma $\ref{RadLin}$) is used, since $\tilde{g}_{l+1}\in G_{l+1}^Q$ implies that $Q\ge \|\tilde{g}_{l+1}\|_{\tP}\ge \|\tilde{g}_{l+1}\|_{\cP}\ge c\||\bw_{l+1}|^T|V|\|_1$:
\begin{align*}
I_{2,2}&=\mathbb{E}_{\bxi}\sup_{\tilde{g}_{l+1}\in G_{l+1}^Q}\||\bw_{l+1}|^T|V|\|_1\sum_{t=1}^n\xi_t \hat{\bw}_{l+1}^T\tilde{\bx}_t \qquad (\|\hat{\bw}_{l+1}\|_1=1)\\
&\le\mathbb{E}_{\bxi}\sup_{\tilde{g}_{l+1}\in G_{l+1}^Q}\||\bw_{l+1}|^T|V|\|_1\sup_{\|\hat{\bw}\|_1\le 1}\sum_{t=1}^n\xi_t \hat{\bw}^T\tilde{\bx}_t \\
&\le\frac{Q}{c}\mathbb{E}_{\bxi}\sup_{\|\hat{\bw}\|_1\le 1}\sum_{t=1}^n\xi_t \hat{\bw}^T\tilde{\bx}_t\\
&\le n \frac{Q}{c}\sqrt{\frac{2\ln{(2d+2)}}{n}}.
\end{align*}

\textbf{(2b-ii)~Bound for $I_{2,1}$ }

We firstly have
\begin{align*}
I_{2,1}&=\mathbb{E}_{\bxi}\sup_{\tilde{g}_{l+1}\in G_{l+1}^Q}\sum_{t=1}^n\xi_t \bw_{l+1}^T\sum_{i=1}^l\sum_{k=1}^mU_i^{:,k}\tilde{g}_i^k(\tilde{\bx}_t)\\
&=\mathbb{E}_{\bxi}\sup_{\tilde{g}_{l+1}\in G_{l+1}^Q}\sum_{i=1}^l\sum_{k=1}^m \bw_{l+1}^TU_i^{:,k}\sum_{t=1}^n\xi_t\tilde{g}_i^k(\tilde{\bx}_t).
\end{align*}
Now we need to show the normalization factor $\|\tilde{g}_i^k\|_{\tP}$, in order to eliminate the ``sup'' in the above expression with Lemma $\ref{GPathRel}$ and Theorem $\ref{thm: recursive_path_resHid}$. For any $i=1,2,\cdots,l$, $k=1,2,\cdots,m$, according to Lemma $\ref{RelG2Sca}$ and the positive homogeneity property of $\sigma_R(\cdot)$, we have
\begin{align*}
\tilde{g}_i^k(\tilde{\bx})&=\tilde{\sigma}(W_{i}^{k,:}\tilde{\bh}_{i-1})\\
&=\sum_{j=1}^J\alpha_j\sigma_R(\beta_jW_{i}^{k,:}\tilde{\bh}_{i-1}+\gamma_j)
\\&=\sum_{j=1}^J\alpha_j\sigma_R(|\beta_j|\|\tilde{g}_i^k\|_{\tP}\hat{\beta}_j\hat{W}_{i}^{k,:}\tilde{\bh}_{i-1}+\gamma_j) \qquad (|\hat{\beta}_j|=\|\hat{W}_{i}^{k,:}\tilde{\bh}_{i-1}\|_{\tP}=1)
\\&=\sum_{j=1}^J\alpha_j\left(|\beta_j|\|\tilde{g}_i^k\|_{\tP}+|\gamma_j|\right)\sigma_R(\hat{\beta}_j\hat{W}_{i}^{k,:}\tilde{\bh}_{i-1}+\hat{\gamma}_j). \qquad (|\hat{\beta}_j|+|\hat{\gamma}_j|=\|\hat{W}_{i}^{k,:}\tilde{\bh}_{i-1}\|_{\tP}=1)
\end{align*}
Here we simplify the notation by denoting $J:=J(i,k)$ and $\{\alpha_j,\beta_j,\gamma_j\}:=\{\alpha_j^{(i,k)},\beta_j^{(i,k)},\gamma_j^{(i,k)}\}$, only to be careful with the order of summation, i.e. keeping the indices $i,k$ before $j$ all the time. Therefore 
\begin{align*}
I_{2,1}&=\mathbb{E}_{\bxi}\sup_{\tilde{g}_{l+1}\in G_{l+1}^Q}\sum_{i=1}^l\sum_{k=1}^m \bw_{l+1}^TU_i^{:,k}\sum_{t=1}^n\xi_t\sum_{j=1}^J\alpha_j\left(|\beta_j|\|\tilde{g}_i^k\|_{\tP}+|\gamma_j|\right)\sigma_R(\hat{\beta}_j\hat{W}_{i}^{k,:}\tilde{\bh}_{i-1}+\hat{\gamma}_j)\\
&=\mathbb{E}_{\bxi}\sup_{\tilde{g}_{l+1}\in G_{l+1}^Q}\sum_{i=1}^l\sum_{k=1}^m \bw_{l+1}^TU_i^{:,k}\sum_{j=1}^J\alpha_j\left(|\beta_j|\|\tilde{g}_i^k\|_{\tP}+|\gamma_j|\right)\sum_{t=1}^n\xi_t\sigma_R(\hat{\beta}_j\hat{W}_{i}^{k,:}\tilde{\bh}_{i-1}+\hat{\gamma}_j)\\
&\le \mathbb{E}_{\bxi}\sup_{\tilde{g}_{l+1}\in G_{l+1}^Q}\sum_{i=1}^l\sum_{k=1}^m |\bw_{l+1}|^T|U_i^{:,k}|\sum_{j=1}^J|\alpha_j|\left(|\beta_j|\|\tilde{g}_i^k\|_{\tP}+|\gamma_j|\right)\left|\sum_{t=1}^n\xi_t\sigma_R(\hat{\beta}_j\hat{W}_{i}^{k,:}\tilde{\bh}_{i-1}+\hat{\gamma}_j)\right|\\
&\le \mathbb{E}_{\bxi}\sup_{\tilde{g}_{l+1}\in G_{l+1}^Q}\sum_{i=1}^l\sum_{k=1}^m |\bw_{l+1}|^T|U_i^{:,k}|\sum_{j=1}^J|\alpha_j|\left(|\beta_j|\|\tilde{g}_i^k\|_{\tP}+|\gamma_j|\right)\sup_{\substack{\|\hat{\bw}_{i}^{T}\tilde{\bh}_{i-1}\|_{\tP}\le 1\\|\hat{\beta}_j|+|\hat{\gamma}_j|\le 1}}\left|\sum_{t=1}^n\xi_t\sigma_R(\hat{\beta}_j\hat{\bw}_{i}^{T}\tilde{\bh}_{i-1}+\hat{\gamma}_j)\right|\\
&\le \mathbb{E}_{\bxi}\sup_{\tilde{g}_{l+1}\in G_{l+1}^Q}\sum_{i=1}^l\sum_{k=1}^m |\bw_{l+1}|^T|U_i^{:,k}|\sum_{j=1}^J|\alpha_j|\left(|\beta_j|\|\tilde{g}_i^k\|_{\tP}+|\gamma_j|\right)\sup_{\substack{\|\hat{\bw}_{l}^{T}\tilde{\bh}_{l-1}\|_{\tP}\le 1\\|\hat{\beta}|+|\hat{\gamma}|\le 1}}\left|\sum_{t=1}^n\xi_t\sigma_R(\hat{\beta}\hat{\bw}_{l}^{T}\tilde{\bh}_{l-1}+\hat{\gamma})\right|,
\end{align*}
where the last inequality is due to Lemma $\ref{RelG1}$. 

Write $\Xi_{l,n}(\bxi)=\sup\limits_{\substack{\|\hat{\bw}_{l}^{T}\tilde{\bh}_{l-1}\|_{\tP}\le 1\\|\hat{\beta}|+|\hat{\gamma}|\le 1}}\left|\sum\limits_{t=1}^n\xi_t\sigma_R(\hat{\beta}\hat{\bw}_{l}^{T}\tilde{\bh}_{l-1}+\hat{\gamma})\right|\ge 0$, we have 
\begin{align*}
I_{2,1}&\le \mathbb{E}_{\bxi}\sup_{\tilde{g}_{l+1}\in G_{l+1}^Q}\sum_{i=1}^l\sum_{k=1}^m |\bw_{l+1}|^T|U_i^{:,k}|\sum_{j=1}^J|\alpha_j|\left(|\beta_j|\|\tilde{g}_i^k\|_{\tP}+|\gamma_j|\right)\Xi_{l,n}(\bxi)\\
&\overset{(a)}{=}\mathbb{E}_{\bxi}\Xi_{l,n}(\bxi)\sup_{\tilde{g}_{l+1}\in G_{l+1}^Q}\sum_{i=1}^l\sum_{k=1}^m |\bw_{l+1}|^T|U_i^{:,k}|\sum_{j=1}^J|\alpha_j|\left[|\beta_j|(\|\tilde{g}_i^k\|_{\cP}+M_{i,k})+|\gamma_j|\right]\\
&\overset{(b)}{\le}\mathbb{E}_{\bxi}\Xi_{l,n}(\bxi)\sup_{\tilde{g}_{l+1}\in G_{l+1}^Q}\sum_{i=1}^l\sum_{k=1}^m |\bw_{l+1}|^T|U_i^{:,k}|(\gamma(\sigma)+\epsilon)(\|\tilde{g}_i^k\|_{\cP}+M_{i,k}+1)\\
&=(\gamma(\sigma)+\epsilon)\mathbb{E}_{\bxi}\Xi_{l,n}(\bxi)\sup_{\tilde{g}_{l+1}\in G_{l+1}^Q}\left\{\sum_{i=1}^l\sum_{k=1}^m (|\bw_{l+1}|^T|U_i^{:,k}|)\|\tilde{g}_i^k\|_{\cP}+\sum_{i=1}^l\sum_{k=1}^m |\bw_{l+1}|^T|U_i^{:,k}|(M_{i,k}+1)\right\}\\
&\overset{(c)}{\le}(\gamma(\sigma)+\epsilon)\mathbb{E}_{\bxi}\Xi_{l,n}(\bxi)\sup_{\tilde{g}_{l+1}\in G_{l+1}^Q}\left\{\frac{1}{c}\|\tilde{g}_{l+1}\|_{\cP}+|\bw_{l+1}|^T\sum_{i=1}^l |U_i|(\bM_{i}+\bm{1}_m)\right\}\\
&\overset{(d)}{=}\frac{(\gamma(\sigma)+\epsilon)}{c}\mathbb{E}_{\bxi}\Xi_{l,n}(\bxi)\sup_{\tilde{g}_{l+1}\in G_{l+1}^Q}\left\{\|\tilde{g}_{l+1}\|_{\cP}+M_{l+1,:}\right\}\\
&\overset{(e)}{=}\frac{(\gamma(\sigma)+\epsilon)}{c}\mathbb{E}_{\bxi}\Xi_{l,n}(\bxi)\sup_{\tilde{g}_{l+1}\in G_{l+1}^Q}\|\tilde{g}_{l+1}\|_{\tP}\\
&\le \frac{Q(\gamma(\sigma)+\epsilon)}{c}\mathbb{E}_{\bxi}\Xi_{l,n}(\bxi),
\end{align*}
where $(a)$ and $(e)$ come from the definition of $\|\tilde{g}_{l+1}\|_{\tP}$ (see $(\ref{Decompsition})$ in Theorem $\ref{thm: recursive_path_resHid}$), and $(b)$ is due to the bounded path norm of the two-layer ReLU network $\tilde{\sigma}$ (see $(\ref{2NNPthNrm})$ and $(\ref{CmPthNrm2NN})$), and $(c)$ uses (\ref{PathNormRel2}) in Lemma $\ref{GPathRel}$, and $(d)$ comes from the recursive definition of the modification vector (see $(\ref{PathResMHidi})$). 

\textbf{(2b-iii)~Bound for $\mathbb{E}_{\xi}\Xi_{l,n}(\bxi)$}

The last task is to bound $\mathbb{E}_{\xi}\Xi_{l,n}(\bxi)$. Since $0$ is in the set over which taking supreme, we have
\begin{align*}
\sup_{\substack{\|\hat{\bw}_{l}^{T}\tilde{\bh}_{l-1}\|_{\tP}\le 1\\|\hat{\beta}|+|\hat{\gamma}|\le 1}}\sum_{t=1}^n\xi_t\sigma_R(\hat{\beta}\hat{\bw}_{l}^{T}\tilde{\bh}_{l-1}+\hat{\gamma})\ge 0
\end{align*}
for any $\{\xi_t\}_{t=1}^n\in\{\pm 1\}^n$. Hence
\begin{align*}
&\sup_{\substack{\|\hat{\bw}_{l}^{T}\tilde{\bh}_{l-1}\|_{\tP}\le 1\\|\hat{\beta}|+|\hat{\gamma}|\le 1}}\left|\sum_{t=1}^n\xi_t\sigma_R(\hat{\beta}\hat{\bw}_{l}^{T}\tilde{\bh}_{l-1}+\hat{\gamma})\right|\\
\le \ & \max \left\{\sup_{\substack{\|\hat{\bw}_{l}^{T}\tilde{\bh}_{l-1}\|_{\tP}\le 1\\|\hat{\beta}|+|\hat{\gamma}|\le 1}}\sum_{t=1}^n\xi_t\sigma_R(\hat{\beta}\hat{\bw}_{l}^{T}\tilde{\bh}_{l-1}+\hat{\gamma}),\sup_{\substack{\|\hat{\bw}_{l}^{T}\tilde{\bh}_{l-1}\|_{\tP}\le 1\\|\hat{\beta}|+|\hat{\gamma}|\le 1}}\sum_{t=1}^n(-\xi_t)\sigma_R(\hat{\beta}\hat{\bw}_{l}^{T}\tilde{\bh}_{l-1}+\hat{\gamma})\right\}\\
\le \ &\sup_{\substack{\|\hat{\bw}_{l}^{T}\tilde{\bh}_{l-1}\|_{\tP}\le 1\\|\hat{\beta}|+|\hat{\gamma}|\le 1}}\sum_{t=1}^n\xi_t\sigma_R(\hat{\beta}\hat{\bw}_{l}^{T}\tilde{\bh}_{l-1}+\hat{\gamma})+\sup_{\substack{\|\hat{\bw}_{l}^{T}\tilde{\bh}_{l-1}\|_{\tP}\le 1\\|\hat{\beta}|+|\hat{\gamma}|\le 1}}\sum_{t=1}^n(-\xi_t)\sigma_R(\hat{\beta}\hat{\bw}_{l}^{T}\tilde{\bh}_{l-1}+\hat{\gamma}),
\end{align*}
which gives  
\begin{align*}
\mathbb{E}_{\bxi}\Xi_{l,n}(\bxi)&=\mathbb{E}_{\bxi}\sup_{\substack{\|\hat{\bw}_{l}^{T}\tilde{\bh}_{l-1}\|_{\tP}\le 1\\|\hat{\beta}|+|\hat{\gamma}|\le 1}}\left|\sum_{t=1}^n\xi_t\sigma_R(\hat{\beta}\hat{\bw}_{l}^{T}\tilde{\bh}_{l-1}+\hat{\gamma})\right|\\
&\le \mathbb{E}_{\bxi}\sup_{\substack{\|\hat{\bw}_{l}^{T}\tilde{\bh}_{l-1}\|_{\tP}\le 1\\|\hat{\beta}|+|\hat{\gamma}|\le 1}}\sum_{t=1}^n\xi_t\sigma_R(\hat{\beta}\hat{\bw}_{l}^{T}\tilde{\bh}_{l-1}+\hat{\gamma})+\mathbb{E}_{\xi}\sup_{\substack{\|\hat{\bw}_{l}^{T}\tilde{\bh}_{l-1}\|_{\tP}\le 1\\|\hat{\beta}|+|\hat{\gamma}|\le 1}}\sum_{t=1}^n(-\xi_t)\sigma_R(\hat{\beta}\hat{\bw}_{l}^{T}\tilde{\bh}_{l-1}+\hat{\gamma})\\
&=2\mathbb{E}_{\bxi}\sup_{\substack{\|\hat{\bw}_{l}^{T}\tilde{\bh}_{l-1}\|_{\tP}\le 1\\|\hat{\beta}|+|\hat{\gamma}|\le 1}}\sum_{t=1}^n\xi_t\sigma_R(\hat{\beta}\hat{\bw}_{l}^{T}\tilde{\bh}_{l-1}+\hat{\gamma}).
\end{align*}
Then using the contraction lemma (Lemma $\ref{RadContraction}$) and induction hypothesis to have
\begin{align*}
\mathbb{E}_{\bxi}\Xi_{l,n}(\bxi)&\le 2\mathbb{E}_{\bxi}\sup_{\substack{\|\hat{\bw}_{l}^{T}\tilde{\bh}_{l-1}\|_{\tP}\le 1\\|\hat{\beta}|+|\hat{\gamma}|\le 1}}\sum_{t=1}^n\xi_t(\hat{\beta}\hat{\bw}_{l}^{T}\tilde{\bh}_{l-1}+\hat{\gamma})\\&\le 2\mathbb{E}_{\bxi}\sup_{\substack{\|\hat{\bw}_{l}^{T}\tilde{\bh}_{l-1}\|_{\tP}\le 1\\|\hat{\beta}|+|\hat{\gamma}|\le 1}}\sum_{t=1}^n\xi_t\hat{\beta}\hat{\bw}_{l}^{T}\tilde{\bh}_{l-1}+2\mathbb{E}_{\xi}\sup_{\substack{\|\hat{\bw}_{l}^{T}\tilde{\bh}_{l-1}\|_{\tP}\le 1\\|\hat{\beta}|+|\hat{\gamma}|\le 1}}\sum_{t=1}^n\xi_t\hat{\gamma}\\
&\le 2\mathbb{E}_{\bxi}\sup_{\|\hat{\bw}_{l}^{T}\tilde{\bh}_{l-1}\|_{\tP}\le 1}\sum_{t=1}^n\xi_t\hat{\bw}_{l}^{T}\tilde{\bh}_{l-1}+2\mathbb{E}_{\xi}\sup_{|\hat{\gamma}|\le 1}\sum_{t=1}^n\xi_t\hat{\gamma}\\
&\le 2\mathbb{E}_{\bxi}\sup_{\|\tilde{g}'_{l}\|_{\tP}\le 1}\sum_{t=1}^n\xi_t\tilde{g}'_{l}+2\mathbb{E}_{\bxi}\sup_{\|\hat{\bu}\|_1+|\hat{\gamma}|\le 1}\sum_{t=1}^n\xi_t(\hat{\bu}^T\bx_t+\hat{\gamma})\\
&\overset{(f)}{\le} 2n\frac{1}{L_{\sigma}}\sqrt{\frac{2\ln{(2d+2)}}{n}}+2n\sqrt{\frac{2\ln{(2d+2)}}{n}}\\ 
&=2n\left(1+\frac{1}{L_{\sigma}}\right)\sqrt{\frac{2\ln{(2d+2)}}{n}},
\end{align*}
%\overset{(g)}{\le}
where we assume that $L_{\sigma}\neq 0$.\footnote{The case of $L_{\sigma}=0$ is trivial since it implies $\sigma(\cdot)$ is a constant. Therefore, for $l=1,2,\cdots,L$, $g_l=c\bm{1}_m$ is a single function, whose Rademacher complexity is obviously zero.}
\\ \hspace*{\fill} \\
\noindent\textbf{(3)~Final results}

Combining all above yields 
\begin{align*}
I_{2,1}\le n\frac{2Q(\gamma(\sigma)+\epsilon)}{c}\left(1+\frac{1}{L_{\sigma}}\right) \sqrt{\frac{2\ln{(2d+2)}}{n}},
\end{align*}
and 
\begin{align*}
I_{2}\le L_{\sigma}(I_{2,1}+I_{2,2})&\le
n\left(\frac{2Q(\gamma(\sigma)+\epsilon)(L_{\sigma}+1)}{c}\sqrt{\frac{2\ln{(2d+2)}}{n}}+\frac{L_{\sigma}Q}{c}\sqrt{\frac{2\ln{(2d+2)}}{n}}\right)\\
&= n\cdot\frac{2(\gamma(\sigma)+\epsilon)(L_{\sigma}+1)+L_{\sigma}}{c}\cdot Q\sqrt{\frac{2\ln{(2d+2)}}{n}}.
\end{align*}

Notice that to make the induction hypothesis $(f)$ still holds when $l\leftarrow l+1$, it is necessary to set 
\begin{align*}
c\ge 2(\gamma(\sigma)+\epsilon)(L_{\sigma}+1)+L_{\sigma},
\end{align*}
which gives
\begin{align*}
I_{2}\le nQ\sqrt{\frac{2\ln{(2d+2)}}{n}}.
\end{align*}
Therefore
\begin{align*}
\rad_n(G_{l+1}^Q)\le\frac{1}{n}(I_1+I_2)\le \epsilon+Q\sqrt{\frac{2\ln{(2d+2)}}{n}}.
\end{align*}
Since $\epsilon\in (0,1)$ is arbitrary, taking $\epsilon\rightarrow 0$ gives 
\begin{align*}
\hat{R}_n(G_{l+1}^Q)\le Q\sqrt{\frac{2\ln{(2d+2)}}{n}}.
\end{align*}

In fact, recall the above restriction that $c\ge L_{\sigma}$ when $l=1$ (see Appendix \ref{sec:ProofRad} \textbf{(1)}), we can just take 
\begin{equation*}
c=c_{\sigma}>2\gamma(\sigma)(L_{\sigma}+1)+L_{\sigma}:=\tilde{c}_{\sigma}.
\end{equation*}
That is to say, for any $c>\tilde{c}_{\sigma}$ or $0<c-\tilde{c}_{\sigma}\ll 1$ in principle, it is sufficient to take $\epsilon\le \frac{c-\tilde{c}_{\sigma}}{2(L_{\sigma}+1)}$, in order to have $c\ge2(\gamma(\sigma)+\epsilon)(L_{\sigma}+1)+L_{\sigma}$ and all the proof above holds.

Based on the control for the Rademacher complexity of $G_1^Q,G_2^Q,\cdots,G_L^Q$, for $\mathcal{F}_Q$ ($l=L+1$) we can similarly obtain
\begin{align*}
I_1&\le n\epsilon,\\
I_{2,1}&\le n\cdot 2Q(\gamma(\sigma)+\epsilon)\left(1+\frac{1}{L_{\sigma}}\right) \sqrt{\frac{2\ln{(2d+2)}}{n}},\\
I_{2,2}&\le n\cdot Q\sqrt{\frac{2\ln{(2d+2)}}{n}},\\
I_2&\le I_{2,1}+I_{2,2} = n\cdot \left[2(\gamma(\sigma)+\epsilon)\left(1+\frac{1}{L_{\sigma}}\right)+1\right]\cdot Q\sqrt{\frac{2\ln{(2d+2)}}{n}}.
\end{align*}
Here we still require that the weight parameter $c>\tilde{c}_{\sigma}$, and the tolerance error $\epsilon\le \frac{c-\tilde{c}_{\sigma}}{2(L_{\sigma}+1)}$. Therefore
\begin{align*}
\rad_n(\mathcal{F}_Q)\le\frac{1}{n}(I_1+I_2)\le \epsilon+\left[2(\gamma(\sigma)+\epsilon)\left(1+\frac{1}{L_{\sigma}}\right)+1\right]\cdot Q\sqrt{\frac{2\ln{(2d+2)}}{n}}.
\end{align*}
Again, since $\epsilon\in (0,1)$ is arbitrary, taking $\epsilon\rightarrow 0$ yields
\begin{align*}
\rad_n(\mathcal{F}_Q)\le
\left[2\gamma(\sigma)\left(1+\frac{1}{L_{\sigma}}\right)+1\right]\cdot Q\sqrt{\frac{2\ln{(2d+2)}}{n}}=c_\sigma^*Q\sqrt{\frac{2\ln{(2d+2)}}{n}},
\end{align*}
where $c_\sigma^*:=\tilde{c}_{\sigma}/L_{\sigma}$. The proof is completed.
\end{proof}

\begin{remark}
We certainly have to assume $L_{\sigma}\neq 0$ in the above proof, but this trivial case can be actually handled implicitly. In fact, we can perform a similar analysis on $\tilde{g}_l'$ instead of $\tilde{g}_l$ for $l=1,2,\cdots,L$, which does not need to assume that $L_{\sigma}\neq 0$. All the proof above holds if we replace $L_{\sigma}$ by $1$ (since no activation after the linear transformation is equivalent to the identity mapping)\footnote{Let the induction hypothesis (\ref{RadG}) holds for $(G_l^Q)'$. For the error term $I_1$, the analysis certainly remains the same; for the main term $I_2$, the analysis begins from $(\ref{Activ})/L_{\sigma}$, and also remains the same in the following. Therefore all the proof holds after replacing $L_{\sigma}$ by $1$.}. In this case, we have $c_\sigma^*=\tilde{c}_{\sigma}=4\gamma(\sigma)+1$, and it is still need to take $c=c_{\sigma}>\tilde{c}_{\sigma}$. 
\end{remark}

\subsection{Proof of Lemma~\ref{thm: recursive_path_res}}\label{prf-recursive_path_res}

\begin{proof}
The proof of Lemma \ref{thm: recursive_path_res} is the same as that of Theorem \ref{thm: recursive_path_resHid} (see Remark \ref{rmk:prfmodsame} and Appendix \ref{sec:ExtHidNeu}).
\end{proof}

\subsection{Proof of Theorem~\ref{thm: path_res_control}}\label{prf-path_res_control}

\begin{proof}
The proof comes with induction. We firstly have 
\begin{align*}
\bM_{1}&=\bm{0}_m \le c|W_{1}|\bm{1}_m,\\
\bM_2&=c|W_2||U_1|\bm{1}_m \le c|W_2|(I+|U_1|)\bm{1}_m,
\end{align*}
which implies $(\ref{ModControll})$ for $l=1,2$. Here the ``$\le$'' holds in an entry-wise sense.

Denote $Z_k=(I+|U_{k}|)(I+c|W_{k}|)\cdots(I+|U_2|)(I+c|W_2|)(I+|U_1|)$ for $k\ge 2$, and $Z_1=I+|U_1|$, then the aim is to prove
\begin{align*}
\bM_{l+1}&\le c|W_{l+1}|Z_l\bm{1}_m, 
\end{align*}
using the assumption that $\bM_{k}\le c|W_{k}|Z_{k-1} \bm{1}_m$, $k=1,2,\cdots,l$. 

Notice that $Z_k\ge I$ for all $k\ge 1$, and 
\begin{align}
Z_k&=(I+|U_{k}|)(I+c|W_{k}|)Z_{k-1}\nonumber\\
&\ge Z_{k-1}+c|U_{k}||W_{k}|Z_{k-1}+|U_{k}|Z_{k-1}\nonumber\\
&\ge Z_{k-1}+c|U_{k}||W_{k}|Z_{k-1}+|U_{k}|\label{BndResZ}
\end{align}
for all $k\ge 2$, we have $c|U_{k}||W_{k}|Z_{k-1}\le (Z_k-Z_{k-1})-|U_k|$ for all $k\ge 2$. Therefore, for $l=1,2,\cdots,L-1$, by (\ref{PathResMHid}) and (\ref{BndResZ}), we have
\begin{align*}
\bM_{l+1}&=c|W_{l+1}|\sum_{k=2}^l|U_k|\bM_k+c|W_{l+1}|\sum_{k=1}^l|U_k|\bm{1}_m \qquad (\bM_1=0) \\
&\le c|W_{l+1}|\sum_{k=2}^lc|U_k||W_{k}|Z_{k-1} \bm{1}_m+c|W_{l+1}|\sum_{k=1}^l|U_k|\bm{1}_m
\\&\le c|W_{l+1}|\sum_{k=2}^l[(Z_k-Z_{k-1})-|U_k|] \bm{1}_m+c|W_{l+1}|\sum_{k=1}^l|U_k|\bm{1}_m
\\&=c|W_{l+1}|\left(\sum_{k=2}^l(Z_k-Z_{k-1}) \right)\bm{1}_m+\left(c|W_{l+1}|\sum_{k=1}^l|U_k|\bm{1}_m-c|W_{l+1}|\sum_{k=2}^l|U_k|\bm{1}_m\right)
\\&=c|W_{l+1}|(Z_l-Z_1)\bm{1}_m+c|W_{l+1}||U_1|\bm{1}_m
\\&\le c|W_{l+1}|(Z_l-Z_1)\bm{1}_m+c|W_{l+1}|Z_1\bm{1}_m
\\&=c|W_{l+1}|Z_l\bm{1}_m,
\end{align*}
which gives $(\ref{ModControll})$. The proof of $(\ref{ModControlL})$ is similar. 
\end{proof}

\subsection{Proof of Proposition \ref{pro: approx-resnet}}\label{prf-approx-resnet}

\begin{proof}
According to Theorem $\ref{2-NNApp}$, there exists a two-layer neural network with width $Lm$, such that
\begin{align}
    \mathbb{E}_{\bx}\left[\sum_{k=1}^{Lm}a_k\sigma(\bw_k^T\tilde{\bx})-f^*(\bx)\right]^2&\le \frac{3C_{\sigma}\|f\|^2_{\cB}}{Lm},\label{apperrRes}\\
    \sum_{k=1}^{Lm}|a_k|(\|\bw_k\|_1+1)&\le 2\|f\|_{\cB}.\label{pthnrmRes}
\end{align}
Now we construct a residual network $f(\bx;\tilde{\theta})$ with input dimension $d+1$, depth $L$, width $m$ and $D=d+2$ by selecting the parameters to be 
\begin{align*}
V&=[I_{d+1} \ 0]^T,\quad \bm{\alpha}=e_D=[0,\cdots,0,1]^T,\\
W_l&=\left[\begin{array}{cc}
   \bw_{(l-1)m+1}^T  &  0\\
   \bw_{(l-1)m+2}^T   &  0\\
    \vdots & \vdots\\
    \bw_{lm}^T & 0 \\
\end{array}\right],\quad
U_l=\left[\begin{array}{cccc}
   0  &  0 & \cdots & 0\\
   \vdots  &  \vdots & \ddots & \vdots\\
   0  &  0 & \cdots & 0\\
   a_{(l-1)m+1}  &  a_{(l-1)m+2} & \cdots & a_{lm}\\
\end{array}\right] 
\end{align*}
for $l=1,2,\cdots,L$. It is easy to verify that
\begin{align}\label{2-NNRes}
f(\bx;\tilde{\theta})=\sum_{k=1}^{Lm}a_k\sigma(\bw_k^T\tilde{\bx}),
\end{align}
and the  weighted path norm of residual network $f(\bx;\tilde{\theta})$ 
\begin{align}\label{2-NNResPthNrm}
\|\tilde{\theta}\|_{\cP}=c\sum_{k=1}^{Lm}|a_k|\|\bw_k\|_1,
\end{align}
where $c$ can be taken as $c=c^*_{\sigma}$. Combining $(\ref{apperrRes})$ and $(\ref{2-NNRes})$ yields $(\ref{apperr})$.

We now need to compute the modified weighted path norm $\|\tilde{\theta}\|_{\tP}=\|\tilde{\theta}\|_{\cP}+r$. Notice that $|W_i||U_j|=\bm{0}_{m\times m}$ for all $i,j=1,2,\cdots,L$, by (\ref{PathResMHid}) we have 
\begin{align*}
\bM_{l+1}=c\sum_{k=1}^l|W_{l+1}||U_k|(\bM_k+\bm{1}_m)=\bm{0}_m
\end{align*}
for $l=1,2,\cdots,L-1$. Therefore, by (\ref{PathResM}), 
\begin{align}\label{2-NNResMod}
r=|\bm{\alpha}|^T\sum_{l=1}^L|U_l|\bm{1}_m=\sum_{l=1}^L\||\bm{\alpha}|^T|U_l|\|_1=\sum_{l=1}^L\sum_{i=1}^m|a_{(l-1)m+i}|=\sum_{k=1}^{Lm}|a_k|.
\end{align}
Combining $(\ref{2-NNResPthNrm})$ and $(\ref{2-NNResMod})$ gives 
\begin{align}\label{2-NNResPthNrmMod}
\|\tilde{\theta}\|_{\tP}\le\max\{c,1\}\sum_{k=1}^{Lm}|a_k|(\|\bw_k\|_1+1).
\end{align}
Taking $c=c^*_{\sigma}$, and combining $(\ref{pthnrmRes})$ and $(\ref{2-NNResPthNrmMod})$ yield $(\ref{pthnrm})$. The proof is completed.
\end{proof}

\subsection{Proof of Proposition \ref{pro: apriori-resnet}}\label{prf-apriori-resnet}

\begin{proof}
The proof is almost the same as Theorem \ref{thm: apriori-two-layer} (see Appendix \ref{sec: prf-apriori-two-layer}). To obtain (\ref{ResNetPriorEst}), We just need to take $C'_{\sigma}=4\gamma(\sigma)+1$ by Theorem \ref{RadResNet} in $(\ref{Rad2Res})$, and apply Proposition \ref{pro: approx-resnet} ((\ref{apperr}) and (\ref{pthnrm})) to bound $(\ref{InterRhat})$. The proof is completed. 
\end{proof}

\end{document}